\documentclass{article}

\usepackage{amsmath,amsfonts,amsthm,amssymb,units,fullpage,hyperref,authblk}
\usepackage[round]{natbib}

\def\R{\mathbb{R}}
\def\XS{X_S}
\def\XL{X_L}
\def\DS{{\Delta_S}}
\def\DL{{\Delta_L}}
\def\Da{\Delta_{\operatorname{avg}}}
\def\Dm{\Delta_{\operatorname{mid}}}

\def\hXS{\hat{X}_S}
\def\hXL{\hat{X}_L}

\def\bXS{\bar{X}_S}
\def\bXL{\bar{X}_L}
\def\bO{{\bar{\Omega}}}
\def\bT{{\bar{T}}}
\def\bU{\bar{U}}
\def\bV{\bar{V}}
\def\bk{\bar{k}}
\def\br{\bar{r}}

\def\T{\mathcal{T}}
\def\I{\mathcal{I}}
\def\P{\mathcal{P}}
\newcommand{\ang}[1]{\langle #1 \rangle}
\renewcommand{\v}[1]{{\operatorname{vec}(#1)}}

\DeclareMathOperator{\tr}{tr}
\DeclareMathOperator{\rank}{rank}
\DeclareMathOperator{\range}{range}
\DeclareMathOperator{\supp}{supp}
\DeclareMathOperator{\sign}{sign}
\DeclareMathOperator{\orth}{orth}
\DeclareMathOperator{\clip}{clip}
\newtheorem{theorem}{Theorem}
\newtheorem{lemma}{Lemma}

\newtheorem{proposition}{Proposition}

\theoremstyle{remark}
\newtheorem{remark}{Remark}

\title{Robust Matrix Decomposition with Outliers}

\author[1,2]{Daniel Hsu}
\author[2]{Sham M.~Kakade}
\author[1]{Tong Zhang}
\affil[1]{Department of Statistics, Rutgers University}
\affil[2]{Department of Statistics, Wharton School, University of Pennsylvania}

\begin{document}

\maketitle

\begin{abstract}
Suppose a given observation matrix can be decomposed as the sum of a
low-rank matrix and a sparse matrix (outliers), 
and the goal is to recover these individual components from the observed sum.
Such additive decompositions have applications in a variety of numerical
problems including system identification, latent variable graphical
modeling, and principal components analysis.
We study conditions under which recovering such a decomposition is possible
via a combination of $\ell_1$ norm and trace norm minimization.
We are specifically interested in the question of how many outliers are
allowed so that convex programming can still achieve accurate recovery, and
we obtain stronger recovery guarantees than previous studies.
Moreover, we do not assume that the spatial pattern of outliers is random,
which stands in contrast to related analyses under such assumptions via
matrix completion.
\end{abstract}

\section{Introduction}

This work studies additive decompositions of matrices into sparse (outliers) and
low-rank components.
Such decompositions have found applications in a variety of numerical
problems, including system identification~\citep{CSPW09}, latent variable
graphical modeling~\citep{CPW10}, and principal component
analysis~\citep{CLMW09}.
In these settings, the user has an input matrix $Y \in \R^{m \times n}$
which is believed to be the sum of a sparse matrix $\XS$ and a low-rank
matrix $\XL$.
For instance, in the application to principal component analysis, $\XL$
represents a matrix of $m$ data points from a low-dimensional subspace of
$\R^n$, and is corrupted by a sparse matrix $\XS$ of errors before being
observed as
\[
\begin{array}{ccccc}
Y & = & \XS & + & \XL . \\
&
& \text{\scriptsize{(sparse)}}
&
& \text{\scriptsize{(low-rank)}}
\\
\end{array}
\]
The goal is to recover the original data matrix $\XL$ (and the error
components $\XS$) from the corrupted observations $Y$.
In the latent variable model application of~\citet{CPW10}, $Y$ represents
the precision matrix over visible nodes of a Gaussian graphical model, and
$\XS$ represents the precision matrix over the visible nodes when
conditioned on the hidden nodes.
In general, $Y$ may be dense as a result of dependencies between visible
nodes through the hidden nodes.
However, $\XS$ will be sparse when the visible nodes are mostly independent
after conditioning on the hidden nodes, and the difference $\XL = Y - \XS$
will be low-rank when the number of hidden nodes is small.
The goal is then to infer the relevant dependency structure from just the
visible nodes and measurements of their correlations.

Even if the matrix $Y$ is exactly the sum of a sparse matrix $\XS$ and a
low-rank matrix $\XL$, it may be impossible to identify these components
from the sum.
For instance, the sparse matrix $\XS$ may be low-rank, or the low-rank
matrix $\XL$ may be sparse.
In such cases, these components may be confused for each other, and thus
the desired decomposition of $Y$ may not be identifiable.
Therefore, one must impose conditions on the sparse and low-rank components
in order to guarantee their identifiability from $Y$.

We present sufficient conditions under which $\XS$ and $\XL$ are
identifiable from the sum $Y$.
Essentially, we require that $\XS$ not be too dense in any single row or
column, and that the singular vectors of $\XL$ not be too sparse.
The level of denseness and sparseness are considered jointly in the
conditions in order to obtain the weakest possible conditions.
Under a mild strengthening of the condition, we also show that $\XS$ and
$\XL$ can be recovered by solving certain convex programs, and that the
solution is robust under small perturbations of $Y$.
The first program we consider is
\[ \min \ \lambda \|\XS\|_\v1 + \|\XL\|_* \]
(subject to certain feasibility constraints such as $\|\XS+\XL-Y\| \leq
\epsilon$)
where $\|\cdot\|_\v1$ is the entry-wise $1$-norm and $\|\cdot\|_*$ is the
trace norm.
These norms are natural convex surrogates for the sparsity of $\XS$ and the
rank of $\XL$~\citep{T96,F02}, which are generally intractable to optimize.
We also considered a regularized formulation
\[ \min \ \frac1{2\mu} \|\XS+\XL-Y\|_\v2^2 + \lambda\|\XS\|_\v1 + \|\XL\|_*
\]
where $\|\cdot\|_\v2$ is the Frobenius norm;
such a formulation may be more suitable in certain applications and enjoys
different recovery guarantees.

\subsection{Related work}

Our work closely follows that of~\citet{CSPW09}, who initiated the study
of rank-sparsity incoherence and its application to matrix decompositions.
There, the authors identify parameters that
characterize the incoherence of $\XS$ and $\XL$ sufficient to guarantee
identifiability and recovery using convex programs.
However, their analysis of this characterization yields conditions that are
significantly stronger than those given in the our present work.
For instance, the allowed fraction of non-zero entries in $\XS$ is quickly
vanishing as a function of the matrix size, even under the most favorable
conditions on $\XL$; our analysis does not have this restriction and allows
$\XS$ to have up to $\Omega(mn)$ non-zero entries when $\XL$ is low-rank
and has non-sparse singular vectors.
Therefore, for instance, in the application to principal component
analysis, our analysis allows for up to a constant fraction of the data
matrix entries to be corrupted by noise of arbitrary magnitude, whereas the
analysis of~\citeauthor{CSPW09} requires that it decrease as a function of
the matrix dimensions.
Moreover, \citeauthor{CSPW09} only consider exact decompositions, which may
be unrealistic in certain applications; we allow for approximate
decompositions, and study the effect of perturbations on the accuracy of
the recovered components.

The application to principal component analysis with gross sparse errors
was studied by~\citet{CLMW09}, building on previous results and analysis
techniques for the related matrix completion problem
(\emph{e.g.},~\citealt{CR09,G09}).
The sparse errors model studied by~\citeauthor{CLMW09} requires that the
support of the sparse matrix $\XS$ be random, which can be unrealistic in
some settings.
However, the conditions are significantly weaker than those
of~\citet{CSPW09}: for instance, they allow for $\Omega(mn)$ non-zero
entries in $\XS$.
Our work makes no probabilistic assumption on the sparsity pattern of $\XS$
and instead studies purely deterministic structural conditions.
The price we pay, however, is roughly a factor of $\rank(\XL)$ in what is
allowed for the support size of $\XS$ (relative to the probabilistic
analysis of~\citeauthor{CLMW09}).
Narrowing this gap with alternative deterministic conditions is an
interesting open problem.
Follow-up work to~\citep{CLMW09} studies the robustness of the recovery
procedure~\citep{ZWLCM10}, as well as quantitatively weaker conditions on
$\XS$~\citep{GWLCM10}, but these works are only considered under the random
support model.
Our work is therefore largely complementary to these probabilistic
analyses.

\subsection{Outline}

We describe our main results in Section~\ref{section:results}.
In Section~\ref{section:prelim}, we review a number of technical tools such
as matrix and operator norms that are used to characterize the
rank-sparsity incoherence properties of the desired decomposition.
Section~\ref{section:incoherence} analyses these incoherence properties in
detail, giving sufficient conditions for identifiability as well as for
certifying the (approximate) optimality of a target decomposition for our
optimization formulations.
The main recovery guarantees are proved in Sections~\ref{section:opt1}
and~\ref{section:opt2}.

\section{Main results} \label{section:results}

Fix an observation matrix $Y \in \R^{m \times n}$.
Our goal is to (approximately) decompose the matrix $Y$ into the sum of a
sparse matrix $\XS$ and a low-rank matrix $\XL$.

\subsection{Optimization formulations}

We consider two convex optimization problems over $(\XS,\XL) \in \R^{m
\times n} \times \R^{m \times n}$.
The first is the constrained formulation (parametrized by $\lambda > 0$,
$\epsilon_\v1 \geq 0$, and $\epsilon_* \geq 0$)
\begin{equation} \label{eq:opt1}
\begin{array}{cl}
\min & \lambda \|\XS\|_\v1 + \|\XL\|_* \\
\text{s.t.} & \|\XS+\XL-Y\|_\v1 \leq \epsilon_\v1 \\
& \|\XS+\XL-Y\|_* \leq \epsilon_*
\end{array}
\end{equation}
where $\|\cdot\|_\v1$ is the entry-wise $1$-norm, and $\|\cdot\|_*$ is the
trace norm (\emph{i.e.}, sum of singular values).
The second is the regularized formulation (with regularization parameter
$\mu > 0$)
\begin{equation} \label{eq:opt2}
\begin{array}{cl}
\min & \frac1{2\mu} \|\XS+\XL-Y\|_\v2^2 + \lambda \|\XS\|_\v1 + \|\XL\|_*
\end{array}
\end{equation}
where $\|\cdot\|_\v2$ is the Frobenius norm (entry-wise $2$-norm).

We also consider adding a constraint to control $\|\XL\|_\v\infty$, the
entry-wise $\infty$-norm of $\XL$.
To~\eqref{eq:opt1}, we add the constraint
\[ \|\XL\|_\v\infty \leq b \]
and to~\eqref{eq:opt2}, we add
\[ \|\XS-Y\|_\v\infty \leq b \]
The parameter $b$ is intended as a natural bound for $\XL$ and is typically
known in applications.
For example, in image processing, the values of interest may lie in the
interval $[0,255]$, and hence, we may take $b = 500$ as a relaxation of the
box constraint $[0,255]$.
The core of our analyses do not rely on these additional constraints; we
only consider them to obtain improved robustness guarantees for recovering
$\XL$, which may be important in some applications.

\subsection{Identifiability conditions}

Our first result is a refinement of the \emph{rank-sparsity incoherence}
notion developed by~\citet{CSPW09}.
We characterize a target decomposition of $Y$ into $Y = \bXS + \bXL$ by the
projection operators to subspaces associated with $\bXS$ and $\bXL$.
Let
\[
\bO \ = \
\Omega(\bXS) \ := \
\{ X \in \R^{m \times n} : \supp(X) \subseteq \supp(\bXS) \}
\]
be the space of matrices whose supports are subsets of the support of
$\bXS$, and let $\P_{\bO}$ be the orthogonal projector to $\bO$ under the
inner product $\ang{A,B} = \tr(A^\top B)$, where $\P_\bO(M)$ is given by
\[ [\P_\bO(M)]_{i,j}
= \left\{ \begin{array}{cl}
M_{i,j} & \text{if $(i,j) \in \supp(\bXS)$} \\
0 & \text{otherwise}
\end{array} \right.
\ \forall i \in [m], j \in [n]
.
\]
Furthermore, let
\[
\bT \ = \
T(\bXL) \ := \
\{ X_1 + X_2 \in \R^{m \times n} : \range(X_1) \subseteq
\range(\bXL), \ \range(X_2^\top) \subseteq \range(\bXL^\top) \}
\]
be the span of matrices either in the row-space or column-space of $\bXL$,
and let $\P_{\bT}$ be the orthogonal projector to $\bT$, again, under the
inner product $\ang{A,B} = \tr(A^\top B)$; this is given by
\[ \P_\bT(M) = \bU\bU^\top M + M \bV\bV^\top - \bU\bU^\top M \bV\bV^\top
\]
where $\bU \in \R^{m \times \br}$ and $\bV \in \R^{n \times \br}$ are,
respectively, matrices of left and right orthonormal singular vectors
corresponding to the non-zero singular values of $\bXL$, and $\br$ is the
rank of $\bXL$.
We will see that certain operator norms of $\P_{\bO}$ and $\P_{\bT}$ can be
bounded in terms of structural properties of $\bXS$ and $\bXL$.
The first property measures the maximum number of non-zero entries in any
row or column of $\bXS$:
\begin{equation*}
\alpha(\rho) \ := \ \max\left\{ \rho \|\sign(\bXS)\|_{1\to1}, \
\rho^{-1} \|\sign(\bXS)\|_{\infty\to\infty} \right\}
\end{equation*}
where $\|M\|_{p\to q} := \max \{ \|Mv\|_q : v \in \R^n, \|v\|_p \leq 1 \}$,
\[
\sign(M)_{i,j} = \left\{ \begin{array}{rl}
-1 & \text{if $M_{i,j} < 0$} \\
0 & \text{if $M_{i,j} = 0$} \\
+1 & \text{if $M_{i,j} > 0$}
\end{array} \right.
\quad \forall i \in [m], j \in [n]
\]
and $\rho > 0$ is a balancing parameter to accommodate disparity between
the number of rows and columns; a natural choice for the balancing
parameter is $\rho := \sqrt{n/m}$.
We remark that $\rho$ is only a parameter for the analysis; the
optimization formulations do not directly involve $\rho$.
Note that $\bXS$ may have $\Omega(mn)$ non-zero entries and
$\alpha(\sqrt{n/m}) = O(\sqrt{mn})$ as long as the non-zero entries of
$\bXS$ are spread out over the entire matrix.
Conversely, a sparse matrix with just $O(m + n)$ could have
$\alpha(\sqrt{n/m}) = \sqrt{mn}$ by having all of its non-zero entries in
just a few rows and columns.

The second property measures the sparseness of the singular vectors of
$\bXL$:
\begin{equation*}
\beta(\rho) \ := \
\rho^{-1}\|\bU\bU^\top\|_\v\infty
+ \rho \|\bV\bV^\top\|_\v\infty
+ \|\bU\|_{2\to\infty} \|\bV\|_{2\to\infty}
.
\end{equation*}
For instance, if the singular vectors of $\bXL$ are perfectly aligned with
the coordinate axes, then $\beta(\rho) = \Omega(1)$.
On the other hand, if the left and right singular vectors have entries
bounded by $\sqrt{c/m}$ and $\sqrt{c/n}$, respectively, for some $c \geq
1$, then $\beta(\sqrt{n/m}) \leq 3c\br/\sqrt{mn}$.

Our main identifiability result is the following.
\begin{theorem} \label{theorem:identifiability}
If $\inf_{\rho > 0} \alpha(\rho) \beta(\rho) < 1$, then $\bO \cap \bT
= \{ 0 \}$.
\end{theorem}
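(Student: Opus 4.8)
The plan is to run a rank--sparsity incoherence argument in the entry-wise $\infty$-norm. If $M \in \bO \cap \bT$, then $M \in \bT$ gives $M = \P_\bT(M)$, while $M \in \bO$ means $M$ is supported on $\supp(\bXS)$. So it suffices to establish the single estimate
\[ \|\P_\bT(M)\|_\v\infty \ \le \ \alpha(\rho)\,\beta(\rho)\,\|M\|_\v\infty \qquad \text{for all } M \in \bO \text{ and all } \rho > 0 . \]
Granting this, any $M \in \bO \cap \bT$ satisfies $\|M\|_\v\infty \le \alpha(\rho)\beta(\rho)\|M\|_\v\infty$ for every $\rho$; choosing $\rho$ with $\alpha(\rho)\beta(\rho) < 1$, which exists by hypothesis, forces $\|M\|_\v\infty = 0$, i.e. $M = 0$.

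To prove the displayed estimate I would expand $\P_\bT(M) = \bU\bU^\top M + M\bV\bV^\top - \bU\bU^\top M\bV\bV^\top$ and bound the three terms entry-wise. For the first term, writing $[\bU\bU^\top M]_{i,j} = \sum_{k \,:\, (k,j) \in \supp(\bXS)} [\bU\bU^\top]_{i,k} M_{k,j}$ exhibits it as a sum of at most $\|\sign(\bXS)\|_{1\to1}$ summands (the number of non-zeros in column $j$ of $\bXS$), each of magnitude at most $\|\bU\bU^\top\|_\v\infty\,\|M\|_\v\infty$, hence $\|\bU\bU^\top M\|_\v\infty \le \|\sign(\bXS)\|_{1\to1}\,\|\bU\bU^\top\|_\v\infty\,\|M\|_\v\infty$; symmetrically, counting non-zeros per row, $\|M\bV\bV^\top\|_\v\infty \le \|\sign(\bXS)\|_{\infty\to\infty}\,\|\bV\bV^\top\|_\v\infty\,\|M\|_\v\infty$. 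For the cross term I would use $[\bU\bU^\top M\bV\bV^\top]_{i,j} = (\bU^\top e_i)^\top (\bU^\top M \bV)(\bV^\top e_j)$ together with Cauchy--Schwarz and submultiplicativity of the spectral norm (and $\|\bU^\top\|_{2\to2} = \|\bV\|_{2\to2} = 1$) to get $|[\bU\bU^\top M\bV\bV^\top]_{i,j}| \le \|\bU\|_{2\to\infty}\,\|\bV\|_{2\to\infty}\,\|M\|_{2\to2}$, and then bound the spectral norm of the sparse matrix $M$ via the interpolation inequality $\|M\|_{2\to2} \le \sqrt{\|M\|_{1\to1}\|M\|_{\infty\to\infty}} \le \sqrt{\|\sign(\bXS)\|_{1\to1}\,\|\sign(\bXS)\|_{\infty\to\infty}}\;\|M\|_\v\infty$ (the two inner bounds again being consequences of $M \in \bO$).

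It remains to assemble the three pieces, and this is precisely what the balancing parameter $\rho$ is for: from the definition of $\alpha$ one has $\|\sign(\bXS)\|_{1\to1} \le \rho^{-1}\alpha(\rho)$, $\|\sign(\bXS)\|_{\infty\to\infty} \le \rho\,\alpha(\rho)$, and $\sqrt{\|\sign(\bXS)\|_{1\to1}\,\|\sign(\bXS)\|_{\infty\to\infty}} \le \alpha(\rho)$ (the geometric mean never exceeds the maximum). Substituting and summing,
\[ \|\P_\bT(M)\|_\v\infty \ \le \ \Big( \rho^{-1}\|\bU\bU^\top\|_\v\infty + \rho\,\|\bV\bV^\top\|_\v\infty + \|\bU\|_{2\to\infty}\,\|\bV\|_{2\to\infty} \Big)\, \alpha(\rho)\,\|M\|_\v\infty \ = \ \alpha(\rho)\,\beta(\rho)\,\|M\|_\v\infty , \]
which is the desired estimate.

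I expect the only real subtlety to be the choice of norm pairing: the bound $\|\P_\bT(M)\|_\v\infty \le \alpha(\rho)\beta(\rho)\|M\|_\v\infty$ fails for general $M$, and crucially uses the sparsity of $M \in \bO$ in all three terms --- directly for the two rank-one corrections and, through the $\|\cdot\|_{1\to1}$/$\|\cdot\|_{\infty\to\infty}$ interpolation, for the spectral-norm cross term. Everything else (the final contradiction step, Cauchy--Schwarz, submultiplicativity, and the interpolation inequality, which is a standard Schur-test fact I would record in Section~\ref{section:prelim}) is routine; verifying that the $\rho$-weights built into $\alpha$ and $\beta$ line up exactly so that all three contributions collapse under a common factor $\alpha(\rho)$ is the one place where a little care is needed.
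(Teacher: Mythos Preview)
Your proposal is correct and follows essentially the same route as the paper: both establish that the composed projection $\P_\bT \circ \P_\bO$ is a strict contraction (with constant $\alpha(\rho)\beta(\rho)$) and conclude by applying this to $M \in \bO \cap \bT$. The only cosmetic differences are that the paper packages the intermediate estimates via the hybrid norm $\|\cdot\|_{\sharp(\rho)}$ (Lemmas~\ref{lemma:sparse-bound}--\ref{lemma:composition-bound}) and states the final contradiction in the dual $\|\cdot\|_\v1$ norm (Lemma~\ref{lemma:composition}) rather than in $\|\cdot\|_\v\infty$, but the underlying inequalities are identical to yours.
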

Theorem~\ref{theorem:identifiability} is an immediate consequence of the
following lemma (also given as Lemma~\ref{lemma:dual-composition-bound}).
\begin{lemma} \label{lemma:composition}
For all $M \in \R^{m\times n}$,
$\|\P_{\bO}(\P_{\bT}(M))\|_\v1 \leq \inf_{\rho > 0} \alpha(\rho)
\beta(\rho) \|M\|_\v1$.
\end{lemma}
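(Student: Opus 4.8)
The plan is to fix an arbitrary matrix $M \in \R^{m \times n}$ and an arbitrary balancing parameter $\rho > 0$, prove the inequality $\|\P_{\bO}(\P_{\bT}(M))\|_\v1 \le \alpha(\rho)\,\beta(\rho)\,\|M\|_\v1$, and then take the infimum over $\rho > 0$ (the left-hand side does not depend on $\rho$). Using the explicit formula $\P_{\bT}(M) = \bU\bU^\top M + M\bV\bV^\top - \bU\bU^\top M\bV\bV^\top$ and the triangle inequality, it suffices to bound $\|\P_{\bO}(N)\|_\v1$ for each of $N \in \{\bU\bU^\top M,\; M\bV\bV^\top,\; \bU\bU^\top M\bV\bV^\top\}$; the three bounds will be organized so that, after inserting the appropriate powers of $\rho$ and applying the definition of $\alpha(\rho)$, they are dominated by $\alpha(\rho)$ times the three summands $\rho^{-1}\|\bU\bU^\top\|_\v\infty$, $\rho\|\bV\bV^\top\|_\v\infty$, and $\|\bU\|_{2\to\infty}\|\bV\|_{2\to\infty}$ of $\beta(\rho)$, respectively. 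Two elementary facts will be used throughout: since $\P_{\bO}(N)$ is supported on $\supp(\bXS)$, $\|\P_{\bO}(N)\|_\v1 = \sum_{(i,j)\in\supp(\bXS)}|N_{i,j}|$; and $\|\sign(\bXS)\|_{1\to1}$ (resp.\ $\|\sign(\bXS)\|_{\infty\to\infty}$) is the largest number of nonzero entries in any column (resp.\ row) of $\bXS$.

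For the first term, expand $(\bU\bU^\top M)_{i,j} = \sum_k (\bU\bU^\top)_{i,k}\,M_{k,j}$, pass to absolute values, sum over $(i,j) \in \supp(\bXS)$, and regroup the sum by columns: for a fixed column $j$ there are at most $\|\sign(\bXS)\|_{1\to1}$ indices $i$ with $(i,j) \in \supp(\bXS)$, and every entry of $\bU\bU^\top$ has absolute value at most $\|\bU\bU^\top\|_\v\infty$. This gives
\[
\|\P_{\bO}(\bU\bU^\top M)\|_\v1 \ \le\ \|\bU\bU^\top\|_\v\infty\,\|\sign(\bXS)\|_{1\to1}\,\|M\|_\v1 .
\]
Exchanging the roles of rows and columns (now regrouping by rows, with each row meeting $\supp(\bXS)$ in at most $\|\sign(\bXS)\|_{\infty\to\infty}$ columns) yields symmetrically $\|\P_{\bO}(M\bV\bV^\top)\|_\v1 \le \|\bV\bV^\top\|_\v\infty\,\|\sign(\bXS)\|_{\infty\to\infty}\,\|M\|_\v1$.

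The cross term $\bU\bU^\top M\bV\bV^\top$ is the crux, and a crude estimate fails: bounding each factor by $\|\bU\|_{2\to\infty}$ or $\|\bV\|_{2\to\infty}$ leaves a factor $|\supp(\bXS)|$, which is useless when $\bXS$ has $\Omega(mn)$ nonzero entries. Instead I would route the bound through the spectral norm of the sign pattern. Expanding $(\bU\bU^\top M\bV\bV^\top)_{i,j} = \sum_{k,l}(\bU\bU^\top)_{i,k}\,M_{k,l}\,(\bV\bV^\top)_{l,j}$, passing to absolute values, summing over $(i,j)\in\supp(\bXS)$, and reversing the order of summation, one pulls $\|M\|_\v1 = \sum_{k,l}|M_{k,l}|$ out (bounding the $(k,l)$-dependent coefficient by its maximum) and is left with $\max_{k,l} p_k^\top\,|\sign(\bXS)|\,q_l$, where $p_k \in \R^m$ is the entrywise absolute value of the $k$-th column of $\bU\bU^\top$, $q_l \in \R^n$ is the entrywise absolute value of the $l$-th row of $\bV\bV^\top$, and $|\sign(\bXS)|$ denotes the $0$/$1$ support indicator of $\bXS$. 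By Cauchy--Schwarz, $p_k^\top\,|\sign(\bXS)|\,q_l \le \|p_k\|_2\,\|\,|\sign(\bXS)|\,\|_{2\to2}\,\|q_l\|_2$. The orthonormality of the columns of $\bU$ now enters: since $\|\bU v\|_2 = \|v\|_2$ for all $v \in \R^{\br}$, we have $\|p_k\|_2 = \|\bU\bU^\top e_k\|_2 = \|\bU^\top e_k\|_2 = \|\bU_{k,\cdot}\|_2 \le \|\bU\|_{2\to\infty}$, and likewise $\|q_l\|_2 \le \|\bV\|_{2\to\infty}$ (using that $\|\bU\|_{2\to\infty}$, $\|\bV\|_{2\to\infty}$ are the largest Euclidean row norms of $\bU$, $\bV$). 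Finally, by the standard interpolation inequality $\|A\|_{2\to2} \le \sqrt{\|A\|_{1\to1}\,\|A\|_{\infty\to\infty}}$ applied to $A = |\sign(\bXS)|$ — whose $1\to1$ and $\infty\to\infty$ norms agree with those of $\sign(\bXS)$ — we obtain $\|\,|\sign(\bXS)|\,\|_{2\to2} \le \sqrt{\|\sign(\bXS)\|_{1\to1}\,\|\sign(\bXS)\|_{\infty\to\infty}}$. Altogether,
\[
\|\P_{\bO}(\bU\bU^\top M\bV\bV^\top)\|_\v1 \ \le\ \|\bU\|_{2\to\infty}\,\|\bV\|_{2\to\infty}\,\sqrt{\|\sign(\bXS)\|_{1\to1}\,\|\sign(\bXS)\|_{\infty\to\infty}}\;\|M\|_\v1 .
\]

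To conclude, add the three estimates and balance with $\rho$: $\|\bU\bU^\top\|_\v\infty\,\|\sign(\bXS)\|_{1\to1} = \bigl(\rho^{-1}\|\bU\bU^\top\|_\v\infty\bigr)\bigl(\rho\,\|\sign(\bXS)\|_{1\to1}\bigr) \le \bigl(\rho^{-1}\|\bU\bU^\top\|_\v\infty\bigr)\alpha(\rho)$; symmetrically $\|\bV\bV^\top\|_\v\infty\,\|\sign(\bXS)\|_{\infty\to\infty} \le \bigl(\rho\,\|\bV\bV^\top\|_\v\infty\bigr)\alpha(\rho)$; and, using $\sqrt{xy} \le \max\{x,y\}$ with $x = \rho\,\|\sign(\bXS)\|_{1\to1}$ and $y = \rho^{-1}\|\sign(\bXS)\|_{\infty\to\infty}$, the square root appearing in the third bound is at most $\alpha(\rho)$. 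Summing the three, $\|\P_{\bO}(\P_{\bT}(M))\|_\v1 \le \alpha(\rho)\,\beta(\rho)\,\|M\|_\v1$ for every $\rho > 0$, and the infimum over $\rho$ is the claim. I expect the one genuinely delicate part to be the cross term $\bU\bU^\top M\bV\bV^\top$: specifically, the reduction to $\max_{k,l} p_k^\top\,|\sign(\bXS)|\,q_l$ and the identity $\|p_k\|_2 = \|\bU_{k,\cdot}\|_2$ (where orthonormality of the singular vectors is used), together with the invocation of the $1\to1$/$\infty\to\infty$-to-$2\to2$ interpolation inequality; the remaining steps are routine bookkeeping with the definitions.
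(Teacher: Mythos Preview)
Your proof is correct, but it takes a genuinely different route from the paper's. The paper never bounds $\|\P_{\bO}(\P_{\bT}(M))\|_\v1$ directly. Instead it works in the \emph{dual} direction: it introduces the hybrid norm $\|\cdot\|_{\sharp(\rho)}:=\max\{\rho\|\cdot\|_{1\to1},\,\rho^{-1}\|\cdot\|_{\infty\to\infty}\}$, proves $\|\P_{\bO}\|_{\v\infty\to\sharp(\rho)}\le\alpha(\rho)$ and $\|\P_{\bT}\|_{\sharp(\rho)\to\v\infty}\le\beta(\rho)$ as separate lemmas, composes them to get $\|\P_{\bT}\circ\P_{\bO}\|_{\v\infty\to\v\infty}\le\alpha(\rho)\beta(\rho)$, and then invokes the adjoint identity $(\P_{\bO}\circ\P_{\bT})^*=\P_{\bT}\circ\P_{\bO}$ together with the duality $\v1\leftrightarrow\v\infty$ to pass to $\|\P_{\bO}\circ\P_{\bT}\|_{\v1\to\v1}$. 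In particular, the cross term $\bU\bU^\top M\bV\bV^\top$ is handled there via the bound $\|M\|_{2\to2}\le\|M\|_{\sharp(\rho)}$ (proved by a block-matrix argument), whereas you handle it by the interpolation inequality $\|A\|_{2\to2}\le\sqrt{\|A\|_{1\to1}\|A\|_{\infty\to\infty}}$ applied to the sign pattern; these are essentially dual manifestations of the same estimate, so the proofs share the same core inequality even though the packaging is different. The paper's modular framing pays off later (the operator-norm lemmas and the $\sharp(\rho)$/$\flat(\rho)$ machinery are reused throughout Sections~\ref{section:incoherence}--\ref{section:opt2}), while your argument is more self-contained and makes the entrywise combinatorics explicit, at the cost of not factoring through reusable pieces.
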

\begin{proof}[Proof of Theorem~\ref{theorem:identifiability}]
Take any $M \in \bO \cap \bT$.
By Lemma~\ref{lemma:composition}, $\|\P_{\bO}(\P_{\bT}(M))\|_\v1 \leq \alpha(\rho) \beta(\rho) \|M\|_\v1$.
On the other hand, $\P_{\bO}(\P_{\bT}(M)) = M$, so $\alpha(\rho)\beta(\rho)
< 1$ implies $\|M\|_\v1 = 0$, \emph{i.e.}, $M = 0$.
\end{proof}
Clearly, if $\bO \cap \bT$ contains a matrix other than $0$, then $\{ (\bXS
+ M, \bXL - M) : M \in \bO \cap \bT \}$ gives a family of sparse/low-rank
decompositions of $Y = \bXS + \bXL$ with at least the same sparsity and
rank as $(\bXS,\bXL)$.
Conversely, if $\bO \cap \bT = \{ 0 \}$, then any matrix in the direct sum
$\bO \oplus \bT$ has exactly one decomposition into a matrix $A \in \bO$
plus a matrix $B \in \bT$, and in this sense $(\bXS,\bXL)$ is identifiable.

Note that, as we have argued above, the condition $\inf_{\rho>0}
\alpha(\rho)\beta(\rho) < 1$ may be achieved even by matrices $\bXS$ with
$\Omega(mn)$ non-zero entries, provided that the non-zero entries of $\bXS$
are sufficiently spread out, and that $\bXL$ is low-rank and has singular
vectors far from the coordinate bases.
This is in contrast with the conditions studied by~\citet{CSPW09}.
Their analysis uses a different characterization of $\bXS$ and $\bXL$,
which leads to a stronger identifiability condition in certain cases.
Roughly, if $\bXS$ has an approximately symmetric sparsity pattern (so
$\|\sign(\bXS)\|_{1\to1} \approx \|\sign(\bXS)\|_{\infty\to\infty}$), then
\citeauthor{CSPW09} require $\alpha(1) \sqrt{\beta(1)} < 1$ for square $n
\times n$ matrices.\footnote{\citet{CSPW09} do not explicitly work out the
non-square case, but claim that $n$ can be replaced in their analysis by
the larger matrix dimension $\max\{m,n\}$.  However this does not seem
possible, and the analysis there should only lead to the quite  suboptimal
dimensionality dependency $\min\{m,n\}$.
This is because a rectangular matrix $\bXL$ will have left and right singular
vectors of different dimensions and thus different allowable ranges of
infinity norms.}
Since $\beta(1) = \Omega(1/n)$ for any $\bXL \in \R^{n \times n}$, the
condition implies $\alpha(1)^2 = O(n)$.
Therefore $\bXS$ must have at most $O(n)$ non-zero entries (or else
$\alpha(1)^2$ becomes super-linear).
In other words, the fraction of non-zero entries allowed in $\bXS$ by the
condition $\alpha(1) \sqrt{\beta(1)} < 1$ is quickly vanishing as a
function of $n$.

\subsection{Recovery guarantees}

Our next results are guarantees on (approximately) recovering the
sparse/low-rank decomposition $(\bXS,\bXL)$ from $Y = \bXS + \bXL$ via
solving either convex optimization problems~\eqref{eq:opt1}
or~\eqref{eq:opt2}.
We require a mild strengthening of the condition $\inf_{\rho>0}
\alpha(\rho)\beta(\rho) < 1$, as well as appropriate settings of $\lambda >
0$ and $\mu > 0$ for our recovery guarantees.
Before continuing, we first define another property of $\bXL$:
\[
\gamma := \ \|\bU\bV^\top\|_\v\infty
\]
which is approximately the same as (in fact, bounded above by) the third
term in the definition of $\beta(\rho)$.
The quantities $\alpha(\rho)$, $\beta(\rho)$, and $\gamma$ are central to our analysis. 
Therefore we state the following proposition for reference, which provides a more intuitive understanding of their behavior.
This is the only part in which explicit dimensional dependencies comes into our analysis.
\begin{proposition} \label{proposition:alpha-beta-gamma}
  Let $m_0$ the maximum number of non-zero entries of $\bXS$ per column and
  $n_0$ be the maximum number of non-zero entries of $\bXS$ per row. 
  Let $\bar{r}$ be the  rank of $\bU$ and $\bV$.
  Assume further that $m_0 \leq c_1 m/\bar{r}$ and $n_0 \leq c_1 n/\bar{r}$ for some $c_1 \in (0,1)$, and
  $\|\bU\|_\v\infty \leq \sqrt{c_2/m}$ and $\|\bV\|_\v\infty \leq \sqrt{c_2/n}$ for some $c_2 >0$.
  Then with $\rho=\sqrt{n/m}$, we have
  \[
  \alpha(\rho) \leq \frac{c_1}{\bar{r}}\sqrt{mn} ,
  \qquad \beta(\rho) \leq \frac{3 c_2 \bar{r}}{\sqrt{mn}} , 
  \qquad \gamma \leq  \frac{c_2 \bar{r}}{\sqrt{mn}} .
  \]
\end{proposition}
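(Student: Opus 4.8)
The plan is to establish each of the three bounds separately, reducing everything to two elementary facts about induced matrix norms: for any matrix $M$, $\|M\|_{1\to1}$ is the largest absolute column sum and $\|M\|_{\infty\to\infty}$ the largest absolute row sum of $M$; and for a matrix $W$ with orthonormal columns, $\|W\|_{2\to\infty}$ is the largest Euclidean norm among the rows of $W$. Both are standard (the extremizing $v$ is a signed standard basis vector in the first case and a normalized row in the second) and will presumably already be recorded in Section~\ref{section:prelim}. With these in hand, the proof is pure bookkeeping, substituting $\rho=\sqrt{n/m}$ at the end.

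For $\alpha(\rho)$: the matrix $\sign(\bXS)$ has all entries in $\{-1,0,1\}$, so its largest absolute column sum equals the maximum number of nonzeros in a column of $\bXS$, i.e.\ $\|\sign(\bXS)\|_{1\to1}=m_0$, and likewise $\|\sign(\bXS)\|_{\infty\to\infty}=n_0$. Using the hypotheses $m_0\le c_1 m/\br$ and $n_0\le c_1 n/\br$ and plugging in $\rho=\sqrt{n/m}$, both terms inside the maximum defining $\alpha(\rho)$ become at most $c_1\sqrt{mn}/\br$, which gives the stated bound.

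For $\beta(\rho)$ and $\gamma$: the entrywise hypothesis $\|\bU\|_\v\infty\le\sqrt{c_2/m}$ implies that each row $\bU_{i,:}\in\R^{\br}$ satisfies $\|\bU_{i,:}\|_2^2\le \br\cdot c_2/m$, hence $\|\bU\|_{2\to\infty}\le\sqrt{c_2\br/m}$, and symmetrically $\|\bV\|_{2\to\infty}\le\sqrt{c_2\br/n}$; multiplying gives the third term of $\beta$ and also $\gamma=\|\bU\bV^\top\|_\v\infty\le\|\bU\|_{2\to\infty}\|\bV\|_{2\to\infty}\le c_2\br/\sqrt{mn}$, using $|(\bU\bV^\top)_{ij}|=|\ang{\bU_{i,:},\bV_{j,:}}|\le\|\bU_{i,:}\|_2\|\bV_{j,:}\|_2$ by Cauchy--Schwarz. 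The same Cauchy--Schwarz bound applied to $(\bU\bU^\top)_{ij}=\ang{\bU_{i,:},\bU_{j,:}}$ gives $\|\bU\bU^\top\|_\v\infty\le c_2\br/m$ and likewise $\|\bV\bV^\top\|_\v\infty\le c_2\br/n$. Substituting $\rho=\sqrt{n/m}$, the first two terms of $\beta(\rho)$ each become at most $c_2\br/\sqrt{mn}$, and adding the third term yields $\beta(\rho)\le 3c_2\br/\sqrt{mn}$.

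There is no genuine difficulty here; the proposition is a routine consequence of the definitions. The one place to be careful is the lossy step $\|\bU\|_\v\infty\le\sqrt{c_2/m}\ \Rightarrow\ \|\bU\|_{2\to\infty}\le\sqrt{c_2\br/m}$, where a factor $\sqrt{\br}$ is surrendered; this is exactly the source of the $\br$ appearing in the final bounds and reflects the expected loss relative to the sharper but less transparent quantities $\alpha(\rho),\beta(\rho),\gamma$ themselves. I would also note that the assumption $c_1\in(0,1)$ is not used quantitatively in the three inequalities above (only $m_0\le c_1 m/\br$ and $n_0\le c_1 n/\br$ enter); it is presumably imposed so that the derived bounds combine later to give $\inf_{\rho>0}\alpha(\rho)\beta(\rho)<1$.
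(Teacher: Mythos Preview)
Your proof is correct and is exactly the natural derivation from the definitions; the paper in fact states this proposition without proof (as a ``for reference'' computation), so there is nothing further to compare.
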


We now proceed  with conditions for the regularized formulation~\eqref{eq:opt2}.
Let $E := Y - (\bXS + \bXL)$ and
\begin{align*}
\epsilon_{2\to2} & \ := \ \|E\|_{2\to2} \\
\epsilon_\v\infty & \ := \ \|E\|_{\v\infty} + \|\P_\bT(E)\|_{\v\infty} .
\end{align*}
We require the following, for some $\rho > 0$ and $c > 1$:
\begin{align}
& \alpha(\rho) \beta(\rho) < 1
\label{eq:opt2-cond1} \\
& \lambda \ \leq \
\frac{(1 - \alpha(\rho)\beta(\rho))(1-c \cdot \mu^{-1} \epsilon_{2\to2})
- c \cdot \alpha(\rho) \mu^{-1} \epsilon_\v\infty
- c \cdot \alpha(\rho) \gamma}
{c \cdot \alpha(\rho)}
\label{eq:opt2-cond2} \\
& \lambda \ \geq \ c \cdot \frac{\gamma
+ \mu^{-1} 
 (2 - \alpha(\rho) \beta(\rho)) \epsilon_\v\infty}
{1 - \alpha(\rho) \beta(\rho) - c \cdot \alpha(\rho) \beta(\rho)}
\ > \ 0
.
\label{eq:opt2-cond3}
\end{align}
For instance, if for some $\rho > 0$,
\begin{equation} \label{eq:opt2-simple1}
\alpha(\rho) \gamma \leq \frac{1}{41}
\quad \text{and} \quad
\alpha(\rho) \beta(\rho) \leq \frac{3}{41}
,
\end{equation}
then the conditions are satisfied for $c = 2$ provided that $\mu$ and
$\lambda$ are chosen to satisfy
\begin{equation} \label{eq:opt2-simple2}
\mu \ \geq \ \max\left\{4 \cdot \epsilon_{2\to2}, \ \frac{2}{15} \cdot \frac{\epsilon_\v\infty}{\lambda}
\right\}
\quad \text{and} \quad
\frac{15}{2} \cdot \gamma
\ \leq \ \lambda \ \leq \
\frac{15}{82} \cdot \frac1{\alpha(\rho)}
.
\end{equation}
Note that (\ref{eq:opt2-simple1}) can be satisfied when $c_1 \leq c_2^{-1} /41$ in Proposition~\ref{proposition:alpha-beta-gamma}.

For the constrained formulation~\eqref{eq:opt1}, our analysis requires the
same conditions as above, except with $E$ set to $0$.
Note that our analysis still allows for approximate decompositions; it is
only the conditions that are formulated with $E = 0$.
Specifically, we require for some $\rho > 0$ and $c > 1$:
\begin{align}
& \alpha(\rho) \beta(\rho) < 1
\label{eq:opt1-cond1} \\
& \lambda \ \leq \
\frac{1 - \alpha(\rho)\beta(\rho)
- c \cdot \alpha(\rho) \gamma}
{c \cdot \alpha(\rho)}
\label{eq:opt1-cond2} \\
& \lambda \ \geq \ c \cdot \frac{\gamma}
{1 - \alpha(\rho) \beta(\rho) - c \cdot \alpha(\rho) \beta(\rho)}
\ > \ 0
.
\label{eq:opt1-cond3}
\end{align}
For instance, if for some $\rho > 0$,
\begin{equation} \label{eq:opt1-simple1}
\alpha(\rho) \gamma \leq \frac{1}{15}
\quad \text{and} \quad
\alpha(\rho) \beta(\rho) \leq \frac{1}{5}
,
\end{equation}
then the conditions are satisfied for $c = 2$ provided that
$\lambda$ is chosen to satisfy
\begin{equation} \label{eq:opt1-simple2}
5\gamma
\ \leq \ \lambda \ \leq \
\frac1{3\alpha(\rho)}
.
\end{equation}
Note that (\ref{eq:opt1-simple1}) can be satisfied when $c_1 \leq c_2^{-1} /15$ 
in Proposition~\ref{proposition:alpha-beta-gamma}.

In summary, Proposition~\ref{proposition:alpha-beta-gamma} shows that
our results can be applied even with $m_0=\Omega(m/\bar{r})$ and
$n_0=\Omega(n/\bar{r})$ outliers. In contrast, the results of \citet{CSPW09} only apply
under the condition $\max(m_0,n_0) = O(\sqrt{\min(m,n)/\bar{r}})$, which is significantly stronger.
Moreover, unlike the analysis of~\citet{CLMW09}, we do not have to assume
that $\supp(\bXS)$ is random. 

The following theorem gives our recovery guarantee for the constrained
formulation~\eqref{eq:opt1}.

\begin{theorem} \label{theorem:opt1}
Fix a target pair $(\bXS,\bXL) \in \R^{m \times n} \times \R^{m \times n}$
satisfying $\|Y - (\bXS+\bXL)\|_\v1 \leq \epsilon_\v1$ and $\|Y -
(\bXS+\bXL)\|_* \leq \epsilon_*$.
Assume the conditions~\eqref{eq:opt1-cond1}, \eqref{eq:opt1-cond2}, and
\eqref{eq:opt1-cond3} hold for some $\rho > 0$ and $c > 1$.
Let $(\hXS,\hXL) \in \R^{m \times n}$ be the solution to the convex
optimization problem~\eqref{eq:opt1}.
We have
\begin{multline*}
\max\left\{ \|\hXS-\bXS\|_\v1, \ \|\hXL-\bXL\|_\v1 \right\}
\\
\leq \
\left(
1 +
(1-1/c)^{-1} \cdot
\frac{2-\alpha(\rho)\beta(\rho)}{1-\alpha(\rho)\beta(\rho)}
\right)
\cdot \epsilon_\v1
+ (1-1/c)^{-1} \cdot
\frac{2-\alpha(\rho)\beta(\rho)}{1-\alpha(\rho)\beta(\rho)}
\cdot
\epsilon_* / \lambda
.
\end{multline*}
If, in addition for some $b \geq \|\bXL\|_\v\infty$, either:
\begin{itemize}
\item the optimization problem~\eqref{eq:opt1} is augmented with the
constraint $\|\XL\|_\v\infty \leq b$, or

\item $\hXL$ is post-processed by replacing $[\hXL]_{i,j}$ with
$\min\{ \max\{ [\hXL]_{i,j}, -b\}, b \}$ for all $i,j$,

\end{itemize}
then we also have
\[
\|\hXL-\bXL\|_\v2 \ \leq \ \min\left\{ \|\hXL-\bXL\|_\v1, \ \sqrt{2b
\cdot \|\hXL-\bXL\|_\v1}
\right\}
. \]
\end{theorem}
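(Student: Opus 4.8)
The plan is to prove the recovery bound in two stages: first establish the $\ell_1$-error bounds via a dual-certificate / approximate-optimality argument, then convert the $\ell_1$ bound on $\hXL - \bXL$ into a Frobenius bound using the entry-wise $\infty$-norm control.

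First I would set $\Delta_S := \hXS - \bXS$ and $\Delta_L := \hXL - \bXL$. Since $(\hXS, \hXL)$ is feasible and optimal while $(\bXS, \bXL)$ is merely feasible, we get $\lambda\|\hXS\|_\v1 + \|\hXL\|_* \le \lambda\|\bXS\|_\v1 + \|\bXL\|_*$. The strategy is to lower-bound the left side using convexity of the norms together with a subgradient inequality: there exists a subgradient of $\lambda\|\cdot\|_\v1 + \|\cdot\|_*$ at $(\bXS, \bXL)$ of the form $(\lambda\sign(\bXS) + \lambda\,Q_S,\ \bU\bV^\top + Q_L)$ where $Q_S$ is supported off $\supp(\bXS)$ with $\|Q_S\|_\v\infty \le 1$ and $Q_L \in \bT^\perp$ with $\|Q_L\|_{2\to2}\le 1$. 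The key is to choose the free parts $Q_S, Q_L$ — this is where the incoherence conditions \eqref{eq:opt1-cond1}--\eqref{eq:opt1-cond3} and the setting of $\lambda$ enter — so that the inner product $\langle (\lambda\sign(\bXS) + \lambda Q_S,\ \bU\bV^\top + Q_L),\ (\Delta_S, \Delta_L)\rangle$ is controlled. The standard trick (as in \citet{CSPW09}) is to split each $\Delta$ into its projection onto $\bO$ (resp. $\bT$) and its complement, use $\langle \sign(\bXS), \P_\bO(\Delta_S)\rangle$ and the freedom in $Q_S$ to extract a term proportional to $\|\P_{\bO^\perp}(\Delta_S)\|_\v1$, and similarly for $\Delta_L$; then Lemma~\ref{lemma:composition} (the $\alpha(\rho)\beta(\rho)$ bound on $\P_\bO\P_\bT$) is what lets one absorb the cross-terms coming from the fact that $\Delta_S + \Delta_L$ is only constrained to lie near $0$ (within $\epsilon_\v1$ in $\ell_1$ and $\epsilon_*$ in trace norm), not exactly $0$. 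Combining the optimality inequality with this subgradient lower bound, and using that $\|\Delta_S + \Delta_L\|_\v1 \le 2\epsilon_\v1$ and $\|\Delta_S + \Delta_L\|_* \le 2\epsilon_*$, yields after rearrangement a bound on $\|\P_{\bO^\perp}(\Delta_S)\|_\v1$ and $\|\P_{\bT^\perp}(\Delta_L)\|_\v1$; the factor $(1-1/c)^{-1}$ appears because the conditions on $\lambda$ leave a slack of exactly this multiplicative form. Finally, $\|\Delta_S\|_\v1$ and $\|\Delta_L\|_\v1$ are recovered by adding back the "in-subspace" parts $\P_\bO(\Delta_S)$, $\P_\bT(\Delta_L)$, which are controlled because $\P_\bO(\Delta_S) + \P_\bO(\Delta_L) = \P_\bO(\Delta_S + \Delta_L)$ is small and $\P_\bO(\Delta_L)$ is bounded via $\P_\bO\P_\bT$ again. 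I expect this first stage to be the main obstacle: correctly choosing $Q_S, Q_L$ and bookkeeping the six or so error terms so that the conditions \eqref{eq:opt1-cond2}--\eqref{eq:opt1-cond3} come out exactly as stated is delicate, though it is a known template.

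For the second stage, I would argue as follows. In the augmented case, both $\hXL$ and $\bXL$ satisfy $\|\cdot\|_\v\infty \le b$, so $\|\Delta_L\|_\v\infty \le 2b$ entry-wise; in the post-processing case, clipping $\hXL$ to $[-b,b]$ only moves it closer to $\bXL$ entry-wise (since $\bXL$ already lies in $[-b,b]$), so the $\ell_1$ bound is preserved and again $\|\Delta_L\|_\v\infty \le 2b$. Then $\|\Delta_L\|_\v2^2 = \sum_{i,j} [\Delta_L]_{i,j}^2 \le \|\Delta_L\|_\v\infty \sum_{i,j} |[\Delta_L]_{i,j}| \le 2b\,\|\Delta_L\|_\v1$, giving $\|\Delta_L\|_\v2 \le \sqrt{2b\,\|\Delta_L\|_\v1}$. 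The other bound $\|\Delta_L\|_\v2 \le \|\Delta_L\|_\v1$ is just the elementary inequality $\|\cdot\|_\v2 \le \|\cdot\|_\v1$ valid for any matrix, so taking the minimum gives the claimed estimate. This stage is routine.

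One subtlety to flag: in the post-processing variant one must check that the $\ell_1$ error bound proved in the first stage still applies \emph{after} clipping — this holds because for each entry, $|\clip([\hXL]_{i,j}) - [\bXL]_{i,j}| \le |[\hXL]_{i,j} - [\bXL]_{i,j}|$ whenever $[\bXL]_{i,j} \in [-b,b]$, which is guaranteed by $b \ge \|\bXL\|_\v\infty$. Hence no separate re-derivation is needed; the first displayed bound of the theorem transfers verbatim to the clipped iterate, and then the Frobenius bound follows as above.
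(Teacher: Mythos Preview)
Your overall template---dual certificate, subgradient inequality, then control of the ``in-subspace'' pieces---matches the paper's, and your second stage (the Frobenius bound via $\|\DL\|_\v\infty \leq 2b$ and $\|\DL\|_\v2^2 \leq \|\DL\|_\v\infty\|\DL\|_\v1$) is exactly what the paper does. But the paper introduces one device you do not mention, and without it your first stage does not close.

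The paper does \emph{not} work directly with $(\Delta_S,\Delta_L)$. It sets $\Da := (\Delta_S+\Delta_L)/2$ and $\Dm := (\Delta_S-\Delta_L)/2$. The point of this split is twofold. First, feasibility of both $(\hXS,\hXL)$ and $(\bXS,\bXL)$ immediately gives $\|\Da\|_\v1 \leq \epsilon_\v1$ and $\|\Da\|_* \leq \epsilon_*$. Second---and this is the crucial trick---the dual certificate $Q = Q_\bO + Q_\bT$ from Theorem~\ref{theorem:dual-cert} is a \emph{single} matrix that is simultaneously a subgradient of $\lambda\|\cdot\|_\v1$ at $\bXS$ and of $\|\cdot\|_*$ at $\bXL$. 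Applying Lemma~\ref{lemma:subgradient} at the perturbed point $(\bXS+\Dm,\,\bXL-\Dm)$ makes the inner-product term $\ang{Q,\Dm-\Dm}=0$ vanish exactly, yielding $\lambda\|\P_{\bO^\perp}(\Dm)\|_\v1 + \|\P_{\bT^\perp}(\Dm)\|_* \leq (1-1/c)^{-1}(\lambda\|\Da\|_\v1+\|\Da\|_*)$ cleanly. Then, because $\Dm$ is a \emph{single} matrix, one gets the identity $\P_\bO(\Dm)-\P_\bT(\Dm)=\P_{\bT^\perp}(\Dm)-\P_{\bO^\perp}(\Dm)$, which sets up the block system of Lemma~\ref{lemma:block-inverse} and gives $\|\P_\bO(\Dm)\|_\v1$ in terms of quantities already controlled.

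Your proposal, by contrast, keeps $\Delta_S$ and $\Delta_L$ separate and chooses independent free parts $Q_S,Q_L$. The resulting inner product does not vanish; you propose to bound it via $\|\Delta_S+\Delta_L\|_\v1 \leq 2\epsilon_\v1$, which is fine for the ``outside'' pieces. The gap is in your in-subspace step: you write that $\P_\bO(\Delta_L)$ ``is bounded via $\P_\bO\P_\bT$ again,'' but $\Delta_L \notin \bT$ in general, so $\P_\bO(\Delta_L) = \P_\bO\P_\bT(\Delta_L) + \P_\bO\P_{\bT^\perp}(\Delta_L)$, and the first term is bounded only by $\alpha\beta\|\Delta_L\|_\v1$---precisely the quantity you are trying to control. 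With two separate matrices there is no analogue of the identity above to break this circularity. The symmetric/antisymmetric decomposition is not cosmetic; it is what makes both the vanishing of the cross term and the block-inverse argument work.
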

The proof of Theorem~\ref{theorem:opt1} is in Section~\ref{section:opt1}.
It is clear that if $Y = \bXS + \bXL$, then we can set $\epsilon_\v1 =
\epsilon_* = 0$ and we obtain exact recovery: $\hXS = \bXS$ and $\hXL =
\bXL$.
Moreover, any perturbation $Y - (\bXS + \bXL)$ affects the accuracy of
$(\hXS,\hXL)$ in entry-wise $1$-norm by an amount $O(\epsilon_\v1 +
\epsilon_* / \lambda)$.
Note that here, the parameter $\lambda$ serves to balance the entry-wise
$1$-norm and trace norm of the perturbation in the same way it is used in
the objective function of~\eqref{eq:opt1}.
So, for instance, if we have the simplified
conditions~\eqref{eq:opt1-simple1}, then we may choose $\lambda =
\sqrt{(5/3)\gamma/\alpha(\rho)}$ to satisfy \eqref{eq:opt1-simple2}, upon
which the error bound becomes
\[
\max\left\{ \|\hXS-\bXS\|_\v1, \ \|\hXL-\bXL\|_\v1 \right\}
\ = \
O\left(
\epsilon_\v1 + \sqrt{\frac{\alpha(\rho)}{\gamma}} \cdot \epsilon_*
\right)
.
\]
It is possible to modify the constraints in~\eqref{eq:opt1} to use norms
other than $\|\cdot\|_\v1$ and $\|\cdot\|_*$; the analysis could at the
very least be modified by simply using standard relationships to change
between norms, although this may introduce new slack in the bounds.
Finally, the second part of the theorem shows how the accuracy of $\hXL$
in Frobenius norm can be improved by adding an additional constraint or by
post-processing the solution.

Now we state our recovery guarantees for the regularized
formulation~\eqref{eq:opt2}.
\begin{theorem} \label{theorem:opt2}
Fix a target pair $(\bXS,\bXL) \in \R^{m \times n} \times \R^{m \times n}$.
Let $E := Y - (\bXS + \bXL)$ and
\begin{align*}
\epsilon_{2\to2} & \ := \ \|E\|_{2\to2} \\
\epsilon_\v\infty & \ := \ \|E\|_{\v\infty} + \|\P_\bT(E)\|_{\v\infty} \\
\epsilon_*' & \ := \ \|\P_\bT(E)\|_* .
\end{align*}
Let $\bk := |\supp(\bXS)|$ and $\br := \rank(\bXL)$.
Assume the conditions~\eqref{eq:opt2-cond1}, \eqref{eq:opt2-cond2}, and
\eqref{eq:opt2-cond3} hold for some $\rho > 0$ and $c > 1$.
Let $(\hXS,\hXL) \in \R^{m \times n}$ be the solution to the convex
optimization problem~\eqref{eq:opt2} augmented with the constraint
$\|\XS-Y\|_\v\infty \leq b$ for some $b \geq \|\bXS-Y\|_\v\infty$ ($b =
\infty$ is allowed).
Let
\begin{align*}
\br' \ := \
& \left(\lambda + \mu^{-1} \epsilon_\v\infty \right)
\cdot \frac{2\bk}{1-\alpha(\rho)\beta(\rho)}
\cdot \left( \lambda + \gamma + \mu^{-1} \epsilon_\v\infty \right)
\\
& {}
+ \left( 1 + 2\mu^{-1} \epsilon_{2\to2} \right)
\cdot 2\br
\cdot \left(
\frac{2\alpha(\rho)}{1-\alpha(\rho)\beta(\rho)}
\cdot \left( \lambda + \gamma + \mu^{-1} \epsilon_\v\infty \right)
+ 1 + 2\mu^{-1} \epsilon_{2\to2}
\right)
.
\end{align*}
We have
\begin{align*}
\|\hXS-\bXS\|_\v1
& \ \leq \
\frac
{
\br' \cdot (1-1/c)^{-1}\lambda^{-1} \cdot \mu
+ \lambda \bk \cdot \mu
+ 2\sqrt{\bk\br} \cdot \mu
+ \bk \cdot \epsilon_\v\infty
}
{1-\alpha(\rho)\beta(\rho)}
\\
\|\hXS-\bXS\|_\v2 & \ \leq \ \min\left\{ \|\hXS-\bXS\|_\v1, \
\sqrt{2b \cdot \|\hXS-\bXS\|_\v1}
\right\} \\
\|\hXL-\bXL\|_*
& \ \leq \
\sqrt{2\br} \cdot \|\hXS-\bXS\|_\v2
+ \epsilon_*'
+ \left( \frac{\br' \cdot (1-1/c)^{-1}}{2} + 2\br \right) \cdot \mu
.
\end{align*}
\end{theorem}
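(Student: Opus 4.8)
The plan is to run the approximate dual-certificate argument, adapted to the regularized objective and to the perturbation $E$. First I would invoke the certificate construction from Section~\ref{section:incoherence}: condition~\eqref{eq:opt2-cond1} together with Lemma~\ref{lemma:composition} makes $\P_\bT\P_\bO$ a strict contraction on $\bT$, so a Neumann-series argument produces a matrix $Q$ with $\P_\bO(Q)$ equal to $\lambda\sign(\bXS)$ up to a controlled error and $\P_\bT(Q)$ equal to $\bU\bV^\top$ up to a controlled error, while $\|\P_{\bO^\perp}(Q)\|_\v\infty < \lambda$ and $\|\P_{\bT^\perp}(Q)\|_{2\to2} < 1$ strictly. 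The margin available in these two strict inequalities is exactly what conditions~\eqref{eq:opt2-cond2}--\eqref{eq:opt2-cond3} purchase, the $\mu^{-1}\epsilon_\v\infty$, $\mu^{-1}\epsilon_{2\to2}$, and $\gamma$ terms absorbing the perturbation and the gap between $\gamma$ and the third term of $\beta(\rho)$.

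Second, write $\DS := \hXS-\bXS$, $\DL := \hXL-\bXL$, and $R := \hXS+\hXL-Y = \DS+\DL-E$. Since $\|\bXS-Y\|_\v\infty \le b$, the pair $(\bXS,\bXL)$ is feasible, so optimality gives $\frac1{2\mu}\|R\|_\v2^2 + \lambda\|\hXS\|_\v1 + \|\hXL\|_* \le \frac1{2\mu}\|E\|_\v2^2 + \lambda\|\bXS\|_\v1 + \|\bXL\|_*$. Lower-bounding $\|\hXS\|_\v1$ and $\|\hXL\|_*$ by subgradient inequalities at $(\bXS,\bXL)$ --- taking the free parts of the subgradients to be $\sign(\P_{\bO^\perp}(\DS))$ and a trace-norm-maximizing direction for $\P_{\bT^\perp}(\DL)$, and matching the rest to $Q$ --- while also using the stationarity condition $-\mu^{-1}R \in \partial\|\hXL\|_*$ in the $\XL$ variable (and the analogous inclusion with the box normal cone in the $\XS$ variable), I expect to reach an inequality $\lambda\,\theta_1\,\|\P_{\bO^\perp}(\DS)\|_\v1 + \theta_2\,\|\P_{\bT^\perp}(\DL)\|_* \le (\text{linear in }\|\P_\bO(\DS)\|_\v1 \text{ and }\|\P_\bT(\DL)\|_*)$ with positive $\theta_1,\theta_2$, after bounding the cross term $\mu^{-1}\langle E,\,\DS+\DL\rangle$ by duality of $\|\cdot\|_\v\infty/\|\cdot\|_\v1$ and of $\|\cdot\|_{2\to2}/\|\cdot\|_*$ and using the projection estimates of Section~\ref{section:incoherence} to pass between $\bO$ and $\bT$.

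Third, bound the on-support and on-tangent parts: $\P_\bO(\DS)$ lies in a space of dimension $\bk$ and $\P_\bT(\DL)$ in a space of matrices of rank at most $2\br$, so $\|\P_\bO(\DS)\|_\v1 \le \sqrt{\bk}\,\|\P_\bO(\DS)\|_\v2$ and $\|\P_\bT(\DL)\|_* \le \sqrt{2\br}\,\|\P_\bT(\DL)\|_\v2$, and these Frobenius norms are controlled through $R = \DS+\DL-E$ by $\|R\|_\v2$ (hence, via the quadratic term, by $\mu$ times the optimality bound) together with $\epsilon_{2\to2}$- and $\epsilon_\v\infty$-dependent factors. Carrying out this estimate is where $\br'$ is assembled: it is, up to constants, a bound on an effective dimension of the projection of $R+E$ onto $\bO\oplus\bT$, with the sparse contribution scaling like $\bk$ (weighted by $(\lambda+\mu^{-1}\epsilon_\v\infty)$ and $(\lambda+\gamma+\mu^{-1}\epsilon_\v\infty)$) and the low-rank contribution like $\br$ (weighted by $(1+2\mu^{-1}\epsilon_{2\to2})$). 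Substituting back through the $1-\alpha(\rho)\beta(\rho)$ contraction denominators turns the split inequality of the previous step into the claimed bound on $\|\hXS-\bXS\|_\v1$. The Frobenius bound then follows from $\|\DS\|_\v\infty \le \|\hXS-Y\|_\v\infty + \|Y-\bXS\|_\v\infty \le 2b$ together with $\|\DS\|_\v2^2 \le \|\DS\|_\v\infty\|\DS\|_\v1$; and the trace-norm bound on $\DL$ follows by writing $\DL = R+E-\DS$, splitting $\|\DL\|_* \le \|\P_\bT(\DL)\|_* + \|\P_{\bT^\perp}(\DL)\|_*$, estimating $\|\P_\bT(\DL)\|_* \le \sqrt{2\br}\,\|\DL\|_\v2 + \|\P_\bT(E)\|_*$ (which produces the $\sqrt{2\br}\,\|\DS\|_\v2$ and $\epsilon_*'$ terms) and inserting the bound on $\|\P_{\bT^\perp}(\DL)\|_*$ from the previous step (which produces the $(\br'(1-1/c)^{-1}/2 + 2\br)\mu$ term).

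The main obstacle is the bookkeeping in the second and third steps. Unlike the constrained formulation~\eqref{eq:opt1}, where $\|R\|_\v1$ and $\|R\|_*$ are handed to us as $\epsilon_\v1$ and $\epsilon_*$, here $R$ is only implicitly controlled through the objective, so one must simultaneously use the quadratic term to bound $\|R\|_\v2$, use the $\XL$-stationarity to split $R$ across $\bT$ and $\bT^\perp$, and keep the $\mu^{-1}\epsilon_\v\infty$ and $\mu^{-1}\epsilon_{2\to2}$ factors attached to exactly the right quantities so that, under the simplified regime~\eqref{eq:opt2-simple1}--\eqref{eq:opt2-simple2} with $c=2$, the numerical constants close. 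By comparison, the certificate construction (given the contraction estimate) and the norm conversions are routine.
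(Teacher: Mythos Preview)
Your outline has the right three-stage skeleton (certificate, subgradient/optimality inequality, KKT-based bound on the in-support and in-tangent parts), but the treatment of the cross term $\mu^{-1}\langle E,\DS+\DL\rangle$ in your second step departs from the paper and is where the argument as written would not close.

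The paper does \emph{not} bound that cross term by duality. Instead it builds $\mu^{-1}E$ into the certificate: the Neumann-series construction (Theorem~\ref{theorem:dual-cert}) is arranged so that $Q := Q_\bO + Q_\bT$ satisfies $\P_\bO(Q+\mu^{-1}E) = \lambda\sign(\bXS)$ and $\P_\bT(Q+\mu^{-1}E) = \orth(\bXL)$ \emph{exactly}, and the margin in conditions~\eqref{eq:opt2-cond2}--\eqref{eq:opt2-cond3} is spent on getting $\|\P_{\bO^\perp}(Q+\mu^{-1}E)\|_\v\infty \leq \lambda/c$ and $\|\P_{\bT^\perp}(Q+\mu^{-1}E)\|_{2\to2} \leq 1/c$. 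The subgradient inequality (Lemma~\ref{lemma:subgradient}) is then applied with $Q+\mu^{-1}E$, not $Q$; combined with the optimality inequality, the $\mu^{-1}\langle E,\Delta\rangle$ term cancels identically and one is left with $-\langle Q,\Delta\rangle - \tfrac{1}{2\mu}\|\Delta\|_\v2^2 \leq \tfrac{\mu}{2}\|Q\|_\v2^2$ by completing the square (Lemma~\ref{lemma:opt2-outside}). The right-hand side is a \emph{constant}, not linear in $\|\P_\bO(\DS)\|_\v1$ and $\|\P_\bT(\DL)\|_*$ as you anticipate. Your $\br'$ is exactly the resulting upper bound on $\|Q_\bO+Q_\bT\|_\v2^2$ assembled from the norm estimates in Theorem~\ref{theorem:dual-cert}; it is not an effective dimension of a projection of $R+E$.

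If you instead bound the cross term by duality, the right-hand side will carry $\mu^{-1}\epsilon_\v\infty\|\DS\|_\v1$ and $\mu^{-1}\epsilon_{2\to2}\|\DL\|_*$ (or, if you complete the square with $E$ included, a term $\tfrac{1}{2\mu}\|E\|_\v2^2$ that is not controlled by the hypotheses). Absorbing these back into the left would require margin that conditions~\eqref{eq:opt2-cond1}--\eqref{eq:opt2-cond3} do not supply, and in any case would not produce the stated form of $\br'$. Separately, your third step also differs from the paper: rather than passing through $\|R\|_\v2$ and the Cauchy--Schwarz conversion $\|\P_\bO(\DS)\|_\v1 \leq \sqrt{\bk}\,\|\P_\bO(\DS)\|_\v2$, the paper subtracts the $\P_\bO\!\circ\!\P_\bT$-projected $\XL$-stationarity equation from the $\P_\bO$-projected $\XS$-stationarity equation (Lemma~\ref{lemma:opt2-kkt}) and takes the inner product with $\sign(\DS)$, yielding a direct inequality for $\|\P_\bO(\DS)\|_\v1$ with no Frobenius-norm detour.
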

The proof of Theorem~\ref{theorem:opt2} is in Section~\ref{section:opt2}.
As before, if $Y = \bXS + \bXL$ so $E = 0$, then we can set $\mu \to 0$ and
obtain exact recovery with $\hXS = \bXS$ and $\hXL = \bXL$.
When the perturbation $E$ is non-zero, we control the accuracy of $\bXS$ in
entry-wise $1$-norm and $2$-norm, and the accuracy of $\bXL$ in trace norm.
Under the simplified conditions~\eqref{eq:opt2-simple1}, we can choose
$\lambda = (15/82)/\alpha(\rho)$ and $\mu =
\max\{4\epsilon_{2\to2}, 2\epsilon_\v\infty / (15\lambda)\}$ to satisfy
\eqref{eq:opt2-simple2}; this leads to the error bounds
\begin{align*}
& \|\hXS-\bXS\|_\v1
= O\left(
\br\alpha(\rho) 
\cdot \max\left\{\epsilon_{2\to2}, \ \alpha(\rho) \epsilon_\v\infty \right\}
\right)
\\
& \|\hXL-\bXL\|_*
= O\left(
\sqrt{\br} \cdot
\min\left\{ \sqrt{b \cdot \|\hXS-\bXS\|_\v1}, \
\|\hXS-\bXS\|_\v1 \right\}
+ \epsilon_*'
+ \br \cdot \max\left\{\epsilon_{2\to2}, \ \alpha(\rho) \epsilon_\v\infty
\right\}
\right)
\end{align*}
(here, we have used the facts
$\bk \leq \alpha(\rho)^2$,
$\alpha(\rho)\lambda = \Theta(1)$,
and $\br' = O(\br)$, which also implies that 
$\bk \cdot \epsilon_\v\infty= O(\alpha(\rho) \cdot \alpha(\rho) \epsilon_\v\infty)$).
Finally, note that if the constraint $\|\XS-Y\|_\v\infty \leq b$ is added
(\emph{i.e.}, $b < \infty$), then the requirement $b \geq
\|\bXS-Y\|_\v\infty$ can be satisfied with $b := \|\bXS\|_\v\infty +
\epsilon_\v\infty$.
This allows for a possibly improved bound on $\|\hXL-\bXL\|_*$.

\subsection{Examples}

We illustrate our main results with some simple examples.

\subsubsection{Random models}

We first consider a random model for the matrices $\bXS$ and
$\bXL$~\citep{CSPW09}.
Let the support of $\bXS$ be chosen uniformly at random $\tilde{k}$
times over the $[m] \times [n]$ matrix entries (so that one entry can be selected multiple times).
The value of the entries in the chosen support can be arbitrary.
With high probability, we have
\[
\|\sign(\bXS)\|_{1\to1} = O\left( \frac{\tilde{k}\log n}{n} \right) 
\quad \text{and} \quad
\|\sign(\bXS)\|_{\infty\to\infty} = O\left( \frac{\tilde{k}\log m}{m} \right) 
\]
so for $\rho := \sqrt{(n \log m)/(m \log n)}$, we have
\[
\alpha(\rho) = O\left( \tilde{k} \sqrt{\frac{(\log m)(\log n)}{mn}} \right)
.
\]
The logarithmic factors are due to collisions in the random process.
Now let $\bU$ and $\bV$ be chosen uniformly at random over all families of
$\br$ orthonormal vectors in $\R^m$ and $\R^n$, respectively.
Using arguments similar to those in~\citep{CR09}, one can show that with
high probability,
\begin{align*}
& \|\bU\bU^\top\|_\v\infty = O\left( \frac{\br \log m}{m} \right)
&
&
\|\bV\bV^\top\|_\v\infty = O\left( \frac{\br \log n}{n} \right)
\\
& \|\bU\|_{2\to\infty} = O\left( \sqrt{\frac{\br \log m}{m}} \right)
&
&
\|\bV\|_{2\to\infty} = O\left( \sqrt{\frac{\br \log n}{n}} \right)
,
\end{align*}
so for the previously chosen $\rho$, we have
\[
\beta(\rho) = O\left( \br \sqrt{\frac{(\log m)(\log n)}{mn}} \right)
\quad \text{and} \quad
\gamma = O\left( \br \sqrt{\frac{(\log m)(\log n)}{mn}} \right)
.
\]
Therefore
\[
\alpha(\rho)\beta(\rho) = O\left( \frac{\tilde{k} \br (\log m)(\log n)}{mn} \right)
\quad \text{and} \quad
\alpha(\rho)\gamma = O\left( \frac{\tilde{k} \br (\log m)(\log n)}{mn} \right)
,
\]
both of which are $\ll 1$ provided that
\[
\tilde{k} \leq \delta \cdot \frac{mn}{\br(\log m)(\log n)}
\]
for a small enough constant $\delta \in (0,1)$.
In other words, when $\bXL$ is low-rank, the matrix $\bXS$ can have nearly
a constant fraction of its entries be non-zero while still allowing for
exact decomposition of $Y = \bXS + \bXL$.
Our guarantee improves over that of~\citet{CSPW09} by roughly a factor of
$\Omega((mn)^{1/4})$, but is worse by a factor of $\br (\log m) (\log n)$
relative to the guarantees of~\citet{CLMW09} for the random model.
Therefore there is a gap between our generic deterministic analysis and a
direct probabilistic analysis of this random model, and this gap seems
unavoidable with sparsity conditions based on $\alpha(\rho)$.
It is an interesting open problem to find alternative characterizations of
$\supp(\bXS)$ that can narrow or close this gap.

\subsubsection{Principal component analysis with outliers}

Suppose $\bXL$ is matrix of $m$ data points lying in a low-dimensional
subspace of $\R^n$, and $Z$ is a random matrix with independent Gaussian
noise entries with variance $\sigma^2$.
Then $Y' = \bXL + Z$ is the standard model for principal component
analysis.
We augment the model with a sparse noise component $\bXS$ to obtain $Y =
\bXS + \bXL + Z$; here, we allow the non-zero entries of $\bXS$ to possibly
approach infinity.

According to Theorem~\ref{theorem:opt2}, we need to estimate
$\|Z\|_{2\to2}$, $\|Z\|_\v\infty$, $\|\P_\bT(Z)\|_\v\infty$, and
$\|\P_\bT(Z)\|_*$.
We have the following with high probability~\citep{DS01},
\[
\|Z\|_{2\to2} \leq \sigma\sqrt{m} + \sigma\sqrt{n} + O(\sigma)
.
\]
Using standard arguments with the rotational invariance of the Gaussian
distribution, we also have
\[
\|Z\|_\v\infty  \leq O(\sigma \log(mn))
\quad \text{and} \quad
\|\P_\bT(Z)\|_\v\infty \leq O(\sigma \log(mn))
\]
with high probability.
Finally, by Lemma~\ref{lemma:projections}, we have
\[
\|\P_\bT(Z)\|_*
\leq 2\br \|Z\|_{2\to2}
\leq 2\br\sigma\sqrt{m} + 2\br\sigma\sqrt{n} + O(\br\sigma)
.
\]
Suppose $(\bXS,\bXL)$ has $\alpha(\rho) \leq c_1 (\sqrt{mn}/\br)$,
$\beta(\rho) = \Theta(\br/\sqrt{mn})$, and $\gamma = \Theta(\br/\sqrt{mn})$
and satisfies the simplified condition~\eqref{eq:opt2-simple1}.
This can be achieved with $c_1 c_2 \leq 1/41$ in Proposition~\ref{proposition:alpha-beta-gamma}.
Also assume $\lambda$ and $\mu$ are chosen to
satisfy~\eqref{eq:opt2-simple2}, and that $b \geq \|\bXL\|_\v\infty +
\epsilon_\v\infty$.
Then we note that $\bar{k}= O(c_1^2 mn/\bar{r}^2)$, and thus have from Theorem~\ref{theorem:opt2}
(see the discussion thereafter):
\begin{align*}
\|\hXS-\bXS\|_\v1
& = O\left(
 c_1 \sqrt{mn}
\max\{ \sigma \sqrt{m} + \sigma \sqrt{n}, \ \sigma \sqrt{mn} \log(mn) / \br \}
\right)
\\
& = O\left(
\sigma c_1 mn \log(mn)/\br 
\right)
\\
\|\hXL-\bXL\|_*
& = O\left(
\sqrt{b \sigma c_1 mn \log(mn)/\bar{r}}
+ \bar{r} \sigma (\sqrt{m} + \sqrt{n})) 
+ c_1 \sqrt{mn} 
\right)
,
\end{align*}
where we may take $b=O(\sigma \log(mn)+\|\XL\|_\v\infty))$.

Now consider the situation where both $m, n \to \infty$, and assume that
$\|\bXL\|_\v\infty$ remains bounded. 
If $c_1 (\log(mn))^2 = o(1)$, which implies that the we have at most 
$o(m/(\log(mn))^2)$ outliers per column and $o(n/(\log(mn))^2)$ outliers per row,
then 
\[
\|\hXL-\bXL\|_* = O( \bar{r} \sigma (\sqrt{m} + \sqrt{n}) ) .
\]
That is, the normalized trace norm $\|\hXL-\bXL\|_*/\sqrt{nm} \to 0$. This means that we can correctly recover
the principal components of $\bXL$ with both outliers and random noise, when both $m$ and $n$ are large
and $c_1 (\log(mn))^2=o(1)$ in Proposition~\ref{proposition:alpha-beta-gamma}.

\section{Technical preliminaries} \label{section:prelim}

\subsection{Norms, inner products, and projections}

Our analysis involves a variety of norms of vectors, matrices (viewed as
elements of a vector space as well as linear operators of vectors), and
linear operators of matrices; we define these and related notions in this
section.

\subsubsection{Entry-wise norms}

For any $p \in [1,\infty]$, define $\|v\|_p := (\sum_i |v_i|^p)^{1/p}$ be
the $p$-norm of a vector $v$ (with $\|v\|_\infty := \max_i |v_i|$).
Also, define $\|M\|_\v{p} := (\sum_{i,j} |M_{i,j}|^p)^{1/p}$ to be the
entry-wise $p$-norm of a matrix $M$ (again, with $\|M\|_\v\infty :=
\max_{i,j} |M_{i,j}|$).
Note that $\|\cdot\|_\v2$ corresponds to the Frobenius norm.

\subsubsection{Inner products, linear operators, and orthogonal
projections}

We endow $\R^{m \times n}$ with the inner product $\ang{\cdot,\cdot}$
between matrices that induces the Frobenius norm $\|\cdot\|_\v2$; this is
given by $\ang{M,N} = \tr(M^\top N)$.

For a linear operator $\T : \R^{m \times n} \to \R^{m \times n}$, we denote
its adjoint by $\T^*$; this is the unique linear operator that satisfies
$\ang{\T^*(M),N} = \ang{M,\T(N)}$ for all $M \in \R^{m \times n}$ and
$N \in \R^{m \times n}$
(in this work, we only consider bounded linear operators).
For any two linear operators $\T_1$ and $\T_2$, we let $\T_1 \circ \T_2$
denote their composition as defined by $(\T_1 \circ \T_2)(M) :=
\T_1(\T_2(M))$.

Given a subspace $W \subseteq \R^{m \times n}$, we let $W^\perp$ denote its
orthogonal complement, and let $\P_W : \R^{m \times n} \to \R^{m \times n}$
denote the orthogonal projector to $W$ with respect to $\ang{\cdot,\cdot}$,
\emph{i.e.}, the unique linear operator with range $W$ and satisfying
${\P_W}^* = \P_W$ and $\P_W \circ \P_W = \P_W$.

\subsubsection{Induced norms}

For any two vector norms $\|\cdot\|_p$ and $\|\cdot\|_q$, define
$\|M\|_{p\to q} := \max_{x\neq0} \|Mx\|_q / \|x\|_p$ to be the corresponding
induced operator norm of a matrix $M$.
Our analysis uses the following special cases which have alternative
definitions:
\begin{itemize}
\item $\|M\|_{1\to1} = \max_j \|Me_j\|_1$,
\item $\|M\|_{1\to2} = \max_j \|Me_j\|_2$,
\item $\|M\|_{2\to2} = \text{spectral norm of $M$ (\emph{i.e.}, largest
singular value of $M$)}$,
\item $\|M\|_{2\to\infty} = \max_i \|M^\top e_i\|_2$, and
\item $\|M\|_{\infty\to\infty} = \max_i \|M^\top e_i\|_1$.
\end{itemize}
Here, $e_i$ is the $i$th coordinate vector which has a $1$ in the $i$th
position and $0$ elsewhere.

Finally, we also consider induced operator norms of linear matrix operators
$\T : \R^{m \times n} \to \R^{m \times n}$ (in particular, projection
operators with respect to $\ang{\cdot,\cdot}$).
For any two matrix norms $\|\cdot\|_{\diamondsuit}$ and
$\|\cdot\|_{\heartsuit}$, define $\|\T\|_{\diamondsuit\to\heartsuit} :=
\max_{M \neq 0} \|\T(M)\|_{\heartsuit} / \|M\|_{\diamondsuit}$.

\subsubsection{Other norms}

The trace norm (or nuclear norm) $\|M\|_*$ of a matrix $M$ is the sum of
the singular values of $M$.
We will also make use of a hybrid matrix norm $\|\cdot\|_{\sharp(\rho)}$,
parametrized by $\rho > 0$, which we define by
\[
\|M\|_{\sharp(\rho)}
:= \max\{ \rho\|M\|_{1\to1}, \ \rho^{-1} \|M\|_{\infty\to\infty} \}
.
\]
Also define $\|M\|_{\flat(\rho)} := \sup_{\|N\|_{\sharp(\rho)} \leq 1}
\ang{M,N}$, \emph{i.e.}, the dual of $\|\cdot\|_{\sharp(\rho)}$ (see
below).

\subsubsection{Dual pairs}

The matrix norm $\|\cdot\|_{\heartsuit}$ is said to be dual to
$\|\cdot\|_{\spadesuit}$ if, for all $M \in \R^{m \times n}$,
$\|M\|_{\heartsuit} = \sup_{\|N\|_{\spadesuit} \leq 1} \ang{M,N}$.

\begin{proposition} \label{proposition:holder}
Fix any matrix norm $\|\cdot\|_{\spadesuit}$, and let
$\|\cdot\|_{\heartsuit}$ be its dual.
For all $M \in \R^{m \times n}$ and $N \in \R^{m \times n}$, we have
\[
\ang{M,N} \leq \|M\|_{\spadesuit} \|N\|_{\heartsuit}
.
\]
\end{proposition}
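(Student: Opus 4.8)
The plan is to reduce the inequality to the defining property of the dual norm by a one-line homogeneity argument. First I would dispose of the degenerate case: if $M = 0$, then $\ang{M,N} = 0 = \|M\|_{\spadesuit}\|N\|_{\heartsuit}$, using that $\|\cdot\|_{\spadesuit}$ is a genuine norm so $\|0\|_{\spadesuit} = 0$; the claimed bound then holds with equality regardless of $N$.

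For the main case $M \neq 0$, I would normalize. Set $\hat{M} := M / \|M\|_{\spadesuit}$, so that $\|\hat{M}\|_{\spadesuit} = 1$ by positive homogeneity of $\|\cdot\|_{\spadesuit}$. Since $\|\cdot\|_{\heartsuit}$ is dual to $\|\cdot\|_{\spadesuit}$, by definition $\|N\|_{\heartsuit} = \sup\{ \ang{N,B} : B \in \R^{m\times n},\ \|B\|_{\spadesuit} \leq 1 \}$. Because the inner product $\ang{\cdot,\cdot}$ on $\R^{m\times n}$ is symmetric, $\ang{\hat{M},N} = \ang{N,\hat{M}}$, and since $\hat{M}$ is a feasible point for this supremum (it has $\|\hat{M}\|_{\spadesuit} \le 1$), we obtain $\ang{\hat{M},N} \leq \|N\|_{\heartsuit}$.

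Finally I would undo the normalization: multiplying the previous inequality by $\|M\|_{\spadesuit} > 0$ and using bilinearity of $\ang{\cdot,\cdot}$ gives $\ang{M,N} = \|M\|_{\spadesuit}\,\ang{\hat{M},N} \le \|M\|_{\spadesuit}\|N\|_{\heartsuit}$, which is exactly the assertion of Proposition~\ref{proposition:holder}.

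There is essentially no obstacle here; the only points needing a moment's care are that $\ang{\cdot,\cdot}$ is symmetric, so the normalized matrix may legitimately be taken inside the supremum that defines $\|N\|_{\heartsuit}$, and that $\|\cdot\|_{\spadesuit}$ being a norm is what permits both the normalization step and the handling of $M = 0$. This is the standard generalized H\"older (Fenchel--Young-type) inequality for a dual pair of norms, and the same two-line argument applies verbatim on any finite-dimensional inner product space.
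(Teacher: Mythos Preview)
Your argument is correct and complete. The paper itself states Proposition~\ref{proposition:holder} without proof, treating it as a standard fact about dual norms; your homogeneity-and-normalization argument is exactly the usual proof and there is nothing to compare against.
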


\begin{proposition} \label{proposition:operator-dual}
Fix any any linear matrix operator $\T : \R^{m \times n} \to \R^{m \times
n}$ and any pair of matrix norms $\|\cdot\|_{\spadesuit}$ and
$\|\cdot\|_{\clubsuit}$.
We have
\[
\|\T\|_{\spadesuit\to\clubsuit}
\ = \
\|\T^*\|_{\diamondsuit\to\heartsuit}
,
\]
where $\|\cdot\|_{\heartsuit}$ is dual to $\|\cdot\|_{\spadesuit}$,
and $\|\cdot\|_{\diamondsuit}$ is dual to $\|\cdot\|_{\clubsuit}$.
\end{proposition}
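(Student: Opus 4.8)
The plan is to establish the identity by unwinding three definitions in sequence: the induced operator norm $\|\cdot\|_{\spadesuit\to\clubsuit}$, the dual norms, and the adjoint $\T^*$. First I would write, directly from the definition of the induced norm,
\[
\|\T\|_{\spadesuit\to\clubsuit} \ = \ \sup_{\|M\|_{\spadesuit}\leq 1} \|\T(M)\|_{\clubsuit}.
\]
Since $\R^{m\times n}$ is finite-dimensional, every norm on it is reflexive, so the assumption that $\|\cdot\|_{\diamondsuit}$ is dual to $\|\cdot\|_{\clubsuit}$ implies conversely that $\|\cdot\|_{\clubsuit}$ is dual to $\|\cdot\|_{\diamondsuit}$; hence $\|\T(M)\|_{\clubsuit} = \sup_{\|N\|_{\diamondsuit}\leq 1} \ang{N,\T(M)}$. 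Substituting this and interchanging the two suprema (which is unconditionally valid) gives
\[
\|\T\|_{\spadesuit\to\clubsuit} \ = \ \sup_{\|N\|_{\diamondsuit}\leq 1} \ \sup_{\|M\|_{\spadesuit}\leq 1} \ang{N,\T(M)}.
\]

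Next I would apply the defining property of the adjoint, $\ang{N,\T(M)} = \ang{\T^*(N),M}$, so that the inner supremum becomes $\sup_{\|M\|_{\spadesuit}\leq 1}\ang{\T^*(N),M}$, which equals $\|\T^*(N)\|_{\heartsuit}$ precisely because $\|\cdot\|_{\heartsuit}$ is dual to $\|\cdot\|_{\spadesuit}$. Taking the remaining supremum over $\|N\|_{\diamondsuit}\leq 1$ then yields
\[
\|\T\|_{\spadesuit\to\clubsuit} \ = \ \sup_{\|N\|_{\diamondsuit}\leq 1} \|\T^*(N)\|_{\heartsuit} \ = \ \|\T^*\|_{\diamondsuit\to\heartsuit},
\]
which is the claim. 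An alternative route is to prove only the inequality $\|\T\|_{\spadesuit\to\clubsuit}\leq\|\T^*\|_{\diamondsuit\to\heartsuit}$ using the H\"older-type bound of Proposition~\ref{proposition:holder}, and then deduce the reverse inequality by applying that inequality to $\T^*$ in place of $\T$ together with $\T^{**}=\T$ and reflexivity; but the direct chain of equalities above is cleaner.

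There is essentially no hard step here — the argument is a routine duality manipulation. The only points that warrant a word of care are (i) the appeal to reflexivity of finite-dimensional norms, which is what licenses passing from ``$\|\cdot\|_{\diamondsuit}$ is the dual of $\|\cdot\|_{\clubsuit}$'' to ``$\|\cdot\|_{\clubsuit}$ is the dual of $\|\cdot\|_{\diamondsuit}$'', and (ii) noting that all the suprema run over compact unit balls and are therefore attained, so no separate treatment of $\T = 0$ or of limiting sequences is required. With those observations in place, the displayed chain of equalities completes the proof.
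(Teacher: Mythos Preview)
Your argument is correct and is exactly the standard duality manipulation one would expect. The paper itself states this proposition without proof, treating it as a basic fact, so there is no ``paper's proof'' to compare against; your chain of equalities via reflexivity, interchange of suprema, and the adjoint identity is the natural justification.
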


The following pairs of matrix norms are dual to each other:
\begin{enumerate}
\item $\|\cdot\|_\v{p}$ and $\|\cdot\|_\v{q}$ where $1/p + 1/q = 1$;

\item $\|\cdot\|_*$ and $\|\cdot\|_{2\to2}$;

\item $\|\cdot\|_{\sharp(\rho)}$ and $\|\cdot\|_{\flat(\rho)}$ (by
definition).

\end{enumerate}

\subsubsection{Some lemmas}

First we show that the $\|\cdot\|_{\sharp(\rho)}$ norm (for any $\rho > 0$)
bounds the spectral norm $\|\cdot\|_{2\to2}$.
\begin{lemma} \label{lemma:spectral-bound}
For any $M \in \R^{m\times n}$, we have for all $\rho > 0$,
\[ 
\|M\|_{2\to2} \leq \|M\|_{\sharp(\rho)} .
\]
\end{lemma}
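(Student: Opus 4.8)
The plan is to bound the spectral norm $\|M\|_{2\to2}$ by interpolating between the two induced norms $\|M\|_{1\to1}$ and $\|M\|_{\infty\to\infty}$ that appear in the definition of $\|\cdot\|_{\sharp(\rho)}$. The key classical fact is the Riesz--Thorin / Schur-type interpolation inequality $\|M\|_{2\to2} \le \sqrt{\|M\|_{1\to1}\,\|M\|_{\infty\to\infty}}$, which I would state and use; this is the geometric-mean bound on the $p\to p$ operator norm at $p=2$, sitting between $p=1$ and $p=\infty$. Alternatively, since $\|M\|_{1\to1}$ is the maximum absolute column sum and $\|M\|_{\infty\to\infty}$ is the maximum absolute row sum, one can prove the same inequality directly: for any unit vector $x$, write $|(Mx)_i| \le \sum_j |M_{ij}|^{1/2}\,|M_{ij}|^{1/2}|x_j|$ and apply Cauchy--Schwarz to get $|(Mx)_i|^2 \le \big(\sum_j |M_{ij}|\big)\big(\sum_j |M_{ij}|\,|x_j|^2\big) \le \|M\|_{\infty\to\infty}\sum_j |M_{ij}|\,|x_j|^2$; summing over $i$ and swapping the order of summation yields $\|Mx\|_2^2 \le \|M\|_{\infty\to\infty}\,\|M\|_{1\to1}\,\|x\|_2^2$.

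With that inequality in hand, the remaining step is purely algebraic: for any $\rho > 0$,
\[
\|M\|_{2\to2} \ \le\ \sqrt{\|M\|_{1\to1}\,\|M\|_{\infty\to\infty}}
\ =\ \sqrt{\big(\rho\|M\|_{1\to1}\big)\big(\rho^{-1}\|M\|_{\infty\to\infty}\big)}
\ \le\ \max\{\rho\|M\|_{1\to1},\ \rho^{-1}\|M\|_{\infty\to\infty}\}
\ =\ \|M\|_{\sharp(\rho)},
\]
using that the geometric mean of two nonnegative numbers is at most their maximum. This completes the proof.

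There is essentially no obstacle here; the only thing to be careful about is getting the interpolation inequality cleanly, so I would favor the self-contained Cauchy--Schwarz argument above rather than invoking Riesz--Thorin as a black box, since it keeps the proof elementary and matches the level of the surrounding technical preliminaries. I would present it in two short displays: first the bound $\|Mx\|_2^2 \le \|M\|_{1\to1}\,\|M\|_{\infty\to\infty}\,\|x\|_2^2$ for arbitrary $x$, then the $\rho$-balancing step.
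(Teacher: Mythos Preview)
Your proof is correct. It differs from the paper's argument: the paper proves the bound via a dilation trick, forming the block matrix
\[
\begin{bmatrix} 0 & \rho M \\ \rho^{-1} M^\top & 0 \end{bmatrix},
\]
applying it to the stacked top singular vectors $(\rho^{1/2} u,\ \rho^{-1/2} v)$, and observing that the $\ell_1$ norm of the image equals $\sigma$ times that of the input; bounding by the block matrix's $\|\cdot\|_{1\to1}$ norm, which is exactly $\|M\|_{\sharp(\rho)}$, finishes the job. Your route via Schur's test (the Cauchy--Schwarz argument giving $\|M\|_{2\to2}\le\sqrt{\|M\|_{1\to1}\|M\|_{\infty\to\infty}}$) is arguably more standard and has the mild advantage of yielding the slightly sharper geometric-mean bound as an intermediate step, from which one also reads off the optimal balancing $\rho=\sqrt{\|M\|_{\infty\to\infty}/\|M\|_{1\to1}}$. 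The paper's dilation argument, on the other hand, handles the $\rho$-scaling in a single stroke without any separate interpolation step. Both are short and self-contained; either would be appropriate here.
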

\begin{proof}
Let $\sigma$ be the largest singular value of $M$, and let $u \in \R^m$ and
$v \in \R^n$ be, respectively, associated left and right singular vectors.
Then
\[
\left\| \left[ \begin{array}{cc}
0 & \rho M \\
\rho^{-1} M^\top & 0
\end{array} \right]
\left[ \begin{array}{c}
\rho^{1/2} u \\
\rho^{-1/2} v
\end{array} \right]
\right\|_1
=
\left\|
\left[ \begin{array}{c}
\rho^{1/2} Mv \\
\rho^{-1/2} M^\top u
\end{array} \right]
\right\|_1
=
\sigma \left\|
\left[ \begin{array}{c}
\rho^{1/2} u \\
\rho^{-1/2} v
\end{array} \right]
\right\|_1
.
\]
Moreover, by definition of $\|\cdot\|_{1\to1}$,
\[
\left\| \left[ \begin{array}{cc}
0 & \rho M \\
\rho^{-1/2} M^\top & 0
\end{array} \right]
\left[ \begin{array}{c}
\rho^{1/2} u \\
\rho^{-1/2} v
\end{array} \right]
\right\|_1
\leq
\left\| \left[ \begin{array}{cc}
0 & \rho M \\
\rho^{-1} M^\top & 0
\end{array} \right]
\right\|_{1\to1}
\left\|
\left[ \begin{array}{c}
\rho^{1/2} u \\
\rho^{-1/2} v
\end{array} \right]
\right\|_1
.
\]
Therefore
\begin{align*}
\|M\|_{2\to2}
\ = \sigma
& \leq \left\| \left[ \begin{array}{cc}
0 & \rho M \\
\rho^{-1} M^\top & 0
\end{array} \right]
\right\|_{1\to1} \\
& = \max\{ \|\rho^{-1} M^\top\|_{1\to1}, \|\rho M\|_{1\to1}\}
\ = \max\{ \rho^{-1} \|M\|_{\infty\to\infty}, \rho \|M\|_{1\to1}\}
\ = \|M\|_{\sharp(\rho)}
.
\qedhere
\end{align*}
\end{proof}

The following lemma is the dual of Lemma~\ref{lemma:spectral-bound}.

\begin{lemma} \label{lemma:trace-bound}
For any $M \in \R^{m\times n}$, we have for all $\rho > 0$,
\[
\|M\|_{\flat(\rho)} \leq \|M\|_* .
\]
\end{lemma}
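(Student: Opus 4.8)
The plan is to obtain this as the formal dual of Lemma~\ref{lemma:spectral-bound}, exploiting the duality structure already set up in the preliminaries. Recall that $\|\cdot\|_{\flat(\rho)}$ is defined as the dual norm of $\|\cdot\|_{\sharp(\rho)}$, and that $\|\cdot\|_*$ is dual to $\|\cdot\|_{2\to2}$. So the statement $\|M\|_{\flat(\rho)} \leq \|M\|_*$ is exactly the dual-norm inequality induced by the reverse inequality $\|N\|_{2\to2} \leq \|N\|_{\sharp(\rho)}$ for all $N$, which is Lemma~\ref{lemma:spectral-bound}.

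Concretely, I would argue as follows. Fix $M$ and $\rho > 0$. By definition,
\[
\|M\|_{\flat(\rho)} = \sup_{\|N\|_{\sharp(\rho)} \leq 1} \ang{M,N}.
\]
For any $N$ with $\|N\|_{\sharp(\rho)} \leq 1$, Lemma~\ref{lemma:spectral-bound} gives $\|N\|_{2\to2} \leq \|N\|_{\sharp(\rho)} \leq 1$. Hence the supremum over the set $\{N : \|N\|_{\sharp(\rho)} \leq 1\}$ is at most the supremum over the larger set $\{N : \|N\|_{2\to2} \leq 1\}$, i.e.
\[
\|M\|_{\flat(\rho)}
\ \leq \
\sup_{\|N\|_{2\to2} \leq 1} \ang{M,N}
\ = \
\|M\|_*,
\]
where the last equality is the statement that $\|\cdot\|_*$ is dual to $\|\cdot\|_{2\to2}$ (item 2 in the list of dual pairs).

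There is essentially no obstacle here: the only content is the monotonicity of the dual-norm operation with respect to the containment of unit balls, together with the already-proved Lemma~\ref{lemma:spectral-bound} and the standard trace-norm/spectral-norm duality. If one wanted to avoid invoking the named duality fact, one could alternatively cite Proposition~\ref{proposition:operator-dual} or argue directly that $\sup_{\|N\|_{2\to2}\le 1}\ang{M,N}$ equals $\sum_i \sigma_i(M)$ via the singular value decomposition, but using the dual-pair list is the cleanest route and keeps this lemma a one-line consequence of its predecessor.
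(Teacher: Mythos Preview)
Your proof is correct and follows essentially the same route as the paper: both use Lemma~\ref{lemma:spectral-bound} to pass from the $\sharp(\rho)$ unit ball to the spectral unit ball and then invoke the trace/spectral duality. The only cosmetic difference is that the paper picks a maximizing $N$ and applies Proposition~\ref{proposition:holder}, whereas you compare suprema over nested unit balls directly; the content is identical.
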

\begin{proof}
We know that $\|M\|_{\flat(\rho)}=\ang{M,N}$ for some matrix $N$ such that
$\|N\|_{\sharp(\rho)}=1$.
Therefore $\|N\|_{2\to2} \leq 1$ from Lemma~\ref{lemma:spectral-bound}, and
thus using Proposition~\ref{proposition:holder},
\[
\|M\|_{\flat(\rho)}= \ang{M,N} \leq \|M\|_{*}\|N\|_{2\to2} \leq \|M\|_* .
\qedhere
\]
\end{proof}

Finally we state a lemma concerning the invertibility of a certain block-form
operator used in our analysis.
\begin{lemma} \label{lemma:block-inverse}
Fix any matrix norm $\|\cdot\|_\spadesuit$ on $\R^{m \times n}$ and linear
operators $\T_1 : \R^{m \times n} \to \R^{m \times n}$ and $\T_2 : \R^{m
\times n} \to \R^{m \times n}$.
Let $\I : \R^{m \times n} \to \R^{m \times n}$ be the identity operator,
and suppose $\|\T_1 \circ \T_2\|_{\spadesuit\to\spadesuit} < 1$.
\begin{enumerate}
\item $\I - \T_1 \circ \T_2$ is invertible and satisfies
\[
\|(\I - \T_1 \circ \T_2)^{-1}\|_{\spadesuit\to\spadesuit}
\leq
\frac{1}{1 - \|\T_1 \circ \T_2\|_{\spadesuit\to\spadesuit}}
. \]

\item The linear operator on $\R^{m \times n} \times \R^{m \times n}$
\[
\left[ \begin{array}{cc}
\I & \T_1 \\
\T_2 & \I
\end{array} \right]
\]
is invertible, and its inverse is given by
\begin{align*}
\left[ \begin{array}{cc}
\I & \T_1 \\
\T_2 & \I
\end{array} \right]^{-1}
& =
\left[ \begin{array}{cc}
(\I - \T_1 \circ \T_2)^{-1}
& -(\I - \T_1 \circ \T_2)^{-1} \circ \T_1
\\
-\T_2 \circ (\I - \T_1 \circ \T_2)^{-1}
& \I + \T_2 \circ (\I - \T_1 \circ \T_2)^{-1} \circ \T_1
\end{array} \right]
\\
& =
\left[ \begin{array}{cc}
(\I - \T_1 \circ \T_2)^{-1}
& -(\I - \T_1 \circ \T_2)^{-1} \circ \T_1
\\
-(\I - \T_2 \circ \T_1)^{-1} \circ \T_2
& (\I - \T_2 \circ \T_1)^{-1}
\end{array} \right]
.
\end{align*}

\end{enumerate}
\end{lemma}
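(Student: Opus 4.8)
The plan is to prove part~(1) by the standard Neumann series argument, and then derive part~(2) by direct verification of the proposed inverse formula. For part~(1), since $\|\T_1 \circ \T_2\|_{\spadesuit\to\spadesuit} < 1$, the Neumann series $\sum_{k \geq 0} (\T_1 \circ \T_2)^k$ converges in the operator norm $\|\cdot\|_{\spadesuit\to\spadesuit}$ (the space of bounded operators on $\R^{m\times n}$ equipped with this norm is complete, being finite-dimensional). One checks by telescoping that this series is a two-sided inverse of $\I - \T_1 \circ \T_2$, and the triangle inequality together with submultiplicativity of the induced operator norm gives $\|(\I - \T_1 \circ \T_2)^{-1}\|_{\spadesuit\to\spadesuit} \leq \sum_{k \geq 0} \|\T_1 \circ \T_2\|_{\spadesuit\to\spadesuit}^k = (1 - \|\T_1 \circ \T_2\|_{\spadesuit\to\spadesuit})^{-1}$.

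For part~(2), I would \emph{not} re-derive the block inverse from scratch, but instead simply verify that the claimed $2\times 2$ block operator is a left and right inverse of $\left[\begin{smallmatrix} \I & \T_1 \\ \T_2 & \I \end{smallmatrix}\right]$ by block matrix multiplication. Write $\mathcal{S} := (\I - \T_1 \circ \T_2)^{-1}$, which exists by part~(1). Multiplying out, the $(1,1)$ block of the product is $\mathcal{S} - \mathcal{S} \circ \T_1 \circ \T_2 = \mathcal{S} \circ (\I - \T_1 \circ \T_2) = \I$; the $(1,2)$ block is $-\mathcal{S} \circ \T_1 + \mathcal{S} \circ \T_1 = 0$; the $(2,1)$ block is $\T_2 \circ \mathcal{S} - \T_2 \circ \mathcal{S} = 0$ (using the first form of the inverse); and the $(2,2)$ block is $-\T_2 \circ \mathcal{S} \circ \T_1 + \I + \T_2 \circ \mathcal{S} \circ \T_1 = \I$. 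The reverse product is checked analogously. To see that the two displayed forms of the inverse agree, it suffices to establish the identities $\T_2 \circ (\I - \T_1 \circ \T_2)^{-1} = (\I - \T_2 \circ \T_1)^{-1} \circ \T_2$ and $\I + \T_2 \circ (\I - \T_1 \circ \T_2)^{-1} \circ \T_1 = (\I - \T_2 \circ \T_1)^{-1}$; the first follows from $\T_2 \circ (\I - \T_1 \circ \T_2) = (\I - \T_2 \circ \T_1) \circ \T_2$ by left-multiplying by $(\I - \T_2 \circ \T_1)^{-1}$ and right-multiplying by $(\I - \T_1 \circ \T_2)^{-1}$ (note $\|\T_2 \circ \T_1\|_{\spadesuit\to\spadesuit} = \|\T_1 \circ \T_2\|_{\spadesuit\to\spadesuit}$ need not hold, but $\|\T_2 \circ \T_1\| < 1$ does follow from a standard argument, or one observes $\I - \T_2\circ\T_1$ is invertible whenever $\I - \T_1\circ\T_2$ is), and the second is then immediate from the first by composing with $\T_1$ on the right and simplifying.

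I do not anticipate a genuine obstacle here, as this is essentially the operator-valued version of the classical Schur-complement / Banachiewicz block-inverse formula. The one point requiring a word of care is the invertibility of $\I - \T_2 \circ \T_1$ needed to justify the second form of the inverse: rather than assuming $\|\T_2 \circ \T_1\|_{\spadesuit\to\spadesuit} < 1$ (which is not part of the hypothesis), one argues that if $(\I - \T_1 \circ \T_2)$ is invertible then so is $(\I - \T_2 \circ \T_1)$, with inverse $\I + \T_2 \circ (\I - \T_1 \circ \T_2)^{-1} \circ \T_1$ — verified directly by multiplying out — so the second displayed form is well-defined and equals the first. Everything else is routine bookkeeping with compositions of linear operators.
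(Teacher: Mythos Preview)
Your proposal is correct and follows essentially the same approach as the paper: the paper's proof simply states that the first claim is a standard application of Taylor (Neumann) expansions and that the second follows from block-matrix inverse formulae via Schur complements. Your write-up is considerably more detailed than the paper's two-line proof, including the careful remark that invertibility of $\I - \T_2 \circ \T_1$ follows from that of $\I - \T_1 \circ \T_2$ rather than from a separate norm bound, which the paper leaves implicit.
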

\begin{proof}
The first claim is a standard application of Taylor expansions.
The second claim then follows from formulae of block matrix inverses using
Schur complements.
\end{proof}

\subsection{Projection operators and subdifferential sets}

Recall the definitions of the following subspaces
\begin{equation*}
\Omega(\XS) := \{ X \in \R^{m \times n} : \supp(X) \subseteq
\supp(\XS) \}
\end{equation*}
and
\begin{equation*}
T(\XL) := \{ X_1 + X_2 \in \R^{m \times n} : \range(X_1) \subseteq
\range(\XL), \range(X_2^\top) \subseteq \range(\XL^\top) \}
.
\end{equation*}
The orthogonal projectors to these spaces are given in the following
proposition.
\begin{proposition} \label{proposition:projections}
Fix any $\XS \in \R^{m \times n}$ and $\XL \in \R^{m \times n}$.
For any matrix $M \in \R^{m \times n}$,
\[ [\P_{\Omega(X_S)}(M)]_{i,j}
= \left\{ \begin{array}{cl}
M_{i,j} & \text{if $(i,j) \in \supp(X_S)$} \\
0 & \text{otherwise}
\end{array} \right.
\]
for all $1 \leq i \leq m$ and $1 \leq j \leq n$,
and
\[ \P_{T(\XL)}(M) = UU^\top M + M VV^\top - UU^\top M VV^\top
\]
where $U$ and $V$ are the matrices of left and right singular vectors of
$X_L$.
\end{proposition}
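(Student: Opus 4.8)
The plan is to verify directly that each displayed formula defines the orthogonal projector onto the stated subspace, using the characterization recalled just above in the text: a bounded linear operator $\P$ equals $\P_W$ precisely when $\P\circ\P=\P$, $\P^*=\P$, and $\range(\P)=W$. So in each of the two cases I would check these three properties in turn (linearity and boundedness being immediate from the formulas).

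For $\Omega(\XS)$, write $\Phi$ for the ``zero out all entries off $\supp(\XS)$'' map given by the first formula. Idempotence is clear, since applying the same coordinate mask twice has the same effect as applying it once. Self-adjointness follows from the entrywise identity $\ang{\Phi(M),N}=\sum_{(i,j)\in\supp(\XS)}M_{i,j}N_{i,j}=\ang{M,\Phi(N)}$. And $\range(\Phi)=\Omega(\XS)$ because $\Phi(M)$ always vanishes off $\supp(\XS)$, while any $X\in\Omega(\XS)$ already does, so $\Phi(X)=X$.

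For $T(\XL)$, let $U\in\R^{m\times\br}$ and $V\in\R^{n\times\br}$ be matrices of left and right singular vectors of $\XL$ corresponding to its nonzero singular values, so $U^\top U=I$, $V^\top V=I$, the matrices $UU^\top$ and $VV^\top$ are symmetric idempotents, $\range(U)=\range(\XL)$, and $\range(V)=\range(\XL^\top)$. Write $\Psi(M):=UU^\top M+MVV^\top-UU^\top M VV^\top$. Idempotence and self-adjointness are routine term-by-term manipulations using $(UU^\top)^2=UU^\top$, $(VV^\top)^2=VV^\top$, the symmetry of these matrices, and cyclicity of the trace, which I will not spell out. The step that takes a bit more thought is identifying the range. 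For one inclusion, each term of $\Psi(M)$ has column space inside $\range(U)=\range(\XL)$ (the terms $UU^\top M$ and $UU^\top M VV^\top$) or row space inside $\range(V)=\range(\XL^\top)$ (the term $MVV^\top$), so $\Psi(M)\in T(\XL)$. For the other, given $X_1+X_2\in T(\XL)$ with $\range(X_1)\subseteq\range(U)$ and $\range(X_2^\top)\subseteq\range(V)$, one has $UU^\top X_1=X_1$ and $X_2 VV^\top=X_2$; substituting into the formula gives $\Psi(X_1)=X_1$ and $\Psi(X_2)=X_2$, so $\Psi$ restricts to the identity on $T(\XL)$ and hence $\range(\Psi)=T(\XL)$.

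I expect the only mildly delicate point to be this last one: unwinding the two-sided definition of $T(\XL)$ as a ``column-space piece'' plus a ``row-space piece'' and checking that $\Psi$ fixes each piece. An equivalent and perhaps cleaner route, which I would fall back on if the direct argument gets fiddly, is to first compute the orthogonal complement, $T(\XL)^\perp=\{N:U^\top N=0,\ NV=0\}=\{(\I-UU^\top)N(\I-VV^\top):N\in\R^{m\times n}\}$, observe that $M\mapsto(\I-UU^\top)M(\I-VV^\top)$ is visibly self-adjoint, idempotent, and onto this complement (hence equals $\P_{T(\XL)^\perp}$), and then take $\I$ minus it, whose expansion is exactly $\Psi$. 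Everything beyond that is bookkeeping with $U^\top U=I$, $V^\top V=I$, and the trace inner product.
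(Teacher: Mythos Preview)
Your proposal is correct. The paper actually states this proposition without proof---it is treated as a standard fact about these projection operators, with no accompanying \texttt{proof} environment---so there is nothing to compare against. Your argument supplies exactly the verification the paper omits: checking idempotence, self-adjointness, and range for each of the two formulas, with the alternative route via $\P_{T(\XL)^\perp}(M)=(I-UU^\top)M(I-VV^\top)$ being the slicker of the two options you outline (and the one implicitly used elsewhere in the paper, e.g.\ in the proof of Lemma~\ref{lemma:projections}).
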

\begin{lemma} \label{lemma:projections}
Under the setting of Proposition~\ref{proposition:projections},
\begin{align*}
\|\P_{\Omega(\XS)}(M)\|_\v1 & \leq \sqrt{|\supp(\XS)|}
\|\P_{\Omega(\XS)}(M)\|_\v2 \leq \sqrt{|\supp(\XS)|} \|M\|_\v2 \\
\|\P_{\Omega(\XS)}(M)\|_\v1 & \leq |\supp(\XS)|
\|\P_{\Omega(\XS)}(M)\|_\v\infty \leq |\supp(\XS)| \|M\|_\v\infty \\
\|\P_{T(\XL)}(M)\|_{2\to2} & \leq 2 \|M\|_{2\to2} \\
\|\P_{T(\XL)}(M)\|_* & \leq 2 \rank(\XL) \|M\|_{2\to2} \\
\|\P_{T(\XL)}(M)\|_\v2 & \leq 2\sqrt{\rank(\XL)} \|M\|_{2\to2}
.
\end{align*}
\end{lemma}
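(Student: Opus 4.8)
The plan is to treat the two inequalities involving $\P_{\Omega(\XS)}$ by elementary norm comparisons for sparse vectors, and the three inequalities involving $\P_{T(\XL)}$ by splitting the projector into two summands, each of rank at most $\rank(\XL)$ and spectral norm at most $\|M\|_{2\to2}$.

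For the sparse part I would set $N := \P_{\Omega(\XS)}(M)$ and note that, by Proposition~\ref{proposition:projections}, $N$ agrees with $M$ on $\supp(\XS)$ and vanishes off it; hence the vectorization of $N$ is supported on at most $k_0 := |\supp(\XS)|$ coordinates. The first inequality in each of the first two displayed lines is then just the comparison $\|x\|_1 \leq \sqrt{k_0}\,\|x\|_2$ (Cauchy--Schwarz) and $\|x\|_1 \leq k_0\,\|x\|_\infty$ (triangle inequality) for a vector $x$ with at most $k_0$ nonzero entries. The two remaining inequalities, $\|N\|_\v2 \leq \|M\|_\v2$ and $\|N\|_\v\infty \leq \|M\|_\v\infty$, hold because passing from $M$ to $N$ only zeroes out entries; equivalently, $\P_{\Omega(\XS)}$ is an orthogonal projector, hence a contraction in Frobenius norm, and it is obviously non-expansive in the entry-wise $\infty$-norm.

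For the low-rank part the key step is the algebraic identity
\[
\P_{T(\XL)}(M) \;=\; UU^\top M + MVV^\top - UU^\top M VV^\top \;=\; UU^\top M \;+\; (I-UU^\top)MVV^\top \;=:\; A + B ,
\]
valid because $MVV^\top - UU^\top M VV^\top = (I-UU^\top)MVV^\top$, where $U \in \R^{m\times r}$ and $V \in \R^{n\times r}$ have orthonormal columns and $r := \rank(\XL)$. Then $\rank(A) \leq \rank(U) = r$ and $\rank(B) \leq \rank(V) = r$, and since $UU^\top$, $VV^\top$, $I-UU^\top$ are orthogonal projectors they have unit spectral norm, so submultiplicativity of $\|\cdot\|_{2\to2}$ gives $\|A\|_{2\to2} \leq \|M\|_{2\to2}$ and $\|B\|_{2\to2} \leq \|M\|_{2\to2}$; the triangle inequality then yields $\|\P_{T(\XL)}(M)\|_{2\to2} \leq 2\|M\|_{2\to2}$. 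For the trace-norm and Frobenius bounds I would apply, separately to $A$ and to $B$, the elementary facts $\|C\|_* \leq \rank(C)\,\|C\|_{2\to2}$ and $\|C\|_\v2 \leq \sqrt{\rank(C)}\,\|C\|_{2\to2}$ (both immediate from writing these norms as the $\ell_1$ and $\ell_2$ norms of the singular-value vector), and then sum, obtaining $\|\P_{T(\XL)}(M)\|_* \leq r\|M\|_{2\to2} + r\|M\|_{2\to2} = 2r\|M\|_{2\to2}$ and likewise $\|\P_{T(\XL)}(M)\|_\v2 \leq 2\sqrt{r}\,\|M\|_{2\to2}$.

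I do not expect a genuine obstacle here, since every step is routine. The one place to be careful is to apply the rank bound to the two summands $A$ and $B$ \emph{before} combining them, rather than first bounding $\rank(\P_{T(\XL)}(M)) \leq 2r$ together with the already-proved $\|\P_{T(\XL)}(M)\|_{2\to2} \leq 2\|M\|_{2\to2}$ and multiplying, which would cost an extra factor of $2$ in the last two inequalities. It is also worth double-checking the decomposition $A+B$ and the rank identifications $\rank(A)\le r$, $\rank(B)\le r$, since the stated constants rely on extracting exactly one factor of $\rank(\XL)$ (resp.\ $\sqrt{\rank(\XL)}$) from each piece.
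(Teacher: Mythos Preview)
Your proposal is correct and follows essentially the same approach as the paper: the paper also invokes the support bound $|\supp(\P_{\Omega(\XS)}(M))| \leq |\supp(\XS)|$ and the orthogonal-projector property for the first two claims, and for the low-rank part uses exactly the decomposition $\P_{T(\XL)}(M) = UU^\top M + (I-UU^\top)MVV^\top$ together with the rank bound $\max\{\rank(UU^\top M),\rank((I-UU^\top)MVV^\top)\} \leq \rank(\XL)$ applied to each summand separately. Your caution about bounding the summands individually (rather than the sum) is precisely the point the paper exploits to get the stated constants.
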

\begin{proof}
The first and second claims rely on the fact that
$|\supp(\P_{\Omega(\XS)}(M))| \leq |\supp(\XS)|$, as well as the fact that
$\P_{\Omega(\XS)}$ is an orthonormal projector with respect to the inner
product that induces the $\|\cdot\|_\v2$ norm.
For the third claim, note that
\[ \|\P_{T(\XL)}(M)\|_{2\to2}
\leq \|UU^\top M\|_{2\to2} + \|(I-UU^\top)MVV^\top\|_{2\to2}
\leq 2\|M\|_{2\to2}
. \]
The remaining claims use a similar decomposition as the third claim as well
as the fact that
\[ \max\{\rank(UU^\top M),\rank((I-UU^\top)MVV^\top)\} \leq \rank(\XL) .
\qedhere
\]
\end{proof}

Define 
\[
\sign(\XS) \in \{-1,0,+1\}^{m\times n}
\]
to be the matrix whose $(i,j)$th entry is $\sign([\XS]_{i,j})$, 
and define 
\[
\orth(\XL) := UV^\top ,
\]
where $U$ and $V$, respectively, are
matrices of the left and right orthonormal singular vectors of $\XL$
corresponding to non-zero singular values.
The following proposition characterizes the subdifferential sets for the
non-smooth norms $\|\cdot\|_\v1$ and $\|\cdot\|_*$~\citep{W92}.
\begin{proposition} \label{proposition:subgradient}
The subdifferential set of $\XS \mapsto \|\XS\|_\v1$ is
\begin{equation*}
\partial_{\XS} (\|\XS\|_\v1)
= \{ G \in \R^{m \times n} : \|G\|_\v\infty \leq 1, \P_{\Omega(\XS)}(G)
= \sign(\XS) \}
;
\end{equation*}
the subdifferential set of $\XL \mapsto \|\XL\|_*$ is
\begin{equation*}
\partial_{\XL} (\|\XL\|_*)
= \{ G \in \R^{m \times n} : \|G\|_{2\to2} \leq 1, \P_{T(\XL)}(G) =
\orth(\XL)
\}
.
\end{equation*}
\end{proposition}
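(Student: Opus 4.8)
The plan is to derive both characterizations from the elementary fact that, for any norm $\|\cdot\|$ on $\R^{m\times n}$ with dual norm $\|\cdot\|_\diamond$ (cf.\ Proposition~\ref{proposition:holder}), one has $\partial(\|X\|) = \{ G \in \R^{m\times n} : \|G\|_\diamond \leq 1, \ \ang{G,X} = \|X\| \}$: if $G$ is a subgradient, then $\|Z\| \geq \|X\| + \ang{G, Z-X}$ for all $Z$; taking $Z=0$ and $Z=2X$ gives $\ang{G,X} = \|X\|$, and then $\ang{G,Z}\leq\|Z\|$ for all $Z$, so $\|G\|_\diamond\leq1$; the converse is immediate. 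Since $\|\cdot\|_\v\infty$ is dual to $\|\cdot\|_\v1$ and $\|\cdot\|_{2\to2}$ is dual to $\|\cdot\|_*$ (both listed among the dual pairs above), it then suffices to identify the constraint $\ang{G,X}=\|X\|$, under the respective dual-norm bound, with the stated projection condition. For $\|\cdot\|_\v1$ this is quick: when $\|G\|_\v\infty\leq1$ we have $G_{i,j}[\XS]_{i,j}\leq|[\XS]_{i,j}|$ entrywise, so $\ang{G,\XS}=\sum_{i,j}G_{i,j}[\XS]_{i,j}\leq\|\XS\|_\v1$, with equality iff $G_{i,j}=\sign([\XS]_{i,j})$ for every $(i,j)\in\supp(\XS)$, i.e.\ iff $\P_{\Omega(\XS)}(G)=\sign(\XS)$ — the entries off the support being unconstrained except by $\|G\|_\v\infty\leq1$.

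For $\|\cdot\|_*$, fix a compact SVD $\XL=U\Sigma V^\top$ with $U\in\R^{m\times\br}$, $V\in\R^{n\times\br}$ having orthonormal columns, $\Sigma=\operatorname{diag}(\sigma_1,\dots,\sigma_{\br})$ with $\sigma_k>0$, and $\br=\rank(\XL)$. Then $\|\XL\|_*=\sum_k\sigma_k$ and $\ang{G,\XL}=\tr\!\big((U^\top G V)^\top\Sigma\big)=\sum_{k=1}^{\br}\sigma_k(U^\top G V)_{kk}$. If $\|G\|_{2\to2}\leq1$, then $H:=U^\top G V$ satisfies $\|H\|_{2\to2}\leq1$ (as $\|U\|_{2\to2}=\|V\|_{2\to2}=1$), so $(H)_{kk}\leq1$ for each $k$ and hence $\ang{G,\XL}\leq\|\XL\|_*$, with equality iff $(H)_{kk}=1$ for all $k$. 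The key point is that $\|H\|_{2\to2}\leq1$ together with $(H)_{kk}=1$ for all $k$ forces $H=I_{\br}$: for each $k$, $\|He_k\|_2\leq1=\|e_k\|_2$ and $\ang{e_k,He_k}=1$, so the equality case of Cauchy--Schwarz gives $He_k=e_k$.

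It then remains to translate $U^\top G V=I_{\br}$ (with $\|G\|_{2\to2}\leq1$) into $\P_{T(\XL)}(G)=\orth(\XL)$. Completing $U,V$ to orthogonal matrices $[U\ U_\perp]$, $[V\ V_\perp]$ and writing $G$ in the corresponding $2\times2$ block form with blocks $A=U^\top G V$, $B=U^\top G V_\perp$, $C=U_\perp^\top G V$, $D=U_\perp^\top G V_\perp$, a direct computation from $\P_{T(\XL)}(G)=UU^\top G+G VV^\top-UU^\top G VV^\top$ shows that $\P_{T(\XL)}(G)$ has $(U,V)$-block form $\left[\begin{smallmatrix}A&B\\C&0\end{smallmatrix}\right]$, whereas $\orth(\XL)=UV^\top$ has block form $\left[\begin{smallmatrix}I_{\br}&0\\0&0\end{smallmatrix}\right]$; thus $\P_{T(\XL)}(G)=\orth(\XL)$ iff $A=I_{\br}$, $B=0$, $C=0$. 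Under $\|G\|_{2\to2}\leq1$ the single condition $A=I_{\br}$ already implies $B=C=0$: the first $\br$ columns of the block matrix form $\left[\begin{smallmatrix}I_{\br}\\C\end{smallmatrix}\right]$, whose squared spectral norm equals $\|I_{\br}+C^\top C\|_{2\to2}=1+\|C\|_{2\to2}^2$ and must be $\leq1$, forcing $C=0$; symmetrically the first $\br$ rows force $B=0$. Hence, under $\|G\|_{2\to2}\leq1$, the conditions $\ang{G,\XL}=\|\XL\|_*$ and $\P_{T(\XL)}(G)=\orth(\XL)$ coincide, which gives the second characterization. I expect the only mildly delicate step to be this last block-matrix bookkeeping — in particular, checking that the spectral-norm bound annihilates the off-diagonal blocks $B$ and $C$; the rest is routine.
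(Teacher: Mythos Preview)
Your proof is correct. The paper itself does not supply a proof of this proposition; it simply records the characterizations and attributes them to \citet{W92}. So there is no ``paper's own proof'' to compare against beyond that citation.

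That said, your argument is exactly the standard one underlying the cited result: reduce to the general description $\partial(\|X\|)=\{G:\|G\|_\diamond\leq1,\ \ang{G,X}=\|X\|\}$ for a norm with dual $\|\cdot\|_\diamond$, and then identify the equality condition in each case. The only point worth a second look is the trace-norm direction, and your block-matrix bookkeeping there is fine: with $A=U^\top G V=I_{\br}$ and $\|G\|_{2\to2}\leq1$, the column block $\bigl[\begin{smallmatrix}I_{\br}\\ C\end{smallmatrix}\bigr]$ has squared spectral norm $1+\|C\|_{2\to2}^2\leq1$, forcing $C=0$, and the analogous row argument kills $B$. One tiny stylistic remark: once you have shown $H=U^\top G V=I_{\br}$ via Cauchy--Schwarz, you have already established $A=I_{\br}$ in your block decomposition, so the subsequent step is really about showing that $A=I_{\br}$ plus the norm bound also pins down $B$ and $C$ --- which is exactly what you do. Nothing is missing.
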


The following lemma is a simple consequence of subgradient properties.
\begin{lemma} \label{lemma:subgradient}
Fix $\lambda > 0$ and define the function $g(\XS,\XL) := \lambda
\|\XS\|_\v1 + \|\XL\|_*$.
Consider any $(\bXS,\bXL)$ in $\R^{m \times n} \times \R^{m \times n}$.
If there exists $Q \in \R^{m \times n}$ such that: $Q$ is a subgradient of
$\lambda \|\XS\|_\v1$ at $\XS=\bXS$, $Q$ is a subgradient of $\|\XL\|_*$ at
$\XL=\bXL$, and $\|\P_{\Omega(\bXS)^\perp}(Q)\|_\v\infty \leq \lambda / c$ and
$\|\P_{T(\bXL)^\perp}(Q)\|_{2\to2} \leq 1 / c$ for some $c > 1$, then
\[
g(\XS,\XL) - g(\bXS,\bXL)
\ \geq \
\ang{Q,\XS + \XL - \bXS - \bXL}
+ (1-1/c) \left( \lambda \|\P_{\bO^\perp}(\XS - \bXS)\|_\v1
+ \|\P_{\bT^\perp}(\XL - \bXL)\|_* \right)
\]
for all $(X_S,X_L) \in \R^{m \times n} \times \R^{m \times n}$.
\end{lemma}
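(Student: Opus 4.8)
The plan is to use the first-order (subgradient) characterization of the convex function $g$ and then sharpen it. Since $Q$ is simultaneously a subgradient of $\lambda\|\cdot\|_\v1$ at $\bXS$ and of $\|\cdot\|_*$ at $\bXL$, the pair $(Q,Q)$ is a subgradient of $g$ at $(\bXS,\bXL)$, which already yields the ``bare'' bound $g(\XS,\XL)-g(\bXS,\bXL)\geq\ang{Q,\XS+\XL-\bXS-\bXL}$. The extra nonnegative term on the right-hand side of the lemma must come from exploiting the hypotheses $\|\P_{\bO^\perp}(Q)\|_\v\infty\leq\lambda/c$ and $\|\P_{\bT^\perp}(Q)\|_{2\to2}\leq 1/c$, i.e.\ that $Q$ lies strictly inside the dual-norm balls on the complementary subspaces. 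Concretely, I would prove two separate refined inequalities --- one for the $\ell_1$ term with $\DS:=\XS-\bXS$, one for the trace-norm term with $\DL:=\XL-\bXL$ --- and then add them.

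For the $\ell_1$ term: because $\P_{\bO}$ and $\P_{\bO^\perp}$ project onto matrices with disjoint supports, $\|\bXS+\DS\|_\v1=\|\bXS+\P_{\bO}(\DS)\|_\v1+\|\P_{\bO^\perp}(\DS)\|_\v1$, and the elementary pointwise bound $|a+b|\geq\sign(a)(a+b)$ gives $\|\bXS+\P_{\bO}(\DS)\|_\v1\geq\|\bXS\|_\v1+\ang{\sign(\bXS),\P_{\bO}(\DS)}$. Multiplying by $\lambda$ and using $\P_{\bO}(Q)=\lambda\sign(\bXS)$ (Proposition~\ref{proposition:subgradient}) to rewrite $\ang{Q,\DS}=\lambda\ang{\sign(\bXS),\P_{\bO}(\DS)}+\ang{\P_{\bO^\perp}(Q),\P_{\bO^\perp}(\DS)}$, the $\lambda\ang{\sign(\bXS),\P_{\bO}(\DS)}$ terms cancel and I am left with $\lambda\|\XS\|_\v1-\lambda\|\bXS\|_\v1-\ang{Q,\DS}\geq\lambda\|\P_{\bO^\perp}(\DS)\|_\v1-\ang{\P_{\bO^\perp}(Q),\P_{\bO^\perp}(\DS)}$. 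Bounding the cross term by H\"older (Proposition~\ref{proposition:holder} with the $\|\cdot\|_\v\infty/\|\cdot\|_\v1$ dual pair) together with $\|\P_{\bO^\perp}(Q)\|_\v\infty\leq\lambda/c$ gives $\lambda\|\XS\|_\v1-\lambda\|\bXS\|_\v1-\ang{Q,\DS}\geq(1-1/c)\lambda\|\P_{\bO^\perp}(\DS)\|_\v1$.

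For the trace-norm term I would follow the same template, but the role of the support split of $\|\cdot\|_\v1$ is played by the inequality $\|\bXL+\DL\|_*\geq\|\bXL\|_*+\ang{\orth(\bXL),\P_{\bT}(\DL)}+\|\P_{\bT^\perp}(\DL)\|_*$. This I would establish by variational duality: evaluate $\|\bXL+\DL\|_*=\sup_{\|Z\|_{2\to2}\leq1}\ang{Z,\bXL+\DL}$ at $Z:=\orth(\bXL)+W$, where $W$ is the trace-norm certificate of $\P_{\bT^\perp}(\DL)$ (its matrix of singular-vector outer products), using $\ang{\orth(\bXL),\bXL}=\|\bXL\|_*$, the fact that $W\in\bT^\perp$ so $\ang{W,\bXL}=0$, and $\ang{W,\P_{\bT^\perp}(\DL)}=\|\P_{\bT^\perp}(\DL)\|_*$. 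The one genuinely nontrivial point --- and the step I expect to be the \emph{main obstacle} --- is checking that this $Z$ is feasible, i.e.\ $\|\orth(\bXL)+W\|_{2\to2}\leq1$; this relies on the structure of $\bT^\perp$, namely that its elements have column space orthogonal to $\range(\bU)$ and row space orthogonal to $\range(\bV)$, so that $\orth(\bXL)$ and $W$ act on orthogonal pairs of subspaces and the spectral norm of the sum is the maximum of the two (both $\leq1$). Once this inequality is in hand the remainder mirrors the $\ell_1$ case: write $\ang{Q,\DL}=\ang{\orth(\bXL),\P_{\bT}(\DL)}+\ang{\P_{\bT^\perp}(Q),\P_{\bT^\perp}(\DL)}$ using $\P_{\bT}(Q)=\orth(\bXL)$, cancel, and bound the cross term via H\"older with the $\|\cdot\|_*/\|\cdot\|_{2\to2}$ dual pair and $\|\P_{\bT^\perp}(Q)\|_{2\to2}\leq1/c$ to obtain $\|\XL\|_*-\|\bXL\|_*-\ang{Q,\DL}\geq(1-1/c)\|\P_{\bT^\perp}(\DL)\|_*$. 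Adding the two refined inequalities and recalling $g(\XS,\XL)-g(\bXS,\bXL)=(\lambda\|\XS\|_\v1-\lambda\|\bXS\|_\v1)+(\|\XL\|_*-\|\bXL\|_*)$ and $\ang{Q,\XS+\XL-\bXS-\bXL}=\ang{Q,\DS}+\ang{Q,\DL}$ yields the claim. The remaining facts used ($\P_{\bO},\P_{\bO^\perp}$ separating supports, $\ang{\orth(\bXL),\bXL}=\|\bXL\|_*$, self-adjointness of the projectors) are routine.
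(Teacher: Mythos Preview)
Your proof is correct and arrives at exactly the two key inequalities the paper uses, via the same H\"older/duality endgame. The only presentational difference is that the paper handles both terms uniformly by writing $f(\bar X+\Delta)-f(\bar X)-\ang{Q,\Delta}\geq\sup_{G\in\partial f(\bar X)}\ang{G-Q,\Delta}$ and then observing that $G-Q=\P_{\bO^\perp}(G)-\P_{\bO^\perp}(Q)$ (respectively $\P_{\bT^\perp}(G)-\P_{\bT^\perp}(Q)$) since every subgradient shares the same projection onto $\bO$ (respectively $\bT$); evaluating the supremum then directly yields $\lambda\|\P_{\bO^\perp}(\DS)\|_\v1$ and $\|\P_{\bT^\perp}(\DL)\|_*$. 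Your explicit certificate $Z=\orth(\bXL)+W$ is precisely the maximizing $G$ in that supremum, and the spectral-norm check $\|\orth(\bXL)+W\|_{2\to2}\leq 1$ you flag as the main obstacle is exactly what underlies the paper's use of Proposition~\ref{proposition:subgradient} for $\|\cdot\|_*$ (it is what makes $\{\P_{\bT^\perp}(G):G\in\partial\|\bXL\|_*\}$ equal to the full unit $\|\cdot\|_{2\to2}$-ball in $\bT^\perp$); the paper simply absorbs this into the word ``similarly.'' Your route is slightly more hands-on but no less rigorous, and it makes explicit the one nontrivial geometric fact that the paper's proof leaves implicit.
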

\begin{proof}
Let $\bO := \Omega(\bXS)$, $\bT := T(\bXL)$, $\DS := \XS - \bXS$, and $\DL
: \XL - \bXL$.
For any subgradient $G \in \partial_{\XS}(\lambda\|\bXS\|_\v1)$, we have $G
- Q = \P_{\bO}(G) + \P_{\bO^\perp}(G) - \P_{\bO}(Q) - \P_{\bO^\perp}(Q) =
\P_{\bO^\top}(G) - \P_{\bO^\top}(Q)$.
Therefore
\begin{align*}
\lefteqn{
\lambda \|\bXS + \DS\|_\v1 - \lambda \|\bXS\|_\v1 - \ang{Q,\DS}
} \\
& \geq \sup \{ \ang{G,\DS}
- \ang{Q,\DS} : G \in \partial_{\XS}(\lambda \|\bXS\|_\v1) \}\\
& \geq \sup \{ \ang{G-Q,\DS} : G \in \partial_{\XS}(\lambda \|\bXS\|_\v1) \} \\
& = \sup \{ \ang{\P_{\bO^\perp}(G) - \P_{\bO^\perp}(Q),\DS}
: G \in \partial_{\XS}(\lambda \|\bXS\|_\v1) \} \\
& = \sup \{ \ang{\P_{\bO^\perp}(G) - \P_{\bO^\perp}(Q),\P_{\bO^\perp}(\DS)}
: G \in \partial_{\XS}(\lambda \|\bXS\|_\v1) \} \\
& = \sup \{ \ang{\P_{\bO^\perp}(G),\P_{\bO^\perp}(\DS)}
- \ang{\P_{\bO^\perp}(Q),\P_{\bO^\perp}(\DS)}
: G \in \partial_{\XS}(\lambda \|\bXS\|_\v1) \} \\
& = \lambda  \|\P_{\bO^\perp}(\DS)\|_\v1
- \ang{\P_{\bO^\perp}(Q),\P_{\bO^\perp}(\DS)} \\
& \geq \lambda  \|\P_{\bO^\perp}(\DS)\|_\v1
- \|\P_{\bO^\perp}(Q)\|_\v\infty \|\P_{\bO^\perp}(\DS)\|_\v1 \\
& \geq \lambda  (1-1/c) \|\P_{\bO^\perp}(\DS)\|_\v1
\end{align*}
where the second-to-last inequality uses the duality of $\|\cdot\|_\v1$ and
$\|\cdot\|_\v\infty$ and Proposition~\ref{proposition:operator-dual}.
Similarly,
\begin{align*}
\|\bXL - \DL\|_* - \|\bXL\|_* -\ang{Q,\DL}
& \geq (1-1/c) \|\P_{\bT^\perp}(\DL)\|_*
\end{align*}
by noting the duality of $\|\cdot\|_*$ and $\|\cdot\|_{2\to2}$.
Combining these gives the desired inequality.
\end{proof}

\section{Rank-sparsity incoherence} \label{section:incoherence}

Throughout this section, we fix a target $(\bXS,\bXL) \in \R^{m \times n}
\times \R^{m \times n}$, and let $\bO := \Omega(\bXS)$ and $\bT :=
T(\bXL)$.
Also let $\bU$ and $\bV$ be, respectively, matrices of the left and right
singular vectors of $\bXL$ corresponding to non-zero singular values.
Recall the following structural properties of $\bXS$ and $\bXL$:
\begin{align*}
\alpha(\rho) & \ := \ \|\sign(\bXS)\|_{\sharp(\rho)}
\ = \ \max\{ \rho \|\sign(\bXS)\|_{1\to1}, \ \rho^{-1}
\|\sign(\bXS)\|_{\infty\to\infty} \}
; \\
\beta(\rho) &
\ := \ \rho^{-1}\|\bU\bU^\top\|_\v\infty
+ \rho \|\bV\bV^\top\|_\v\infty
+ \|\bU\|_{2\to\infty} \|\bV\|_{2\to\infty}
; \\
\gamma & \ := \ \|\orth(\bXL)\|_\v\infty \ = \ \|\bU\bV^\top\|_\v\infty
.
\end{align*}
The parameter $\rho$ is a balancing parameter to handle disparity between
row and column dimensions.
The quantity $\alpha(\rho)$ is the maximum number of non-zero entries
in any single row or column.
The quantities $\beta(\rho)$ and $\gamma$ measure the coherence of the
singular vectors of $\bXL$, that is, the alignment of the singular vectors
with the coordinate bases.
For instance, under the conditions of Proposition~\ref{proposition:alpha-beta-gamma},
we have (with $\rho=\sqrt{n/m}$)
\[
\alpha(\rho) \leq c_1\sqrt{mn} , \quad
\beta\left(\rho \right)
\ \leq \ \frac{3c_2\rank(\bXL)}{\sqrt{mn}}
\quad \text{and} \quad
\gamma \ \leq \ \frac{c_2\rank(\bXL)}{\sqrt{mn}}
 \]
for some constants $c_1$ and $c_2$.

\subsection{Operator norms of projection operators}

We show that under the condition $\inf_{\rho > 0} \alpha(\rho)\beta(\rho) <
1$, the pair $(\bXS,\bXL)$ is identifiable from its sum $\bXS +
\bXL$~(Theorem~\ref{theorem:identifiability}).
This is achieved by proving that the composition of projection operators
$\P_{\bO}$ and $\P_{\bT}$ is a contraction as per
Lemma~\ref{lemma:composition}, which in turn implies that $\bO \cap \bT =
\{0\}$.

The following two lemmas bound the projection operators $\P_{\bO}$ and
$\P_{\bT}$ in complementary norms.
\begin{lemma} \label{lemma:sparse-bound}
For any $M \in \R^{m\times n}$ and $p \in \{1,\infty\}$, we have
\[ \|\P_{\bO}(M)\|_{p\to p} \leq \|\sign(\bXS)\|_{p \to p} \|M\|_\v\infty . \]
This implies, for all $\rho > 0$,
\[ \|\P_{\bO}\|_{\v\infty\to\sharp(\rho)} \leq \alpha(\rho) . \]
\end{lemma}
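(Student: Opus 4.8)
### Proof proposal for Lemma~\ref{lemma:sparse-bound}

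The plan is to exploit the fact that $\P_{\bO}(M)$ retains only the entries of $M$ whose locations lie in $\supp(\bXS)$, and then bound the induced $p\to p$ norm of this ``masked'' matrix by factoring through the sign pattern $\sign(\bXS)$. First I would fix $p \in \{1,\infty\}$ and a test vector $v$. For $p=1$, recall $\|\P_{\bO}(M)\|_{1\to1} = \max_j \|\P_{\bO}(M) e_j\|_1 = \max_j \sum_i |M_{ij}| \cdot \mathbf{1}[(i,j)\in\supp(\bXS)]$; the plan is to upper bound each $|M_{ij}|$ appearing here by $\|M\|_\v\infty$ and pull it out, leaving $\max_j \sum_i |\sign(\bXS)_{ij}| = \|\sign(\bXS)\|_{1\to1}$. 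The case $p=\infty$ is symmetric, using $\|\cdot\|_{\infty\to\infty} = \max_i \|M^\top e_i\|_1$ and summing over $j$ within row $i$. This gives the first displayed inequality.

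For the second displayed inequality, I would unwind the definition of the operator-on-operators norm $\|\P_{\bO}\|_{\v\infty\to\sharp(\rho)} = \max_{M\neq 0} \|\P_{\bO}(M)\|_{\sharp(\rho)} / \|M\|_\v\infty$. Using the definition $\|N\|_{\sharp(\rho)} = \max\{\rho\|N\|_{1\to1},\ \rho^{-1}\|N\|_{\infty\to\infty}\}$ applied to $N = \P_{\bO}(M)$, together with the two bounds just established ($\|\P_{\bO}(M)\|_{1\to1} \le \|\sign(\bXS)\|_{1\to1}\|M\|_\v\infty$ and likewise for $\infty\to\infty$), we get
\[
\|\P_{\bO}(M)\|_{\sharp(\rho)} \le \max\{\rho\|\sign(\bXS)\|_{1\to1},\ \rho^{-1}\|\sign(\bXS)\|_{\infty\to\infty}\}\,\|M\|_\v\infty = \|\sign(\bXS)\|_{\sharp(\rho)}\,\|M\|_\v\infty,
\]
and by definition $\|\sign(\bXS)\|_{\sharp(\rho)} = \alpha(\rho)$. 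Dividing by $\|M\|_\v\infty$ and taking the max over $M$ yields $\|\P_{\bO}\|_{\v\infty\to\sharp(\rho)} \le \alpha(\rho)$.

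There is no real obstacle here; this lemma is essentially a bookkeeping exercise in matching up definitions. The only point requiring a little care is making sure the entry-wise $\infty$-norm $\|M\|_\v\infty$ (maximum absolute entry) is the right quantity to factor out — it is, precisely because $\P_{\bO}$ acts entry-wise, so the only information about $M$ that survives is its entries at the prescribed locations, each of which is at most $\|M\|_\v\infty$ in magnitude. One should also note the inequality $|\sign(\bXS)_{ij}| \ge \mathbf{1}[(i,j)\in\supp(\bXS)]$ holds with equality, so the sign-pattern bound is tight in the sense of the masking, and no slack is lost at this step.
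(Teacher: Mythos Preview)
Your proposal is correct and follows essentially the same approach as the paper: bound each retained entry of $\P_{\bO}(M)$ by $\|M\|_\v\infty$ to reduce the $p\to p$ norm to that of the $\{0,1\}$ support indicator $|\sign(\bXS)|$, and then read off the second claim from the definitions of $\|\cdot\|_{\sharp(\rho)}$ and $\alpha(\rho)$. The paper phrases the first step slightly more abstractly (writing the bound for a general test vector $v$ with $\|v\|_p\le 1$ rather than invoking the explicit column/row-sum formulas), but the content is the same.
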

\begin{proof}
Define $s(\XS) \in \{0,1\}^{m \times n}$ to be the entry-wise absolute
value of $\sign(\XS)$.
We have
\begin{align*}
\|\P_{\bO}(M)\|_{p \to p}
& = \max\{ \|\P_{\bO}(M)v\|_p : \|v\|_p \leq 1 \} \\
& \leq \|\P_\bO(M)\|_\v\infty \max\{ \|s(\P_{\bO}(M))v\|_p : \|v\|_p \leq 1 \} \\
& \leq \|M\|_\v\infty \max\{ \|s(\bXS)v\|_p : \|v\|_p \leq 1 \} \\
& = \|M\|_\v\infty \|\sign(\bXS)\|_{p \to p}
.
\end{align*}
The second part follows from the definitions of $\|\cdot\|_{\sharp(\rho)}$ and
$\alpha(\rho)$.
\end{proof}

\begin{lemma} \label{lemma:lowrank-bound}
For any $M \in \R^{m\times n}$, we have
\[ \|\P_{\bT}(M)\|_\v\infty \leq
\|\bU\bU^\top\|_\v\infty \|M\|_{1\to1}
+ \|\bV\bV^\top\|_\v\infty \|M\|_{\infty\to\infty}
+ \|\bU\|_{2\to\infty} \|\bV\|_{2\to\infty} \|M\|_{2\to2}
. \]
This implies, for all $\rho > 0$,
\[ \|\P_{\bT}\|_{\sharp(\rho)\to\v\infty} \leq \beta(\rho) . \]
\end{lemma}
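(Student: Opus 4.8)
The plan is to expand $\P_\bT(M) = \bU\bU^\top M + M\bV\bV^\top - \bU\bU^\top M \bV\bV^\top$ and bound the entry-wise $\infty$-norm of each of the three terms separately, then apply the triangle inequality. The key observation throughout is that an individual entry $[AB]_{i,j} = e_i^\top A B e_j$ factors through inner products of rows of $A$ and columns of $B$, and these can be controlled by operator norms once we insert a projection onto the appropriate singular subspace.

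First I would handle the term $\bU\bU^\top M$. Write $[\bU\bU^\top M]_{i,j} = (\bU^\top e_i)^\top (\bU^\top M e_j)$ --- wait, more directly, $[\bU\bU^\top M]_{i,j} = e_i^\top \bU\bU^\top (Me_j)$, so bounding $|[\bU\bU^\top M]_{i,j}| \le \|\bU\bU^\top e_i\|_1 \cdot \|Me_j\|_\infty$ is one route, but the cleaner bound pairs $\|e_i^\top \bU\bU^\top\|_\infty$-type quantities against $\|Me_j\|_1$. Concretely, $|e_i^\top \bU\bU^\top M e_j| \le \|e_i^\top \bU\bU^\top\|_\infty \|Me_j\|_1 \le \|\bU\bU^\top\|_\v\infty \|M\|_{1\to1}$, where the last step uses that the $(i,\cdot)$ row of $\bU\bU^\top$ has $\infty$-norm at most $\|\bU\bU^\top\|_\v\infty$ and that $\max_j\|Me_j\|_1 = \|M\|_{1\to1}$. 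Symmetrically, $|[M\bV\bV^\top]_{i,j}| = |e_i^\top M \bV\bV^\top e_j| \le \|e_i^\top M\|_1 \|\bV\bV^\top e_j\|_\infty \le \|M\|_{\infty\to\infty}\|\bV\bV^\top\|_\v\infty$, using $\max_i \|e_i^\top M\|_1 = \|M\|_{\infty\to\infty}$.

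The cross term $\bU\bU^\top M \bV\bV^\top$ is the one requiring the spectral norm. Write $[\bU\bU^\top M \bV\bV^\top]_{i,j} = (\bU^\top e_i)^\top (\bU^\top M \bV)(\bV^\top e_j)$. Then $|[\bU\bU^\top M \bV\bV^\top]_{i,j}| \le \|\bU^\top e_i\|_2 \cdot \|\bU^\top M \bV\|_{2\to2} \cdot \|\bV^\top e_j\|_2 \le \|\bU\|_{2\to\infty}\|\bV\|_{2\to\infty}\|M\|_{2\to2}$, since $\|\bU^\top M \bV\|_{2\to2} \le \|\bU\|_{2\to2}\|M\|_{2\to2}\|\bV\|_{2\to2} = \|M\|_{2\to2}$ (the singular vectors are orthonormal so $\bU,\bV$ have spectral norm $1$), and $\max_i\|\bU^\top e_i\|_2 = \|\bU\|_{2\to\infty}$ by the alternative characterization recorded in the preliminaries. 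Taking the maximum over $(i,j)$ and summing the three bounds gives the first displayed inequality; the second follows from the definitions of $\|\cdot\|_{\sharp(\rho)\to\v\infty}$ and $\beta(\rho)$, noting that $\rho^{-1}\|M\|_{\infty\to\infty}, \rho\|M\|_{1\to1}$, and $\|M\|_{2\to2}$ are all at most $\|M\|_{\sharp(\rho)}$ (the last by Lemma~\ref{lemma:spectral-bound}), so that each of the three terms is bounded by the corresponding term of $\beta(\rho)$ times $\|M\|_{\sharp(\rho)}$.

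The only mild obstacle is bookkeeping: making sure the factors of $\rho$ land correctly when passing from the first inequality to the operator-norm statement, and correctly invoking Lemma~\ref{lemma:spectral-bound} for the cross term so that $\|M\|_{2\to2} \le \|M\|_{\sharp(\rho)}$. There is no real analytic difficulty --- it is three applications of Hölder-type inequalities for matrix-vector products together with the standard identities $\|M\|_{1\to1} = \max_j\|Me_j\|_1$, $\|M\|_{\infty\to\infty} = \max_i\|e_i^\top M\|_1$, and $\|\bU\|_{2\to\infty} = \max_i\|\bU^\top e_i\|_2$.
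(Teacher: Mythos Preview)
Your proposal is correct and follows essentially the same approach as the paper: expand $\P_{\bT}(M)$ into its three summands, bound each entry-wise via the same H\"older/Cauchy--Schwarz pairings you describe, and then deduce the operator-norm statement by inserting the $\rho$ factors and invoking Lemma~\ref{lemma:spectral-bound} for the cross term. The paper's phrasing of the first two bounds routes through $\|M^\top\|_{\infty\to\infty}$ and $\|M\|_{\infty\to\infty}$ rather than explicit $(e_i,e_j)$ pairings, but this is the same computation.
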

\begin{proof}
We have $\|\P_{\bT}(M)\|_\v\infty = \|\bU\bU^\top M + M\bV\bV^\top -
\bU\bU^\top M \bV\bV^\top\|_\v\infty \leq \|\bU\bU^\top M\|_\v\infty +
\|M\bV\bV^\top\|_\v\infty + \|\bU\bU^\top M \bV\bV^\top\|_\v\infty$ by the
triangle inequality.
The bounds for each term now follow from the definitions:
\begin{align*}
\|\bU\bU^\top M\|_\v\infty
& = \max_i \|M^\top \bU\bU^\top e_i\|_\infty \\
& \leq \|M^\top\|_{\infty\to\infty} \max_i \|\bU\bU^\top e_i\|_\infty \\
& = \|M\|_{1\to1} \|\bU\bU^\top\|_\v\infty
;
\end{align*}
\begin{align*}
\|M\bV\bV^\top\|_\v\infty
& = \max_j \|M\bV\bV^\top e_j\|_\infty \\
& \leq \|M\|_{\infty\to\infty} \max_j \|\bV\bV^\top e_j\|_\infty \\
& = \|M\|_{\infty\to\infty} \|\bV\bV\|_\v\infty
;
\end{align*}
and
\begin{align*}
\|\bU\bU^\top M\bV\bV^\top\|_\v\infty
& = \max_{i,j} |e_i^\top \bU(\bU^\top M \bV) \bV^\top e_j| \\
& \leq \max_{i,j} \|\bU^\top e_i\|_2 \|\bU^\top M \bV\|_{2\to2} \|\bV^\top
e_j\|_2
\quad \text{(Cauchy-Schwarz)} \\
& \leq \|M\|_{2\to2} \|\bU\|_{2\to\infty} \|\bV\|_{2\to\infty} \\
& \leq \|M\|_{\sharp(\rho)} \|\bU\|_{2\to\infty} \|\bV\|_{2\to\infty}
\quad \text{(Lemma~\ref{lemma:spectral-bound})}
.
\end{align*}
The second part now follows the definition of $\beta(\rho)$.
\end{proof}

Now we show that the composition of $\P_{\bO}$ and $\P_{\bT}$ gives a
contraction under the certain norms and their duals.
\begin{lemma} \label{lemma:composition-bound}
For all $\rho>0$,
\begin{enumerate}
\item $\|\P_{\bO} \circ \P_{\bT}\|_{\sharp(\rho)\to\sharp(\rho)}
\leq \alpha(\rho) \beta(\rho)$;
\item $\|\P_{\bT} \circ \P_{\bO}\|_{\v\infty\to\v\infty} \leq \alpha(\rho)
\beta(\rho)$;
\end{enumerate}
\end{lemma}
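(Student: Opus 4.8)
The plan is to derive both bounds directly from the two complementary operator-norm bounds just established, namely Lemma~\ref{lemma:sparse-bound}, which gives $\|\P_{\bO}\|_{\v\infty\to\sharp(\rho)} \leq \alpha(\rho)$, and Lemma~\ref{lemma:lowrank-bound}, which gives $\|\P_{\bT}\|_{\sharp(\rho)\to\v\infty} \leq \beta(\rho)$. The key observation is that these two bounds are set up precisely so that the intermediate norms match when the operators are composed: $\P_{\bT}$ maps out of $\|\cdot\|_{\sharp(\rho)}$ into $\|\cdot\|_\v\infty$, and $\P_{\bO}$ maps out of $\|\cdot\|_\v\infty$ into $\|\cdot\|_{\sharp(\rho)}$.

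For the first claim, I would apply submultiplicativity of induced operator norms: for any $M$,
\[
\|\P_{\bO}(\P_{\bT}(M))\|_{\sharp(\rho)}
\leq \|\P_{\bO}\|_{\v\infty\to\sharp(\rho)} \cdot \|\P_{\bT}(M)\|_\v\infty
\leq \alpha(\rho) \cdot \|\P_{\bT}\|_{\sharp(\rho)\to\v\infty} \cdot \|M\|_{\sharp(\rho)}
\leq \alpha(\rho)\beta(\rho) \cdot \|M\|_{\sharp(\rho)},
\]
which is exactly $\|\P_{\bO} \circ \P_{\bT}\|_{\sharp(\rho)\to\sharp(\rho)} \leq \alpha(\rho)\beta(\rho)$.

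For the second claim, I see two routes. The direct one mirrors the first: $\|\P_{\bT}(\P_{\bO}(M))\|_\v\infty \leq \|\P_{\bT}\|_{\sharp(\rho)\to\v\infty} \cdot \|\P_{\bO}(M)\|_{\sharp(\rho)} \leq \beta(\rho) \cdot \alpha(\rho) \cdot \|M\|_\v\infty$, using Lemma~\ref{lemma:sparse-bound} in the form $\|\P_{\bO}\|_{\v\infty\to\sharp(\rho)} \leq \alpha(\rho)$ and Lemma~\ref{lemma:lowrank-bound}. Alternatively, one can invoke duality: since $\P_{\bO}$ and $\P_{\bT}$ are self-adjoint (orthogonal projectors with respect to $\ang{\cdot,\cdot}$), $(\P_{\bT}\circ\P_{\bO})^* = \P_{\bO}\circ\P_{\bT}$, and by Proposition~\ref{proposition:operator-dual} together with the fact that $\|\cdot\|_{\sharp(\rho)}$ and $\|\cdot\|_{\flat(\rho)}$ are dual (and $\|\cdot\|_\v1$, $\|\cdot\|_\v\infty$ are dual), the $\v\infty\to\v\infty$ norm of $\P_{\bT}\circ\P_{\bO}$ equals the $\v1\to\v1$ norm of $\P_{\bO}\circ\P_{\bT}$. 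That detour is heavier than needed, so I would present the direct argument. I do not anticipate a genuine obstacle here; the only thing to be careful about is keeping the roles of $\|\cdot\|_\v\infty$ and $\|\cdot\|_{\sharp(\rho)}$ straight so that the two lemmas chain correctly, and noting that both hold for every $\rho > 0$ so the composition bound does too.
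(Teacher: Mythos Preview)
Your proposal is correct and matches the paper's approach exactly: the paper simply states that the lemma is ``Immediate from Lemma~\ref{lemma:sparse-bound} and Lemma~\ref{lemma:lowrank-bound},'' which is precisely your chaining of $\|\P_{\bO}\|_{\v\infty\to\sharp(\rho)} \leq \alpha(\rho)$ with $\|\P_{\bT}\|_{\sharp(\rho)\to\v\infty} \leq \beta(\rho)$ via submultiplicativity of induced operator norms. Your direct argument for the second claim is the intended one; the duality alternative you mention is in fact how the paper proves the companion Lemma~\ref{lemma:dual-composition-bound}, so you were right to set it aside here.
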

\begin{proof}
Immediate from Lemma~\ref{lemma:sparse-bound} and
Lemma~\ref{lemma:lowrank-bound}.
\end{proof}

\begin{lemma} \label{lemma:dual-composition-bound}
For all $\rho>0$,
\begin{enumerate}
\item $\|\P_{\bT} \circ \P_{\bO}\|_{\flat(\rho)\to\flat(\rho)}
\leq \alpha(\rho) \beta(\rho)$;
\item $\|\P_{\bO} \circ \P_{\bT}\|_{\v1\to\v1}
\leq \alpha(\rho) \beta(\rho)$.
\end{enumerate}
\end{lemma}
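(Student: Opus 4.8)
The plan is to derive Lemma~\ref{lemma:dual-composition-bound} from Lemma~\ref{lemma:composition-bound} by a duality argument, using Proposition~\ref{proposition:operator-dual} together with the dual-norm pairs listed earlier. The key observation is that both statements in Lemma~\ref{lemma:dual-composition-bound} are precisely the adjoints (in the operator-norm sense) of the two statements in Lemma~\ref{lemma:composition-bound}, once we recall that $\P_\bO$ and $\P_\bT$ are self-adjoint with respect to $\ang{\cdot,\cdot}$.

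\medskip

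\textbf{Step 1: adjoints of compositions.} First I would record that since $\P_\bO^* = \P_\bO$ and $\P_\bT^* = \P_\bT$ (they are orthogonal projectors), we have $(\P_\bO \circ \P_\bT)^* = \P_\bT^* \circ \P_\bO^* = \P_\bT \circ \P_\bO$, and symmetrically $(\P_\bT \circ \P_\bO)^* = \P_\bO \circ \P_\bT$. This reduces the lemma to transferring each operator-norm bound to its adjoint under the appropriate change of norms.

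\medskip

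\textbf{Step 2: apply Proposition~\ref{proposition:operator-dual}.} For the first claim, apply Proposition~\ref{proposition:operator-dual} with $\T = \P_\bO \circ \P_\bT$, taking $\|\cdot\|_\spadesuit = \|\cdot\|_\clubsuit = \|\cdot\|_{\sharp(\rho)}$. Its dual is $\|\cdot\|_{\flat(\rho)}$ (by definition of $\|\cdot\|_{\flat(\rho)}$, listed among the dual pairs), so
\[
\|\P_\bT \circ \P_\bO\|_{\flat(\rho)\to\flat(\rho)}
= \|(\P_\bO \circ \P_\bT)^*\|_{\flat(\rho)\to\flat(\rho)}
= \|\P_\bO \circ \P_\bT\|_{\sharp(\rho)\to\sharp(\rho)}
\leq \alpha(\rho)\beta(\rho),
\]
where the last inequality is part~1 of Lemma~\ref{lemma:composition-bound}. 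For the second claim, apply Proposition~\ref{proposition:operator-dual} with $\T = \P_\bT \circ \P_\bO$ and $\|\cdot\|_\spadesuit = \|\cdot\|_\clubsuit = \|\cdot\|_\v\infty$, whose dual is $\|\cdot\|_\v1$ (the $\v{p}$--$\v{q}$ pair with $p=\infty$, $q=1$), giving
\[
\|\P_\bO \circ \P_\bT\|_{\v1\to\v1}
= \|(\P_\bT \circ \P_\bO)^*\|_{\v1\to\v1}
= \|\P_\bT \circ \P_\bO\|_{\v\infty\to\v\infty}
\leq \alpha(\rho)\beta(\rho),
\]
using part~2 of Lemma~\ref{lemma:composition-bound}.

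\medskip

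\textbf{Main obstacle.} There is no serious obstacle: the content is entirely in Lemma~\ref{lemma:composition-bound}, which is already established, and the rest is bookkeeping with adjoints and dual norms. The one point that merits a line of care is checking that Proposition~\ref{proposition:operator-dual} is being invoked with the correct pairing of norms — i.e.\ that the "input" norm of $\T^*$ is dual to the \emph{output} norm of $\T$ and vice versa — but here both input and output norms of each $\T$ coincide, so the bookkeeping collapses to the single fact that $\sharp(\rho)$ is dual to $\flat(\rho)$ and $\v\infty$ is dual to $\v1$. One should also note in passing that $\P_\bO$ and $\P_\bT$ are genuinely orthogonal projectors with respect to $\ang{\cdot,\cdot}$ — this is exactly why Proposition~\ref{proposition:projections} describes them as such — so the self-adjointness used in Step~1 is justified.
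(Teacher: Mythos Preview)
Your proposal is correct and follows essentially the same approach as the paper's proof: both first observe that $(\P_\bO \circ \P_\bT)^* = \P_\bT \circ \P_\bO$ by self-adjointness of the projectors, and then invoke Proposition~\ref{proposition:operator-dual} together with the dual pairs $(\sharp(\rho),\flat(\rho))$ and $(\v\infty,\v1)$ to transfer the bounds from Lemma~\ref{lemma:composition-bound}. Your write-up is slightly more detailed in spelling out the norm pairings, but the argument is the same.
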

\begin{proof}
First note that $(\P_{\bT} \circ \P_{\bO})^* = \P_{\bO}^* \circ \P_{\bT}^*
= \P_{\bO} \circ \P_{\bT}$ because $\P_{\bO}$ and $\P_{\bT}$ are
self-adjoint, and similarly $(\P_{\bO} \circ \P_{\bT})^* = \P_{\bT} \circ
\P_{\bO}$.
Now the claim follows by Proposition~\ref{proposition:operator-dual} and
Lemma~\ref{lemma:composition-bound}, using the facts that
$\|\cdot\|_{\flat(\rho)}$ is dual to $\|\cdot\|_{\sharp(\rho)}$ and that
$\|\cdot\|_\v1$ is dual to $\|\cdot\|_\v\infty$.
\end{proof}
Note that Lemma~\ref{lemma:composition} is encompassed by
Lemma~\ref{lemma:dual-composition-bound}.
Another consequence of these contraction properties is the following
uncertainty principle, analogous to one stated by~\citet{CSPW09},
which effectively states that a matrix $X$ cannot have both
$\|\sign(X)\|_{\sharp(\rho)}$ and $\|\orth(X)\|_\v\infty$ simultaneously
small.
\begin{theorem} \label{theorem:uncertainty}
If $X = \bXS = \bXL \neq 0$, then $\inf_{\rho > 0}
\alpha(\rho)\beta(\rho) \geq 1$.
\end{theorem}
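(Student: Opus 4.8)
The plan is to read Theorem~\ref{theorem:uncertainty} as the contrapositive of Theorem~\ref{theorem:identifiability} (equivalently, of Lemma~\ref{lemma:composition}) applied to the specific target pair $(\bXS,\bXL) = (X,X)$. The only thing that needs checking is that, for this choice, the matrix $X$ is a \emph{nonzero} element of the intersection $\bO \cap \bT$, where $\bO = \Omega(X)$ and $\bT = T(X)$.

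First I would verify $X \in \bO$: with $\bXS = X$ we have $\bO = \Omega(X) = \{M : \supp(M) \subseteq \supp(X)\}$, which trivially contains $X$. Next I would verify $X \in \bT$: with $\bXL = X$, write a thin SVD $X = \bU\Sigma\bV^\top$ where $\bU$ and $\bV$ collect the left and right singular vectors belonging to the nonzero singular values, so that $\bU\bU^\top X = X = X\bV\bV^\top$. Then the projection formula from Proposition~\ref{proposition:projections} gives $\P_\bT(X) = \bU\bU^\top X + X\bV\bV^\top - \bU\bU^\top X\bV\bV^\top = X + X - X = X$, so $X \in \bT$ (equivalently, $\range(X)\subseteq\range(\bXL)$, so $X$ meets the definition of $T(\bXL)$ with $X_1 = X$, $X_2 = 0$). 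At the same point $\P_\bO(X) = X$ since $X\in\bO$.

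Finally I would apply Lemma~\ref{lemma:dual-composition-bound}(2) (i.e.\ Lemma~\ref{lemma:composition}) to $M = X$ for an arbitrary $\rho > 0$, obtaining $\|\P_\bO(\P_\bT(X))\|_\v1 \le \alpha(\rho)\beta(\rho)\,\|X\|_\v1$. Since $\P_\bT(X) = X$ and $\P_\bO(X) = X$, the left-hand side equals $\|X\|_\v1$, which is strictly positive because $X \neq 0$. Dividing by $\|X\|_\v1$ yields $\alpha(\rho)\beta(\rho) \ge 1$ for every $\rho > 0$, hence $\inf_{\rho>0}\alpha(\rho)\beta(\rho) \ge 1$, as claimed. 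There is no genuine obstacle in this argument; the only mild subtlety is the (routine) verification that the tangent space $\bT = T(X)$ contains $X$ itself, which is immediate from the projection formula once one uses that $\bU$ and $\bV$ are exactly the singular-vector matrices of $X$. Everything else is a direct invocation of the already-established contraction bound.
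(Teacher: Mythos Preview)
Your proposal is correct and follows exactly the paper's own argument: the paper simply observes that $X$ is a nonzero element of $\bO \cap \bT$ and invokes the contrapositive of Theorem~\ref{theorem:identifiability}. Your write-up just makes the (routine) verifications $X \in \bO$ and $X \in \bT$ explicit and applies Lemma~\ref{lemma:composition} directly rather than citing Theorem~\ref{theorem:identifiability}, which amounts to the same thing.
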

\begin{proof}
Note that the non-zero element $X$ lives in $\bO \cap \bT$, so we get the
conclusion by the contrapositive of Theorem~\ref{theorem:identifiability}.
\end{proof}

\subsection{Dual certificate}

The incoherence properties allow us to construct an approximate dual
certificate $(Q_\bO, Q_\bT) \in \bO \times \bT$ that is central to the
analysis of the optimization problems~\eqref{eq:opt1} and~\eqref{eq:opt2}.

The certificate is constructed as the solution to the linear system
\[
\left\{ \begin{array}{rcr}
\P_{\bO}(Q_\bO + Q_\bT + \mu^{-1} E) & = & \lambda \sign(\bXS) \\
\P_{\bT}(Q_\bO + Q_\bT + \mu^{-1} E) & = & \orth(\bXL)
\end{array} \right.
\]
for some matrix $E \in \R^{m \times n}$; this can be equivalently written
as
\[
\left[ \begin{array}{cc}
\I & \P_{\bO} \\
\P_{\bT} & \I
\end{array} \right]
\left[ \begin{array}{c}
Q_\bO \\
Q_\bT \\
\end{array} \right]
=
\left[ \begin{array}{r}
\lambda \sign(\bXS) - \mu^{-1} \P_{\bO}(E) \\
\orth(\bXL) - \mu^{-1} \P_{\bT}(E)
\end{array} \right]
.
\]
We show the existence of the dual certificate $(Q_\bO,Q_\bT)$ under the
conditions~\eqref{eq:opt2-cond1}, \eqref{eq:opt2-cond2}, and
\eqref{eq:opt2-cond3} relative to an arbitrary matrix $E$.
Recall that the recovery guarantees for the constrained formulation
requires the conditions with $E = 0$, while the guarantees for the
regularized formulation takes $E = Y - (\bXS + \bXL)$.
\begin{theorem} \label{theorem:dual-cert}
Pick any $c > 1$, $\rho > 0$, and $E \in \R^{m \times n}$.
Let $\bk := |\supp(\bXS)|$ and $\br := \rank(\bXL)$.
Let 
\begin{align*}
\epsilon_{2\to2} & := \|E\|_{2\to2} \\
\epsilon_\v\infty & := \|E\|_{\v\infty} + \|\P_\bT(E)\|_{\v\infty} .
\end{align*}
If the following conditions hold:
\begin{align}
& \alpha(\rho) \beta(\rho) < 1
\label{eq:cond1} \\
&
\lambda \ \leq \
\frac{(1 - \alpha(\rho)\beta(\rho))(1-c \cdot \mu^{-1} \epsilon_{2\to2})
- c \cdot \alpha(\rho) \mu^{-1} \epsilon_\v\infty
- c \cdot \alpha(\rho) \gamma}
{c \cdot \alpha(\rho)}
\label{eq:cond2} \\
& \lambda \ \geq \ c \cdot \frac{\gamma
+ \mu^{-1} 
 (2 - \alpha(\rho) \beta(\rho)) \epsilon_\v\infty}
{1 - \alpha(\rho) \beta(\rho) - c \cdot \alpha(\rho) \beta(\rho)}
\ > \ 0
\label{eq:cond3}
\end{align}
(these are a restatement of~\eqref{eq:opt2-cond1}, \eqref{eq:opt2-cond2},
and \eqref{eq:opt2-cond3}),
then
\begin{align*}
Q_\bO & := (\I - \P_{\bO} \circ \P_{\bT})^{-1}
\left( \lambda \sign(\bXS)
- \P_{\bO}(\orth(\bXL))
- \mu^{-1} (\P_{\bO} \circ \P_{\bT^\perp})(E)
\right)
\in \bO
\quad \text{and}
\\
Q_\bT & := (\I - \P_{\bT} \circ \P_{\bO})^{-1}
\left( \orth(\bXL)
- \lambda \P_{\bT}(\sign(\bXS))
- \mu^{-1} (\P_{\bT} \circ \P_{\bO^\perp})(E)
\right)
\in \bT
\end{align*}
are well-defined and satisfy
\[
\begin{array}{rcr}
\P_{\bO}(Q_\bO + Q_\bT + \mu^{-1} E) & = & \lambda \sign(\bXS)\\
\P_{\bT}(Q_\bO + Q_\bT + \mu^{-1} E) & = & \orth(\bXL)
\end{array}
\]
and
\[
\begin{array}{lcl}
\|\P_{\bO^\perp} (Q_\bO + Q_\bT + \mu^{-1} E) \|_\v\infty & \leq & \lambda/c \\
\|\P_{\bT^\perp} (Q_\bO + Q_\bT + \mu^{-1} E)\|_{2\to2} & \leq & 1/c .
\end{array}
\]
Moreover,
\begin{align*}
\|Q_\bO \|_{2\to2}
& \leq
\frac{\alpha(\rho)}{1-\alpha(\rho)\beta(\rho)} \cdot
\left(
\lambda 
+  \gamma
+  \mu^{-1} \epsilon_\v\infty
\right) \\
\|Q_\bT \|_{2\to2}
& \leq
\frac{2\alpha(\rho)}{1-\alpha(\rho)\beta(\rho)} \cdot
\left(
\lambda 
+  \gamma
+  \mu^{-1} \epsilon_\v\infty
\right) 
+ 1 + 2\mu^{-1}\epsilon_{2\to2}
\\
\|Q_\bT\|_* & \leq 2\br \|Q_\bT\|_{2\to2}
\\
\|Q_\bT \|_{\v\infty} 
& \leq
\frac{1}{1-\alpha(\rho)\beta(\rho)} \cdot
\left(
\lambda 
+  \gamma
+  \mu^{-1} \epsilon_\v\infty
\right) \\
\|Q_\bO \|_{\v\infty} 
& \leq
\frac{2}{1-\alpha(\rho)\beta(\rho)} \cdot
\left(
\lambda 
+  \gamma
+  \mu^{-1} \epsilon_\v\infty
\right) 
\\
\|Q_\bO\|_\v1 & \leq \bk \|Q_\bO\|_\v\infty
\\
\|Q_\bO + Q_\bT\|_\v2^2
& \leq
\lambda \|Q_\bO\|_\v1
\left( 1 + \mu^{-1} \lambda^{-1} \epsilon_\v\infty \right)
+ \|Q_\bT\|_*
\left( 1 + 2\mu^{-1}\epsilon_{2\to2} \right)
.
\end{align*}
\end{theorem}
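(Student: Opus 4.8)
The plan is to realize $(Q_\bO,Q_\bT)$ as the unique solution of the block linear system whose coefficient operator is the $2\times2$ block operator of Lemma~\ref{lemma:block-inverse} with $\T_1=\P_\bO$, $\T_2=\P_\bT$ and whose right-hand side is $\bigl(\lambda\sign(\bXS)-\mu^{-1}\P_\bO(E),\ \orth(\bXL)-\mu^{-1}\P_\bT(E)\bigr)$, and then to read every asserted property off the block-inverse formula. First, invertibility: condition~\eqref{eq:cond1} together with Lemma~\ref{lemma:composition-bound} gives $\|\P_\bO\circ\P_\bT\|_{\sharp(\rho)\to\sharp(\rho)}\le\alpha(\rho)\beta(\rho)<1$ and $\|\P_\bT\circ\P_\bO\|_{\v\infty\to\v\infty}\le\alpha(\rho)\beta(\rho)<1$, so Lemma~\ref{lemma:block-inverse} yields that $\I-\P_\bO\circ\P_\bT$ and $\I-\P_\bT\circ\P_\bO$ are invertible with $\|(\I-\P_\bO\circ\P_\bT)^{-1}\|_{\sharp(\rho)\to\sharp(\rho)}$ and $\|(\I-\P_\bT\circ\P_\bO)^{-1}\|_{\v\infty\to\v\infty}$ both $\le(1-\alpha(\rho)\beta(\rho))^{-1}$, and the block operator is invertible with the stated Schur-complement inverse. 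Substituting the right-hand side into that inverse and simplifying with $\P_\bO(E)-\P_\bO(\P_\bT(E))=\P_\bO(\P_{\bT^\perp}(E))$ (and the analogous identity with $\bO,\bT$ interchanged) recovers exactly the displayed closed forms for $Q_\bO$ and $Q_\bT$. Since $\sign(\bXS)\in\bO$ and $\range(\P_\bO)=\bO$, the argument of $(\I-\P_\bO\circ\P_\bT)^{-1}$ lies in $\bO$, and because $\I-\P_\bO\circ\P_\bT$ maps $\bO$ into itself and is injective it restricts to an automorphism of $\bO$, as does its inverse; hence $Q_\bO\in\bO$, and symmetrically $Q_\bT\in\bT$. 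Finally, reading the two rows of the system and using $\P_\bO(Q_\bO)=Q_\bO$, $\P_\bT(Q_\bT)=Q_\bT$ gives $\P_\bO(W)=\lambda\sign(\bXS)$ and $\P_\bT(W)=\orth(\bXL)$, where $W:=Q_\bO+Q_\bT+\mu^{-1}E$.

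The heart of the theorem is the pair of approximate-complementarity bounds $\|\P_{\bT^\perp}(W)\|_{2\to2}\le1/c$ and $\|\P_{\bO^\perp}(W)\|_\v\infty\le\lambda/c$. For the spectral bound, $Q_\bT\in\bT$ gives $\P_{\bT^\perp}(W)=\P_{\bT^\perp}(Q_\bO)+\mu^{-1}\P_{\bT^\perp}(E)$, and since $\P_{\bT^\perp}(M)=(I-\bU\bU^\top)M(I-\bV\bV^\top)$ is a two-sided product of orthogonal projections it does not increase $\|\cdot\|_{2\to2}$; hence $\|\P_{\bT^\perp}(W)\|_{2\to2}\le\|Q_\bO\|_{2\to2}+\mu^{-1}\epsilon_{2\to2}$, and inserting the bound on $\|Q_\bO\|_{2\to2}$ obtained below turns this into exactly the inequality that~\eqref{eq:cond2} forces to be $\le1/c$. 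For the entry-wise bound, $Q_\bO\in\bO$ gives $\P_{\bO^\perp}(W)=\P_{\bO^\perp}(Q_\bT+\mu^{-1}E)$; the naive estimate $\|\P_{\bO^\perp}(Q_\bT)\|_\v\infty\le\|Q_\bT\|_\v\infty$ is too lossy because the bound on $\|Q_\bT\|_\v\infty$ carries a full $\lambda$ term, so instead I would re-expand $Q_\bT=\orth(\bXL)-\P_\bT(Q_\bO)-\mu^{-1}\P_\bT(E)$ (valid since $\P_\bT(W)=\orth(\bXL)$ and $Q_\bT\in\bT$), so that $Q_\bT+\mu^{-1}E=\orth(\bXL)-\P_\bT(Q_\bO)+\mu^{-1}\P_{\bT^\perp}(E)$ and $\|\P_{\bO^\perp}(W)\|_\v\infty\le\gamma+\|\P_\bT(Q_\bO)\|_\v\infty+\mu^{-1}\epsilon_\v\infty$; bounding $\|\P_\bT(Q_\bO)\|_\v\infty\le\beta(\rho)\|Q_\bO\|_{\sharp(\rho)}$ by Lemma~\ref{lemma:lowrank-bound} and using the bound on $\|Q_\bO\|_{\sharp(\rho)}$ reduces the claim, after clearing $1-\alpha(\rho)\beta(\rho)$, to $\alpha(\rho)\beta(\rho)\lambda+\gamma+\mu^{-1}\epsilon_\v\infty\le(1-\alpha(\rho)\beta(\rho))\lambda/c$, which is precisely what~\eqref{eq:cond3} provides.

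The seven norm bounds then follow a common template. The two ``generating'' estimates are $\|Q_\bO\|_{\sharp(\rho)}$ and $\|Q_\bT\|_\v\infty$: apply $\|(\I-\P_\bO\circ\P_\bT)^{-1}\|_{\sharp(\rho)\to\sharp(\rho)}\le(1-\alpha(\rho)\beta(\rho))^{-1}$ to the closed form of $Q_\bO$ and bound its three input terms by $\lambda\alpha(\rho)$, $\alpha(\rho)\gamma$, and $\mu^{-1}\alpha(\rho)\epsilon_\v\infty$ using $\|\P_\bO\|_{\v\infty\to\sharp(\rho)}\le\alpha(\rho)$ (Lemma~\ref{lemma:sparse-bound}), $\|\orth(\bXL)\|_\v\infty=\gamma$, and $\|\P_{\bT^\perp}(E)\|_\v\infty\le\epsilon_\v\infty$; symmetrically, apply $\|(\I-\P_\bT\circ\P_\bO)^{-1}\|_{\v\infty\to\v\infty}$ to $Q_\bT$, using $\|\P_\bT(\sign(\bXS))\|_\v\infty\le\alpha(\rho)\beta(\rho)$ (Lemma~\ref{lemma:lowrank-bound}), $\|\P_\bT(\P_{\bO^\perp}(E))\|_\v\infty\le\|\P_\bT(E)\|_\v\infty+\alpha(\rho)\beta(\rho)\|E\|_\v\infty$, and $\alpha(\rho)\beta(\rho)<1$. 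From these: $\|Q_\bO\|_{2\to2}\le\|Q_\bO\|_{\sharp(\rho)}$ by Lemma~\ref{lemma:spectral-bound}; $\|Q_\bT\|_{2\to2}$ and $\|Q_\bT\|_*\le2\br\|Q_\bT\|_{2\to2}$ from the identity $Q_\bT=\orth(\bXL)-\P_\bT(Q_\bO)-\mu^{-1}\P_\bT(E)$ and Lemma~\ref{lemma:projections}; $\|Q_\bO\|_\v\infty$ from the first projection equation $Q_\bO=\lambda\sign(\bXS)-\P_\bO(Q_\bT)-\mu^{-1}\P_\bO(E)$ together with the bound on $\|Q_\bT\|_\v\infty$ (then enlarging to the stated form); $\|Q_\bO\|_\v1\le\bk\|Q_\bO\|_\v\infty$ since $\supp(Q_\bO)\subseteq\supp(\bXS)$; and $\|Q_\bO+Q_\bT\|_\v2^2$ by writing $\ang{Q_\bO,Q_\bO+Q_\bT}=\ang{Q_\bO,\P_\bO(Q_\bO+Q_\bT)}=\ang{Q_\bO,\lambda\sign(\bXS)-\mu^{-1}\P_\bO(E)}$ and symmetrically $\ang{Q_\bT,Q_\bO+Q_\bT}=\ang{Q_\bT,\orth(\bXL)-\mu^{-1}\P_\bT(E)}$, then applying Hölder's inequality with $\|\sign(\bXS)\|_\v\infty\le1$, $\|\orth(\bXL)\|_{2\to2}=1$, and Lemma~\ref{lemma:projections}.

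I expect the main obstacle to be the entry-wise dual-feasibility estimate: one must not bound $\P_{\bO^\perp}(Q_\bT)$ by $\|Q_\bT\|_\v\infty$ but re-expand $Q_\bT$ through the $\P_\bT$-equation so that its $\lambda$-dependence appears only as $\alpha(\rho)\beta(\rho)\lambda$, which is exactly small enough for~\eqref{eq:cond3} to close the bound; and more generally, making~\eqref{eq:cond2} and~\eqref{eq:cond3} match the estimates term-by-term requires carrying each quantity in a precisely chosen norm ($\sharp(\rho)$ for the $Q_\bO$ estimates, $\v\infty$ for the $Q_\bT$ estimates, $\|\cdot\|_{2\to2}$ for the spectral ones). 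Everything else---Neumann-series invertibility, the block-inverse identity, and the repeated applications of Lemmas~\ref{lemma:spectral-bound}, \ref{lemma:projections}, \ref{lemma:sparse-bound}, and~\ref{lemma:lowrank-bound}---is routine bookkeeping.
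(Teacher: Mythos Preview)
Your proposal is correct and essentially identical to the paper's argument. One minor note: your concern that bounding $\|\P_{\bO^\perp}(Q_\bT)\|_\v\infty$ by $\|Q_\bT\|_\v\infty$ would be ``too lossy'' is unfounded---the paper does exactly that, because the direct closed-form estimate on $\|Q_\bT\|_\v\infty$ (using $\|\P_\bT(\sign(\bXS))\|_\v\infty\le\alpha(\rho)\beta(\rho)$, which you yourself invoke in your third paragraph) already yields the tight intermediate bound $(1-\alpha(\rho)\beta(\rho))^{-1}(\gamma+\lambda\alpha(\rho)\beta(\rho)+\mu^{-1}\epsilon_\v\infty)$ with only a $\lambda\alpha(\rho)\beta(\rho)$ term; the displayed bound in the theorem statement with a full $\lambda$ is a subsequent weakening, and adding the $\mu^{-1}\|\P_{\bO^\perp}(E)\|_\v\infty$ term is what produces the $(2-\alpha(\rho)\beta(\rho))\epsilon_\v\infty$ in condition~\eqref{eq:cond3}.
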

\begin{remark}
The dual certificate constitutes an approximate subgradient in the sense
that $Q_\bO + Q_\bT + \mu^{-1} E$ is a subgradient of both $\lambda
\|\XS\|_\v1$ at $\XS = \bXS$, and $\|\XL\|_*$ at $\XL = \bXL$.
\end{remark}
\begin{proof}
Under the condition~\eqref{eq:cond1}, we have $\alpha(\rho) \beta(\rho) <
1$, and therefore Lemma~\ref{lemma:composition-bound} and
Lemma~\ref{lemma:block-inverse} imply that the operators $\I - \P_{\bO}
\circ \P_{\bT}$ and $\I - \P_{\bT} \circ \P_{\bO}$ are invertible and
satisfy
\[
\max\left\{
\|(\I - \P_{\bO} \circ \P_{\bT})^{-1}\|_{\sharp(\rho)\to\sharp(\rho)}
, \
\|(\I - \P_{\bT} \circ \P_{\bO})^{-1}\|_{\v\infty\to\v\infty}
\right\}
\ \leq \ \frac1{1-\alpha(\rho) \beta(\rho)}
.
\]
Thus $Q_\bO$ and $Q_\bT$ are well-defined.
We can bound $\|Q_\bO\|_{2\to2}$ as
\begin{align*}
\|Q_\bO\|_{2\to2} 
& \leq \|Q_\bO\|_{\sharp(\rho)}
\quad \text{(Lemma~\ref{lemma:spectral-bound})} \\
& = \left\| (I - \P_{\bO} \circ \P_{\bT})^{-1}
\left( \lambda \sign(\bXS)
- \P_{\bO}(\orth(\bXL))
- \mu^{-1} (\P_{\bO} \circ \P_{\bT^\perp})(E) \right)
\right\|_{\sharp(\rho)}
\\
& \leq
\frac1{1-\alpha(\rho)\beta(\rho)} \cdot
\left\|
\lambda \sign(\bXS)
- \P_{\bO}(\orth(\bXL))
- \mu^{-1} (\P_{\bO} \circ \P_{\bT^\perp})(E)
\right\|_{\sharp(\rho)}
\\
& \leq
\frac1{1-\alpha(\rho)\beta(\rho)} \cdot
\left(
\lambda \|\sign(\bXS)\|_{\sharp(\rho)}
+ \|\P_{\bO}(\orth(\bXL))\|_{\sharp(\rho)}
+ \mu^{-1} \|(\P_{\bO} \circ \P_{\bT^\perp})(E)\|_{\sharp(\rho)}
\right)
\\
& \leq
\frac{\alpha(\rho)}{1-\alpha(\rho)\beta(\rho)} \cdot
\left(
\lambda 
+  \gamma
+  \mu^{-1} \|\P_{\bT^\perp}(E)\|_\v\infty
\right)
\quad \text{(Lemma~\ref{lemma:sparse-bound})}
\\
& \leq
\frac{\alpha(\rho)}{1-\alpha(\rho)\beta(\rho)} \cdot
\left(
\lambda 
+  \gamma
+  \mu^{-1} \epsilon_\v\infty
\right)
.
\end{align*}
Above, we have used the bound $\|\P_{\bT^\perp}(E)\|_\v\infty = \|E -
\P_{\bT}(E)\|_\v\infty \leq \epsilon_\v\infty$.
Therefore,
\begin{align*}
\|\P_{\bT^\perp}(Q_\bO + \mu^{-1} E)\|_{2\to2}
& \leq \|(I-\bU\bU^\top) Q_\bO (I-\bV\bV^\top)\|_{2\to2}
+ \mu^{-1} \|P_{\bT^\perp}(E)\|_{2\to2}
\\
& \leq \|Q_\bO\|_{2\to2} + \mu^{-1} \epsilon_{2\to2} \\
& \leq
\frac{\alpha(\rho)}{1-\alpha(\rho) \beta(\rho)} \cdot (\lambda +
\gamma +  \mu^{-1} \epsilon_\v\infty)
+ \mu^{-1} \epsilon_{2\to2}
.
\end{align*}
The condition~\eqref{eq:cond2} now implies that this quantity is at most
$1/c$.

Now we bound $\|Q_\bT\|_\v\infty$ as
\begin{align*}
\|Q_\bT\|_\v\infty
& =
\left\|
(\I - \P_{\bT} \circ \P_{\bO})^{-1}
\left(
\orth(\bXL)
- \lambda \P_{\bT}(\sign(\bXS)
- \mu^{-1} (\P_{\bT} \circ \P_{\bO^\perp})(E)
\right)
\right\|_\v\infty
\\
& \leq
\frac1{1-\alpha(\rho)\beta(\rho)} \cdot
\left\|
\orth(\bXL)
- \lambda \P_{\bT} (\sign(\bXS)
- \mu^{-1} (\P_{\bT} \circ \P_{\bO^\perp})(E)
\right\|_\v\infty
\\
& \leq
\frac1{1-\alpha(\rho)\beta(\rho)} \cdot
\left(
\|\orth(\bXL)\|_\v\infty
+ \lambda \|\P_{\bT}(\sign(\bXS))\|_\v\infty
+ \mu^{-1} \|(\P_{\bT} \circ \P_{\bO^\perp})(E)\|_\v\infty
\right)
\\
& \leq
\frac1{1-\alpha(\rho)\beta(\rho)} \cdot
\left(
\gamma
+ \lambda \alpha(\rho) \beta(\rho)
+ \mu^{-1} \epsilon_\v\infty 
\right)
\quad \text{(Lemma~\ref{lemma:composition-bound})}
.
\end{align*}
Above, we have used the bound $\|(\P_{\bT}\circ\P_{\bO^\perp})(E)\|_\v\infty =
\|\P_\bT(E) - (\P_\bT\circ\P_\bO)(E)\|_\v\infty \leq \|\P_\bT(E)\|_\v\infty
+ \alpha(\rho)\beta(\rho)\|E\|_\v\infty \leq \epsilon_\v\infty$.
Therefore,
\begin{align*}
\|\P_{\bO^\perp}(Q_\bT +\mu^{-1} E)\|_\v\infty
& \leq \|Q_\bT\|_\v\infty + \mu^{-1}\|\P_{\bO^\perp}(E)\|_\v\infty \\
& \leq \frac1{1-\alpha(\rho) \beta(\rho)} 
\cdot
\left(
\gamma
+ \lambda \alpha(\rho) \beta(\rho)
+ \mu^{-1} \epsilon_\v\infty
\right)
+ \mu^{-1} \epsilon_\v\infty
.
\end{align*}
The condition~\eqref{eq:cond3} now implies that this quantity is at most
$\lambda/c$.

We also have
\begin{align*}
\|Q_\bT \|_{2\to2}
& = \|\P_{\bT}(Q_\bO + \mu^{-1} E) - \orth(\bXL)\|_{2\to2} \\
& \leq
\frac{2\alpha(\rho)}{1-\alpha(\rho)\beta(\rho)} \cdot
\left(
\lambda 
+  \gamma
+  \mu^{-1} \epsilon_\v\infty
\right) 
+ 1 + 2\mu^{-1} \epsilon_{2\to2}
\end{align*}
since $\|\P_\bT(Q_\bO)\|_{2\to2}\leq 2 \|Q_\bO\|_{2\to2}$
and $\|\P_\bT(E)\|_{2\to2} \leq 2\epsilon_{2\to2}$
by Lemma~\ref{lemma:projections}, and
\begin{align*}
\|Q_\bO \|_{\v\infty} 
& = \|P_{\bO}(Q_\bT + \mu^{-1} E) - \lambda \sign(\bXS)\|_{\v\infty} \\
& \leq
\frac{1}{1-\alpha(\rho)\beta(\rho)} \cdot
\left(
\lambda 
+  \gamma
+  \mu^{-1} \epsilon_\v\infty
\right) + \lambda + \mu^{-1} \epsilon_\v\infty .
\end{align*}
The bounds on $\|Q_\bT\|_*$ and $\|Q_\bO\|_\v1$ follow from the facts that
$\rank(Q_\bT) \leq 2\br$ and $\|\supp(Q_\bO)\| \leq \bk$.
Finally,
\begin{align*}
\|Q_\bO + Q_\bT\|_{\v2}^2
& = \ang{Q_\bO,\P_\bO(Q_\bO + Q_\bT)} + \ang{Q_\bT,\P_\bT(Q_\bO + Q_\bT)} \\
& = \ang{Q_\bO, \lambda \P_{\bO}(\sign(\bXS)) - \mu^{-1} \P_{\bO}(E)}
+ \ang{Q_\bT,\P_{\bT}(\orth(\bXL)) - \mu^{-1} \P_{\bT}(E)} \\
& \leq \lambda \|Q_\bO\|_{\v1}
\left( 1 + \mu^{-1}\lambda^{-1} \|\P_\bO(E)\|_\v\infty \right)
+ \|Q_\bT\|_*
\left( 1 + \mu^{-1} \|\P_\bT(E)\|_{2\to2} \right)
\\
& \leq \lambda \|Q_\bO\|_{\v1}
\left( 1 + \mu^{-1}\lambda^{-1} \epsilon_\v\infty \right)
+ \|Q_\bT\|_*
\left( 1 + 2\mu^{-1} \epsilon_{2\to2} \right)
.
\qedhere
\end{align*}

\end{proof}

\section{Analysis of constrained formulation} \label{section:opt1}

Throughout this section, we fix a target decomposition $(\bXS,\bXL)$ that
satisfies the constraints of~\eqref{eq:opt1}, and let $(\hXS,\hXL)$ be the
optimal solution to~\eqref{eq:opt1}.
Let $\DS := \hXS - \bXS$ and $\DL := \hXL - \bXL$.
We show that under the conditions of Theorem~\ref{theorem:dual-cert} with
$E=0$ and appropriately chosen $\lambda$, solving \eqref{eq:opt1}
accurately recovers the target decomposition $(\bXS,\bXL)$.

We decompose the errors into symmetric and antisymmetric parts $\Da := (\DS
+ \DL)/2$ and $\Dm := (\DS - \DL) / 2$.
The constraints allow us to easily bound $\Da$, so most of the analysis
involves bounding $\Dm$ in terms of $\Da$.
\begin{lemma} \label{lemma:opt1-avg}
$\|\Da\|_\v1 \leq \epsilon_\v1$ and $\|\Da\|_* \leq \epsilon_*$.
\end{lemma}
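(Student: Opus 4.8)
The plan is to derive both bounds directly from the feasibility of the two decompositions, making no appeal to optimality of the objective value or to any incoherence parameters. First I would record that, since $(\hXS,\hXL)$ is the minimizer of~\eqref{eq:opt1}, it is in particular feasible, so $\|\hXS+\hXL-Y\|_\v1 \le \epsilon_\v1$ and $\|\hXS+\hXL-Y\|_* \le \epsilon_*$; the target $(\bXS,\bXL)$ is feasible by hypothesis, so $\|\bXS+\bXL-Y\|_\v1 \le \epsilon_\v1$ and $\|\bXS+\bXL-Y\|_* \le \epsilon_*$ as well.

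Next I would use the algebraic identity $2\Da = \DS + \DL = (\hXS+\hXL-Y) - (\bXS+\bXL-Y)$, which exhibits $\Da$ as half the difference of the two feasibility residuals. Applying the triangle inequality for $\|\cdot\|_\v1$ gives $2\|\Da\|_\v1 \le \|\hXS+\hXL-Y\|_\v1 + \|\bXS+\bXL-Y\|_\v1 \le 2\epsilon_\v1$, and running the identical argument with the trace norm gives $2\|\Da\|_* \le 2\epsilon_*$; dividing by $2$ produces the two claimed inequalities.

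I do not expect any real obstacle here: this is the easy half of the constrained-formulation analysis, exactly as the surrounding text anticipates when it notes that the constraints make $\Da$ easy to bound. The only point requiring a moment's care is to invoke feasibility of $(\hXS,\hXL)$ (a consequence of it being the minimizer), rather than anything about the objective. The substantive work — controlling the antisymmetric part $\Dm$, and hence $\DS$ and $\DL$ individually, via the approximate dual certificate of Theorem~\ref{theorem:dual-cert} with $E=0$ — is what the subsequent lemmas in this section must handle.
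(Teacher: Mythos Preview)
Your proposal is correct and matches the paper's own proof essentially line for line: both arguments write $2\Da = (\hXS+\hXL-Y) - (\bXS+\bXL-Y)$, invoke feasibility of the optimizer and the target, and apply the triangle inequality in each of the two norms.
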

\begin{proof}
Since both $(\hXS,\hXL)$ and $(\bXS,\bXL)$ as feasible solutions
to~\eqref{eq:opt1}, we have for $\diamondsuit \in \{\v1, *\}$,
\begin{align*}
\|\Da\|_\diamondsuit
& = \nicefrac12 \|\DS + \DL\|_\diamondsuit \\
& = \nicefrac12 \|(\hXS + \hXL - Y) - (\bXS + \bXL - Y)\|_\diamondsuit \\
& \leq \nicefrac12 \left( \|\hXS + \hXL - Y\|_\diamondsuit + \|\bXS + \bXL -
Y\|_\diamondsuit
\right) \\
& \leq \epsilon_\diamondsuit
.
\qedhere
\end{align*}
\end{proof}

\begin{lemma} \label{lemma:opt1-outside}
Assume the conditions of Theorem~\ref{theorem:dual-cert} hold with $E=0$.
We have
\[
\lambda \|\P_{\bO^\perp}(\Dm)\|_\v1 + \|\P_{\bT^\perp}(\Dm)\|_*
\leq (1-1/c)^{-1} \left( \lambda \|\Da\|_\v1 + \|\Da\|_* \right)
.
\]
\end{lemma}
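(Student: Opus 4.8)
The plan is to combine the approximate dual certificate of Theorem~\ref{theorem:dual-cert} (instantiated with $E = 0$) with a judicious choice of feasible comparison point for~\eqref{eq:opt1}: instead of pitting the optimum $(\hXS,\hXL)$ against the target $(\bXS,\bXL)$ directly, I would compare it against the ``symmetrized'' pair $(\bXS + \Dm,\, \bXL - \Dm)$, which has exactly the same entrywise sum as $(\bXS,\bXL)$.

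First I would apply Theorem~\ref{theorem:dual-cert} with $E = 0$; since then $\epsilon_{2\to2} = \epsilon_\v\infty = 0$, its hypotheses reduce to the conditions~\eqref{eq:opt1-cond1}--\eqref{eq:opt1-cond3} we are assuming, and the $\mu^{-1}E$ terms vanish. By the remark following that theorem, $Q := Q_\bO + Q_\bT$ is simultaneously a subgradient of $\lambda\|\XS\|_\v1$ at $\XS = \bXS$ and of $\|\XL\|_*$ at $\XL = \bXL$, and it satisfies $\|\P_{\bO^\perp}(Q)\|_\v\infty \leq \lambda/c$ and $\|\P_{\bT^\perp}(Q)\|_{2\to2} \leq 1/c$ --- precisely the hypotheses of Lemma~\ref{lemma:subgradient}. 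Next, $(\bXS + \Dm,\, \bXL - \Dm)$ is feasible for~\eqref{eq:opt1}, because $(\bXS + \Dm) + (\bXL - \Dm) = \bXS + \bXL$, so the $\|\cdot - Y\|_\v1$ and $\|\cdot - Y\|_*$ values of this sum equal those of $\bXS + \bXL$, which are at most $\epsilon_\v1$ and $\epsilon_*$. Applying Lemma~\ref{lemma:subgradient} with $(\XS,\XL) = (\bXS + \Dm,\, \bXL - \Dm)$, the inner-product term $\ang{Q,\, \XS + \XL - \bXS - \bXL}$ vanishes (the sum is unchanged), and since $\|\P_{\bT^\perp}(-\Dm)\|_* = \|\P_{\bT^\perp}(\Dm)\|_*$ we get, with $g(\XS,\XL) := \lambda\|\XS\|_\v1 + \|\XL\|_*$,
\[
g(\bXS + \Dm,\, \bXL - \Dm) - g(\bXS,\bXL)
\ \geq \
(1 - 1/c)\left(\lambda\|\P_{\bO^\perp}(\Dm)\|_\v1 + \|\P_{\bT^\perp}(\Dm)\|_*\right) .
\]

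To bound the left-hand side, I would use the elementary identities $\bXS + \Dm = \hXS - \Da$ and $\bXL - \Dm = \hXL - \Da$ (immediate from $\DS = \Da + \Dm$, $\DL = \Da - \Dm$, $\hXS = \bXS + \DS$, $\hXL = \bXL + \DL$). Then the triangle inequality for $\|\cdot\|_\v1$ and $\|\cdot\|_*$ gives $g(\bXS + \Dm,\, \bXL - \Dm) = \lambda\|\hXS - \Da\|_\v1 + \|\hXL - \Da\|_* \leq g(\hXS,\hXL) + \lambda\|\Da\|_\v1 + \|\Da\|_*$, while optimality of $(\hXS,\hXL)$ for~\eqref{eq:opt1} and feasibility of $(\bXS,\bXL)$ give $g(\hXS,\hXL) \leq g(\bXS,\bXL)$. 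Hence $g(\bXS + \Dm,\, \bXL - \Dm) - g(\bXS,\bXL) \leq \lambda\|\Da\|_\v1 + \|\Da\|_*$. Combining this with the displayed lower bound and dividing by $1 - 1/c > 0$ (valid since $c > 1$) yields the claimed inequality.

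The only genuinely non-mechanical step is the choice of comparison point: using the symmetrized feasible pair $(\bXS + \Dm,\, \bXL - \Dm)$ rather than $(\bXS,\bXL)$ is what kills the $\ang{Q,\cdot}$ term exactly and delivers the clean factor $(1-1/c)^{-1}$; a direct comparison would leave an extra $\lambda\|\Da\|_\v1 + \|\Da\|_*$ and a weaker constant. Everything else is triangle inequalities together with Lemma~\ref{lemma:subgradient}. One should also note that this argument is for~\eqref{eq:opt1} without the optional box constraint $\|\XL\|_\v\infty \leq b$ (which is handled separately in Theorem~\ref{theorem:opt1}), since otherwise $(\bXS + \Dm,\, \bXL - \Dm)$ need not be feasible.
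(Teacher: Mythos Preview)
Your proof is correct and follows essentially the same route as the paper: apply Lemma~\ref{lemma:subgradient} at the point $(\bXS+\Dm,\bXL-\Dm)$ so the $\ang{Q,\cdot}$ term vanishes, then use the triangle inequality (via $\bXS+\Dm=\hXS-\Da$, $\bXL-\Dm=\hXL-\Da$) together with optimality of $(\hXS,\hXL)$ against the feasible target $(\bXS,\bXL)$. One small remark: you verify that $(\bXS+\Dm,\bXL-\Dm)$ is feasible for~\eqref{eq:opt1}, but you never actually use this --- the only feasibility you invoke is that of $(\bXS,\bXL)$ in the step $g(\hXS,\hXL)\le g(\bXS,\bXL)$ --- so your closing caveat about the box constraint is unnecessary, and the argument goes through unchanged when the constraint $\|\XL\|_\v\infty\le b$ is present.
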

\begin{proof}
Let $Q := Q_\bO + Q_\bT$ be the dual certificate guaranteed by
Theorem~\ref{theorem:dual-cert}.
Note that $Q$ satisfies the conditions of Lemma~\ref{lemma:subgradient}, so
we have
\begin{multline*}
\lambda \|\bXS + \Dm\|_\v1 + \|\bXL - \Dm\|_* - \lambda  \|\bXS\|_\v1 -
\|\bXL\|_*
\\
\geq (1-1/c) \left( \lambda \|\P_{\bO^\perp}(\Dm)\|_\v1 +
\|\P_{\bT^\perp}(\Dm)\|_* \right)
.
\end{multline*}
Using the triangle inequality, we have
\begin{align*}
\lambda \|\hXS\|_\v1 + \|\hXL\|_*
& = \lambda \|\bXS + \DS\|_\v1 + \|\bXL + \DL\|_* \\
& = \lambda \|\bXS + \Dm + \Da\|_\v1 + \|\bXL - \Dm + \Da\|_* \\
& \geq
\lambda \|\bXS + \Dm\|_\v1 - \lambda \|\Da\|_\v1
+ \|\bXL - \Dm\|_* - \|\Da\|_*
.
\end{align*}
Now using the fact that $\lambda \|\hXS\|_\v1 + \|\hXL\|_* \leq \lambda
\|\bXS\|_\v1 + \|\bXL\|_*$ gives the claim.
\end{proof}

\begin{lemma} \label{lemma:opt1-inside}
Let $\bk := |\supp(\bXS)|$.
Assume the conditions of Theorem~\ref{theorem:dual-cert} hold with $E = 0$.
We have
\[ \|\P_{\bO}(\Dm)\|_\v1
 \leq \frac{(1-1/c)^{-1}}{1-\alpha(\rho)\beta(\rho)}
\cdot (\|\Da\|_\v1 + \|\Da\|_* / \lambda)
.
\]
\end{lemma}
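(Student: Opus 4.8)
The plan is to derive a self-referential inequality for $\|\P_{\bO}(\Dm)\|_\v1$ via the contraction property $\|\P_{\bO} \circ \P_{\bT}\|_{\v1\to\v1} \leq \alpha(\rho)\beta(\rho)$ (Lemma~\ref{lemma:dual-composition-bound}) and then close it using Lemma~\ref{lemma:opt1-outside} together with Lemma~\ref{lemma:opt1-avg}. First I would insert $\I = \P_{\bT} + \P_{\bT^\perp}$ and then $\I = \P_{\bO} + \P_{\bO^\perp}$ inside $\P_{\bO}(\Dm)$ to get the exact identity
\[
\P_{\bO}(\Dm) \;=\; (\P_{\bO}\circ\P_{\bT})(\P_{\bO}(\Dm)) \;+\; (\P_{\bO}\circ\P_{\bT})(\P_{\bO^\perp}(\Dm)) \;+\; \P_{\bO}(\P_{\bT^\perp}(\Dm)) .
\]
Taking $\|\cdot\|_\v1$ and bounding the first two summands with Lemma~\ref{lemma:dual-composition-bound} yields
\[
(1-\alpha(\rho)\beta(\rho))\,\|\P_{\bO}(\Dm)\|_\v1 \;\leq\; \alpha(\rho)\beta(\rho)\,\|\P_{\bO^\perp}(\Dm)\|_\v1 \;+\; \|\P_{\bO}(\P_{\bT^\perp}(\Dm))\|_\v1 ,
\]
which is a legitimate rearrangement since $\alpha(\rho)\beta(\rho) < 1$ by~\eqref{eq:opt1-cond1} and $\|\P_{\bO}(\Dm)\|_\v1 < \infty$.

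The term that is not immediately covered by Lemma~\ref{lemma:opt1-outside} is $\|\P_{\bO}(\P_{\bT^\perp}(\Dm))\|_\v1$, since that lemma controls $\P_{\bT^\perp}(\Dm)$ only in trace norm; this is the step I expect to be the crux. The key point is that the sparsity structure of $\bXS$ lets one bound $\P_{\bO}$ as an operator from $\|\cdot\|_*$ to $\|\cdot\|_\v1$ with norm $\alpha(\rho)$, rather than with the much weaker $\sqrt{\bk}$ one would get from a crude Cauchy--Schwarz passage through $\|\cdot\|_\v2$. Concretely, by Proposition~\ref{proposition:operator-dual} (using that $\|\cdot\|_{2\to2}$ is dual to $\|\cdot\|_*$, that $\|\cdot\|_\v\infty$ is dual to $\|\cdot\|_\v1$, and that $\P_{\bO}$ is self-adjoint) we have $\|\P_{\bO}\|_{*\to\v1} = \|\P_{\bO}\|_{\v\infty\to 2\to2}$, and by Lemma~\ref{lemma:spectral-bound} and Lemma~\ref{lemma:sparse-bound},
\[
\|\P_{\bO}(M)\|_{2\to2} \;\leq\; \|\P_{\bO}(M)\|_{\sharp(\rho)} \;\leq\; \alpha(\rho)\,\|M\|_\v\infty ,
\]
so $\|\P_{\bO}(\P_{\bT^\perp}(\Dm))\|_\v1 \leq \alpha(\rho)\,\|\P_{\bT^\perp}(\Dm)\|_*$. (Equivalently, one can pair $\P_{\bO}(\P_{\bT^\perp}(\Dm))$ with $G := \sign(\P_{\bO}(\P_{\bT^\perp}(\Dm))) \in \bO$, note $\ang{G,\P_{\bO}(\P_{\bT^\perp}(\Dm))} = \ang{G,\P_{\bT^\perp}(\Dm)} \leq \|G\|_{2\to2}\,\|\P_{\bT^\perp}(\Dm)\|_*$ by Proposition~\ref{proposition:holder}, and use $\|G\|_{2\to2} \leq \|G\|_{\sharp(\rho)} \leq \alpha(\rho)$.)

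It then remains to absorb the constants. From $\alpha(\rho)\beta(\rho) < 1$ we replace the first coefficient by $1$; for the second, I would invoke condition~\eqref{eq:opt1-cond2} (this is exactly the $E=0$ instance relevant here), which, since $\lambda > 0$ and $\alpha(\rho),\beta(\rho),\gamma \geq 0$, gives $c\,\alpha(\rho)\lambda \leq 1 - \alpha(\rho)\beta(\rho) - c\,\alpha(\rho)\gamma \leq 1$, hence $\alpha(\rho) \leq 1/\lambda$. Substituting,
\[
(1-\alpha(\rho)\beta(\rho))\,\|\P_{\bO}(\Dm)\|_\v1 \;\leq\; \|\P_{\bO^\perp}(\Dm)\|_\v1 + \tfrac1\lambda\,\|\P_{\bT^\perp}(\Dm)\|_* \;=\; \tfrac1\lambda\bigl(\lambda\|\P_{\bO^\perp}(\Dm)\|_\v1 + \|\P_{\bT^\perp}(\Dm)\|_*\bigr) ,
\]
and Lemma~\ref{lemma:opt1-outside} bounds the parenthesized quantity by $(1-1/c)^{-1}(\lambda\|\Da\|_\v1 + \|\Da\|_*)$; dividing by $\lambda(1-\alpha(\rho)\beta(\rho))$ gives the claimed bound. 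Apart from the crux step above, the only thing to be careful about is the operator-norm/duality bookkeeping (which norm is dual to which, and self-adjointness of $\P_{\bO}$ and $\P_{\bT}$); the rest is routine algebra, and note that $\bk$ from the lemma statement does not actually enter the final bound under this argument.
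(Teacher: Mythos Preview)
Your proof is correct and follows the same skeleton as the paper's: decompose $\P_{\bO}(\Dm)$ via $\I = \P_{\bT} + \P_{\bT^\perp}$ and then $\I = \P_{\bO} + \P_{\bO^\perp}$, apply the contraction $\|\P_{\bO}\circ\P_{\bT}\|_{\v1\to\v1} \leq \alpha(\rho)\beta(\rho)$, and close with Lemma~\ref{lemma:opt1-outside} and $\lambda\alpha(\rho)\leq 1$. The paper packages the first step through the block operator inverse $(\I - \P_{\bO}\circ\P_{\bT})^{-1}$ from Lemma~\ref{lemma:block-inverse}, but that is just the Neumann-series version of your self-referential inequality and yields exactly the same two terms to control.

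The one substantive difference is at the crux term $\|\P_{\bO}(\P_{\bT^\perp}(\Dm))\|_\v1$. The paper bounds it by $\sqrt{\bk}\,\|\P_{\bT^\perp}(\Dm)\|_\v2 \leq \sqrt{\bk}\,\|\P_{\bT^\perp}(\Dm)\|_*$ via Lemma~\ref{lemma:projections} (Cauchy--Schwarz through Frobenius), and then uses $\bk \leq \alpha(\rho)^2$ together with $\lambda\alpha(\rho)\leq 1$ to get $\sqrt{\bk}\leq 1/\lambda$. Your duality route, $\|\P_{\bO}\|_{*\to\v1} = \|\P_{\bO}^*\|_{\v\infty\to (2\to2)} \leq \alpha(\rho)$ via Lemmas~\ref{lemma:spectral-bound} and~\ref{lemma:sparse-bound}, reaches the same $1/\lambda$ factor directly. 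This is cleaner: it bypasses $\bk$ and the inequality $\bk\leq\alpha(\rho)^2$ altogether, which is why, as you correctly observe, $\bk$ never enters your final bound despite appearing in the lemma statement.
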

\begin{proof}
Because $\Dm = \P_{\bO}(\Dm) + \P_{\bO^\perp}(\Dm) = \P_{\bT}(\Dm) +
\P_{\bT^\perp}(\Dm)$, we have the equation
\[ \P_{\bO}(\Dm) - \P_{\bT}(\Dm) = - \P_{\bO^\perp}(\Dm) +
\P_{\bT^\perp}(\Dm)
. \]
Separately applying $\P_{\bO}$ and $\P_{\bT}$ to both sides gives
\[
\left[ \begin{array}{cc}
\I & \P_{\bO} \\
\P_{\bT} & \I
\end{array} \right]
\left[ \begin{array}{r}
\P_{\bO}(\Dm) \\
-\P_{\bT}(\Dm)
\end{array} \right]
=
\left[ \begin{array}{r}
(\P_{\bO} \circ \P_{\bT^\perp})(\Dm) \\
-(\P_{\bT} \circ \P_{\bO^\perp})(\Dm)
\end{array} \right]
.
\]
Under the condition $\alpha(\rho) \beta(\rho) < 1$,
Lemma~\ref{lemma:dual-composition-bound} and
Lemma~\ref{lemma:block-inverse} imply that
\[ \|(\I - \P_{\bO} \circ
\P_{\bT})^{-1}\|_{\v1\to\v1} \leq
\frac1{1-\alpha(\rho) \beta(\rho)}
\]
and that
\begin{align*}
\P_{\bO}(\Dm)
& = (\I - \P_{\bO} \circ \P_{\bT})^{-1}
\left(
(\P_{\bO} \circ \P_{\bT^\perp})(\Dm)
- (\P_{\bO} \circ \P_{\bT} \circ \P_{\bO^\perp})(\Dm)
\right)
.
\end{align*}
Therefore
\begin{align*}
\lefteqn{\|\P_{\bO}(\Dm)\|_\v1} \\
& \leq \frac1{1-\alpha(\rho)\beta(\rho)}
\cdot \left(
\|(\P_{\bO} \circ \P_{\bT^\perp})(\Dm)\|_\v1
+ \|(\P_{\bO} \circ \P_{\bT} \circ \P_{\bO^\perp})(\Dm)\|_\v1
\right)
\\
& \leq \frac1{1-\alpha(\rho)\beta(\rho)}
\cdot \left(
\sqrt{\bk} \cdot \|\P_{\bT^\perp}(\Dm)\|_\v2
+ \alpha(\rho) \beta(\rho) \cdot \|\P_{\bO^\perp}(\Dm)\|_\v1
\right)
\quad \text{(Lemma~\ref{lemma:dual-composition-bound})}
\\
& \leq \frac1{1-\alpha(\rho)\beta(\rho)}
\cdot \left(
\sqrt{\bk} \cdot \|\P_{\bT^\perp}(\Dm)\|_*
+ \alpha(\rho) \beta(\rho) \cdot \|\P_{\bO^\perp}(\Dm)\|_\v1
\right)
\\
& \leq \frac{(1-1/c)^{-1}}{1-\alpha(\rho)\beta(\rho)}
\cdot \max\left\{ \sqrt{\bk}, \ \alpha(\rho)\beta(\rho) / \lambda \right\}
\cdot (\lambda \|\Da\|_\v1 + \|\Da\|_*)
\quad \text{(Lemma~\ref{lemma:opt1-outside})}
\\
& \leq \frac{(1-1/c)^{-1}}{1-\alpha(\rho)\beta(\rho)}
\cdot (\|\Da\|_\v1 + \|\Da\|_* / \lambda)
\end{align*}
where the last inequality uses the facts $\bk \leq \alpha(\rho)^2$,
$\alpha(\rho)\beta(\rho) < 1$, and $\lambda \alpha(\rho) \leq 1$.
\end{proof}

We now prove Theorem~\ref{theorem:opt1}, which we restate here for
convenience.
\begin{theorem}[Theorem~\ref{theorem:opt1} restated] \label{theorem:opt1-again}
Assume the conditions of Theorem~\ref{theorem:dual-cert} hold with $E = 0$.
We have
\begin{multline*}
\max\{ \|\DS\|_\v1, \ \|\DL\|_\v1 \}
\\
\leq
\left(
1 +
(1-1/c)^{-1} \cdot
\frac{2-\alpha(\rho)\beta(\rho)}{1-\alpha(\rho)\beta(\rho)}
\right)
\cdot \epsilon_\v1
+ (1-1/c)^{-1} \cdot
\frac{2-\alpha(\rho)\beta(\rho)}{1-\alpha(\rho)\beta(\rho)}
\cdot
\epsilon_* / \lambda
.
\end{multline*}
If, in addition for some $b \geq \|\bXL\|_\v\infty$, either:
\begin{itemize}
\item the optimization problem~\eqref{eq:opt1} is augmented with the
constraint $\|\XL\|_\v\infty \leq b$, or

\item $\hXL$ is post-processed by replacing $[\hXL]_{i,j}$ with
$\min\{ \max\{ [\hXL]_{i,j}, -b\}, b \}$ for all $i,j$,

\end{itemize}
then we also have
\[ \|\DL\|_\v2 \leq \min\left\{ \|\DL\|_\v1, \ \sqrt{2b \|\DL\|_\v1}
\right\} . \]
\end{theorem}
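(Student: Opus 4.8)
The plan is to reduce the first bound to the three preparatory lemmas (Lemmas~\ref{lemma:opt1-avg}, \ref{lemma:opt1-outside}, and \ref{lemma:opt1-inside}) by working with the symmetric/antisymmetric decomposition $\DS = \Da + \Dm$ and $\DL = \Da - \Dm$. The triangle inequality gives $\max\{\|\DS\|_\v1,\|\DL\|_\v1\} \leq \|\Da\|_\v1 + \|\Dm\|_\v1$, so it suffices to control the two pieces on the right. The first is immediate: Lemma~\ref{lemma:opt1-avg} yields $\|\Da\|_\v1 \leq \epsilon_\v1$ (and $\|\Da\|_* \leq \epsilon_*$, which I will need for the second piece). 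Thus the entire task collapses to bounding $\|\Dm\|_\v1$.

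For $\|\Dm\|_\v1$ I would split $\Dm = \P_{\bO}(\Dm) + \P_{\bO^\perp}(\Dm)$ and bound each summand. The $\P_{\bO^\perp}$ part follows from Lemma~\ref{lemma:opt1-outside}: discard the nonnegative $\|\P_{\bT^\perp}(\Dm)\|_*$ term and divide by $\lambda$ to get $\|\P_{\bO^\perp}(\Dm)\|_\v1 \leq (1-1/c)^{-1}(\|\Da\|_\v1 + \|\Da\|_*/\lambda)$. The $\P_{\bO}$ part is exactly Lemma~\ref{lemma:opt1-inside}. Adding the two and using the identity $1 + (1-\alpha(\rho)\beta(\rho))^{-1} = (2-\alpha(\rho)\beta(\rho))/(1-\alpha(\rho)\beta(\rho))$ gives $\|\Dm\|_\v1 \leq (1-1/c)^{-1}\cdot\frac{2-\alpha(\rho)\beta(\rho)}{1-\alpha(\rho)\beta(\rho)}\cdot(\|\Da\|_\v1 + \|\Da\|_*/\lambda)$. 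Substituting $\|\Da\|_\v1 \leq \epsilon_\v1$ and $\|\Da\|_* \leq \epsilon_*$ into $\|\Da\|_\v1 + \|\Dm\|_\v1$ produces precisely the claimed inequality. I do not expect a genuine obstacle here; the content is all in the lemmas, and what remains is bookkeeping. The only two spots needing a moment of care are (i) checking that adding the optional constraint $\|\XL\|_\v\infty \leq b$ with $b \geq \|\bXL\|_\v\infty$ keeps $(\bXS,\bXL)$ feasible, so Lemmas~\ref{lemma:opt1-avg}--\ref{lemma:opt1-inside} still apply to the augmented program, and (ii) the arithmetic identity above.

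For the Frobenius-norm refinement, the estimate $\|\DL\|_\v2 \leq \|\DL\|_\v1$ is just the generic entrywise inequality $\|M\|_\v2 \leq \|M\|_\v1$. For the $\sqrt{2b\|\DL\|_\v1}$ bound, I would first argue $\|\DL\|_\v\infty \leq 2b$: in the constrained variant this is immediate from $\|\hXL\|_\v\infty \leq b$ together with $\|\bXL\|_\v\infty \leq b$; in the post-processing variant, clipping each entry of $\hXL$ into $[-b,b]$ is a non-expansive map toward any point of $[-b,b]$, and since every entry of $\bXL$ lies in $[-b,b]$, this clipping does not increase any entrywise distance to $\bXL$ --- hence the already-proved $\|\DL\|_\v1$ bound is preserved while $\|\DL\|_\v\infty \leq 2b$ now holds as well. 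Then $\|\DL\|_\v2^2 = \sum_{i,j}[\DL]_{i,j}^2 \leq \|\DL\|_\v\infty\|\DL\|_\v1 \leq 2b\|\DL\|_\v1$, and taking the minimum of the two bounds completes the proof.
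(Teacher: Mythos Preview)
Your proposal is correct and follows essentially the same route as the paper: the symmetric/antisymmetric split $\DS=\Da+\Dm$, $\DL=\Da-\Dm$, then $\|\Dm\|_\v1 \leq \|\P_{\bO^\perp}(\Dm)\|_\v1 + \|\P_{\bO}(\Dm)\|_\v1$ handled by Lemmas~\ref{lemma:opt1-outside} and~\ref{lemma:opt1-inside}, combined with Lemma~\ref{lemma:opt1-avg} and the identity $1+(1-\alpha\beta)^{-1}=(2-\alpha\beta)/(1-\alpha\beta)$; the Frobenius refinement via $\|\DL\|_\v2^2 \leq \|\DL\|_\v\infty\|\DL\|_\v1 \leq 2b\|\DL\|_\v1$ is also the paper's argument. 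Your explicit checks that $(\bXS,\bXL)$ remains feasible under the added constraint and that clipping is non-expansive toward $[-b,b]$ are welcome clarifications the paper leaves implicit.
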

\begin{proof}
First note that since $\DS = \Da + \Dm$ and $\DL = \Da - \Dm$, we have
$\max\{\|\DS\|_\v1,\|\DL\|_\v1\} \leq \|\Da\|_\v1 + \|\Dm\|_\v1$.
We can bound $\|\Dm\|_\v1$ as
\begin{align*}
\|\Dm\|_\v1
& \leq \|\P_{\bO^\perp}(\Dm)\|_\v1 + \|\P_{\bO}(\Dm)\|_\v1 \\
& \leq
(1-1/c)^{-1} \cdot \left(
1 + \frac{1}{1-\alpha(\rho)\beta(\rho)}
\right)
\cdot ( \|\Da\|_\v1 + \|\Da\|_* / \lambda)
\end{align*}
by Lemma~\ref{lemma:opt1-outside} and Lemma~\ref{lemma:opt1-inside}.
The bounds on $\|\DS\|_\v1$ and $\|\DL\|_\v1$ follow from the bounds on
$\|\Dm\|_\v1$, $\|\Da\|_\v1$, and $\|\Da\|_*$ (from
Lemma~\ref{lemma:opt1-avg}).

If the constraint $\|\XL\|_\v\infty \leq b$ is added, then we can use the
facts $\|\DL\|_\v\infty \leq \|\hXL\|_\v\infty + \|\bXL\|_\v\infty \leq 2b$
and $\|\DL\|_\v2 \sqrt{\|\DL\|_\v\infty \|\DL\|_\v1} \leq
\sqrt{2b\|\DL\|_\v1}$.
If $\hXL$ is post-processed, then (letting $\clip(\hXL)$ be the result of
the post-processing) $|\clip(\hXL)_{i,j} - [\bXL]_{i,j}| \leq |[\hXL]_{i,j}
- [\bXL]_{i,j}|$ for all $i,j$, so $\|\clip(\hXL) - \bXL\|_\v1 \leq
\|\DL\|_\v1$ and $\|\clip(\hXL)-\bXL\|_\v2 \leq \sqrt{2b
\|\clip(\hXL)-\bXL\|_\v1}$.
\end{proof}

\section{Analysis of regularized formulation} \label{section:opt2}

Throughout this section, we fix a target decomposition $(\bXS,\bXL)$ that
satisfies $\|\bXS-Y\|_\v\infty \leq b$, and let $(\hXS, \hXL)$ be the
optimal solution to~\eqref{eq:opt2} augmented with the constraint
$\|\XS-Y\|_\v\infty \leq b$ for some $b \geq \|\bXS-Y\|_\v\infty$ ($b =
\infty$ is allowed).
Let $\DS := \hXS-\bXS$ and $\DL := \hXL-\bXL$.
We show that under the conditions of Theorem~\ref{theorem:dual-cert} with
$E=Y-(\bXS+\bXL)$ and appropriately chosen $\lambda$ and $\mu$, solving
\eqref{eq:opt2} accurately recovers the target decomposition $(\bXS,\bXL)$.

\begin{lemma} \label{lemma:opt2-kkt}
There exists $G_S, G_L, H_S \in \R^{m \times n}$ such that
\begin{enumerate}
\item $\mu^{-1} ( \hXS + \hXL - Y ) + \lambda G_S + H = 0$;
$\|G_S\|_\v\infty \leq 1$;

\item $\mu^{-1} ( \hXS + \hXL - Y ) + \lambda G_L = 0$;
$\|G_L\|_{2\to2} \leq 1$;

\item $[H_S]_{i,j} [\DS]_{i,j} \geq 0 \ \forall i,j$ .

\end{enumerate}
\end{lemma}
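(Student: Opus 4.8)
The plan is to obtain all three relations from the first-order optimality (KKT) conditions for the convex program~\eqref{eq:opt2} with the added box constraint $\XS \in C$, where $C := \{ X \in \R^{m \times n} : \|X - Y\|_\v\infty \leq b \}$ is closed and convex (with $C = \R^{m \times n}$ when $b = \infty$). Write $F(\XS,\XL) := \frac1{2\mu}\|\XS + \XL - Y\|_\v2^2 + \lambda\|\XS\|_\v1 + \|\XL\|_*$; its quadratic term is finite and differentiable everywhere, with partial gradient $\mu^{-1}(\XS + \XL - Y)$ in each block, so the subdifferential sum rule applies without any constraint qualification (and $C$ has nonempty interior whenever $b > 0$). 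Since $(\hXS,\hXL)$ is jointly optimal over $C \times \R^{m \times n}$, in particular $\hXS$ minimizes $F(\cdot,\hXL)$ over $C$ and $\hXL$ minimizes $F(\hXS,\cdot)$ over all of $\R^{m \times n}$.

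From Fermat's rule in the unconstrained block $\XL$ we get $0 \in \partial_{\XL}F(\hXS,\hXL) = \mu^{-1}(\hXS + \hXL - Y) + \partial_{\XL}(\|\hXL\|_*)$, so $-\mu^{-1}(\hXS + \hXL - Y)$ is a subgradient of $\|\cdot\|_*$ at $\hXL$ and hence, by Proposition~\ref{proposition:subgradient}, has spectral norm at most $1$; naming it $G_L$ gives item~2. For the constrained block $\XS$, optimality over $C$ reads $0 \in \partial_{\XS}F(\hXS,\hXL) + N_C(\hXS)$ with $N_C(\hXS)$ the normal cone of $C$ at $\hXS$; expanding the subdifferential of $F(\cdot,\hXL)$ produces $G_S \in \partial_{\XS}(\|\hXS\|_\v1)$ (so $\|G_S\|_\v\infty \leq 1$ by Proposition~\ref{proposition:subgradient}) and $H_S \in N_C(\hXS)$ with $\mu^{-1}(\hXS + \hXL - Y) + \lambda G_S + H_S = 0$, which is item~1.

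It remains to check item~3, and this is where the hypothesis $\|\bXS - Y\|_\v\infty \leq b$ enters. Since $C$ is the Cartesian product of the intervals $[Y_{i,j} - b,\, Y_{i,j} + b]$, its normal cone factorizes entrywise: $[H_S]_{i,j} = 0$ whenever $|[\hXS]_{i,j} - Y_{i,j}| < b$, and otherwise the box constraint is active with $\sign([H_S]_{i,j}) = \sign([\hXS]_{i,j} - Y_{i,j})$ and $[\hXS]_{i,j} - Y_{i,j} \in \{-b,+b\}$ (and $H_S = 0$ identically if $b = \infty$). Fix $(i,j)$ with $[H_S]_{i,j} \neq 0$ and put $\sigma := \sign([H_S]_{i,j})$. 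Then $[\hXS]_{i,j} - Y_{i,j} = \sigma b$ while $\sigma([\bXS]_{i,j} - Y_{i,j}) \leq |[\bXS]_{i,j} - Y_{i,j}| \leq b$, so
\[
\sigma\,[\DS]_{i,j} \;=\; \sigma\bigl([\hXS]_{i,j} - [\bXS]_{i,j}\bigr) \;=\; b - \sigma\bigl([\bXS]_{i,j} - Y_{i,j}\bigr) \;\geq\; 0 ,
\]
and multiplying through by $|[H_S]_{i,j}| \geq 0$ gives $[H_S]_{i,j}\,[\DS]_{i,j} \geq 0$; the inequality is trivial when $[H_S]_{i,j} = 0$.

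I do not anticipate a real obstacle: the argument is a routine application of convex KKT conditions plus the explicit form of the normal cone of a box. The only points deserving care are verifying that no constraint qualification is needed (immediate, since the smooth part of $F$ has full domain and, for $b > 0$, $C$ has nonempty interior, with $b = \infty$ trivial) and the sign bookkeeping in the final display --- the single place the assumption $\|\bXS - Y\|_\v\infty \leq b$ is actually used.
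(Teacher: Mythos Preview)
Your proof is correct and follows essentially the same route as the paper's: both derive the KKT conditions for~\eqref{eq:opt2} with the box constraint, obtain $G_S$, $G_L$ from the subdifferentials of $\|\cdot\|_\v1$ and $\|\cdot\|_*$, and verify the sign condition on $H_S$ via the structure of the box constraint together with $\|\bXS-Y\|_\v\infty\le b$. The only cosmetic difference is that you phrase the constraint contribution through the normal cone $N_C(\hXS)$, whereas the paper writes out explicit Lagrange multipliers $\Lambda^+,\Lambda^-\ge0$ and uses complementary slackness; these are equivalent descriptions of the same object.
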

\begin{proof}
We express the constraint $\|\XS-Y\|_\v\infty \leq b$ in~\eqref{eq:opt2} as
$2mn$ constraints $[\XS]_{i,j} - Y_{i,j} - b \leq 0$ and $-[\XS]_{i,j} +
Y_{i,j} - b \leq 0$ for all $i,j$.
Now the corresponding Lagrangian is
\[
\frac1{2\mu} \|\XS+\XL-Y\|_\v2^2 + \lambda \|\XS\|_\v1 + \|\XL\|_*
+ \ang{\Lambda^+,\XS-Y-b 1_{m,n}}
+ \ang{\Lambda^-,-\XS+Y-b 1_{m,n}}
\]
where $\Lambda^+, \Lambda^- \geq 0$ and $1_{m,n}$ is the all-ones $m \times
n$ matrix.
First-order optimality conditions imply that there exists a subgradient
$G_S$ of $\|\XS\|_\v1$ at $\XS=\hXS$ and a subgradient $G_L$ of $\|\XL\|_*$
at $\XL=\hXL$ such that
\[
\mu^{-1} (\hXS + \hXL - Y) + \lambda G_S + (\Lambda^+ - \Lambda^-) = 0
\quad \text{and} \quad
\mu^{-1} (\hXS + \hXL - Y) + G_L = 0
.
\]
Now since $\|\bXS-Y\|_\v\infty \leq b$, we have
$[\bXS]_{i,j} \leq Y_{i,j} + b$
and
$-[\bXS]_{i,j} \leq -Y_{i,j} + b$.
By complementary slackness, if $\Lambda^+_{i,j} > 0$, then $[\hXS]_{i,j} -
Y_{i,j} - b = 0$, which means $[\hXS]_{i,j} - [\bXS]_{i,j} \geq
[\hXS]_{i,j} - (Y_{i,j} + b) = 0$.
So $\Lambda^+_{i,j} [\DS]_{i,j} \geq 0$.
Similarly, if $\Lambda^-_{i,j} > 0$, then $[\hXS]_{i,j} - [\bXS]_{i,j} \leq
0$.
So $\Lambda^-_{i,j} [\DS]_{i,j} \leq 0$.
Therefore $H := \Lambda^+ - \Lambda^-$ satisfies $H_{i,j} [\DS]_{i,j} \geq
0$.
\end{proof}

\begin{lemma} \label{lemma:opt2-outside}
Assume the conditions of Theorem~\ref{theorem:dual-cert} hold with $E = Y -
(\bXS + \bXL)$, and let $(Q_\bO,Q_\bT)$ be the dual certificate from the
conclusion.
We have
\[
\lambda \|\P_{\bO^\perp} (\DS)\|_{\v1} + \|\P_{\bT^\perp} (\DL)\|_* 
\leq (1-1/c)^{-1} \|Q_\bO + Q_\bT\|_\v2^2 \mu/2
.
\]
\end{lemma}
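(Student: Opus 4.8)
The plan is to play the optimality of $(\hXS,\hXL)$ for the objective of~\eqref{eq:opt2} against the approximate-subgradient property of the dual certificate $Q := Q_\bO + Q_\bT + \mu^{-1}E$ furnished by Theorem~\ref{theorem:dual-cert}, and then to close with an elementary completion of the square. Throughout, write $g(\XS,\XL) := \lambda\|\XS\|_\v1 + \|\XL\|_*$, so that the objective of~\eqref{eq:opt2} is $\tfrac1{2\mu}\|\XS+\XL-Y\|_\v2^2 + g(\XS,\XL)$, and recall that $E = Y-(\bXS+\bXL)$, hence $\hXS+\hXL-Y = (\DS+\DL)-E$ while $\bXS+\bXL-Y = -E$.

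First I would collect the properties of $Q$ from Theorem~\ref{theorem:dual-cert} (and the remark following it): $Q$ is a subgradient of $\lambda\|\cdot\|_\v1$ at $\bXS$ and of $\|\cdot\|_*$ at $\bXL$, and it satisfies $\|\P_{\bO^\perp}(Q)\|_\v\infty \le \lambda/c$ and $\|\P_{\bT^\perp}(Q)\|_{2\to2} \le 1/c$. Hence $Q$ meets the hypotheses of Lemma~\ref{lemma:subgradient}, which, applied with $(\XS,\XL) = (\hXS,\hXL)$, gives the lower bound
\[
g(\hXS,\hXL) - g(\bXS,\bXL) \ \ge\ \ang{Q,\ \DS+\DL} + (1-1/c)\bigl(\lambda\|\P_{\bO^\perp}(\DS)\|_\v1 + \|\P_{\bT^\perp}(\DL)\|_*\bigr) .
\]

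For the matching upper bound I would use optimality: $(\bXS,\bXL)$ is feasible for~\eqref{eq:opt2} augmented with $\|\XS-Y\|_\v\infty\le b$ (by the standing assumption $\|\bXS-Y\|_\v\infty\le b$) while $(\hXS,\hXL)$ is the minimizer, so $g(\hXS,\hXL)-g(\bXS,\bXL)\le \tfrac1{2\mu}\bigl(\|\bXS+\bXL-Y\|_\v2^2 - \|\hXS+\hXL-Y\|_\v2^2\bigr)$; expanding $\|(\DS+\DL)-E\|_\v2^2$ and cancelling $\|E\|_\v2^2$, the right-hand side equals $\mu^{-1}\ang{\DS+\DL,\ E} - \tfrac1{2\mu}\|\DS+\DL\|_\v2^2$. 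Chaining the two bounds and substituting $\ang{Q,\DS+\DL} = \ang{Q_\bO+Q_\bT,\ \DS+\DL} + \mu^{-1}\ang{E,\ \DS+\DL}$ makes the two copies of $\mu^{-1}\ang{E,\DS+\DL}$ cancel, leaving
\[
(1-1/c)\bigl(\lambda\|\P_{\bO^\perp}(\DS)\|_\v1 + \|\P_{\bT^\perp}(\DL)\|_*\bigr) \ \le\ -\ang{Q_\bO+Q_\bT,\ \DS+\DL} - \tfrac1{2\mu}\|\DS+\DL\|_\v2^2 .
\]
Finally, applying the elementary bound $-\ang{A,B} - \tfrac1{2\mu}\|B\|_\v2^2 \le \tfrac\mu2\|A\|_\v2^2$ (namely $\|\mu A+B\|_\v2^2\ge0$ divided by $2\mu$) with $A = Q_\bO+Q_\bT$ and $B = \DS+\DL$ bounds the right-hand side by $\tfrac\mu2\|Q_\bO+Q_\bT\|_\v2^2$, and dividing through by $1-1/c$ yields the claim.

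I do not expect a genuine obstacle: the argument is pure bookkeeping, and the only honest inequality used at the end is the completion of the square. The one point deserving care is the role of the $\mu^{-1}E$ term carried inside the dual certificate; it is precisely engineered so that the cross term $\mu^{-1}\ang{E,\DS+\DL}$ coming out of the subgradient inequality exactly cancels the cross term produced by expanding the squared Frobenius norms, leaving only $\|Q_\bO+Q_\bT\|_\v2^2$ (rather than anything involving $E$) in the final estimate. One should also verify at the outset that $(\bXS,\bXL)$ is admissible for the augmented program, which is exactly the hypothesis $\|\bXS-Y\|_\v\infty\le b$ maintained throughout Section~\ref{section:opt2}.
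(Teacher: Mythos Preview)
Your proposal is correct and follows essentially the same route as the paper's proof: apply Lemma~\ref{lemma:subgradient} with the dual certificate $Q_\bO+Q_\bT+\mu^{-1}E$, combine with the optimality inequality for~\eqref{eq:opt2} so that the $\mu^{-1}\ang{E,\DS+\DL}$ cross terms cancel, and finish by maximizing $-\ang{Q_\bO+Q_\bT,\Delta}-\tfrac1{2\mu}\|\Delta\|_\v2^2$ over $\Delta$ (your completion of the square). The only cosmetic difference is that the paper sets $Q:=Q_\bO+Q_\bT$ and carries $\mu^{-1}E$ separately, whereas you fold it into the definition of $Q$; the bookkeeping is identical.
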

\begin{proof}
Let $Q := Q_\bO + Q_\bT$ and $\Delta := \DS + \DL$.
Since $Q + \mu^{-1} E$ satisfies the conditions of
Lemma~\ref{lemma:subgradient},
\begin{multline*}
(1-1/c) \left( \lambda \|\P_{\bO^\perp} (\DS)\|_{\v1} + \|\P_{\bT^\perp}
(\DL)\|_* \right)
\\
\leq
( \lambda \|\hXS\|_\v1 + \|\hXL\|_* )
- ( \lambda \|\bXS\|_\v1 + \|\bXL\|_* )
- \ang{Q + \mu^{-1} E, \DS+\DL}
.
\end{multline*}
Furthermore, by the optimality of $(\hXS,\hXL)$,
\begin{align*} 
( \lambda \|\hXS\|_\v1 + \|\hXL\|_* )
- ( \lambda \|\bXS\|_\v1 + \|\bXL\|_* )
& \leq
\frac1{2\mu} \|\bXS + \bXL - Y\|_\v2^2
- \frac1{2\mu} \|\hXS + \hXL - Y\|_\v2^2
\\
& =
\frac1{2\mu} \|E\|_\v2^2
- \frac1{2\mu} \|\DS+\DL-E\|_\v2^2
\\
& =
\frac{1}{2\mu} (2\ang{E,\Delta} - \ang{\Delta,\Delta})
.
\end{align*} 
Combining the inequalities gives
\begin{align*}
(1-1/c) \left( \lambda \|\P_{\bO^\perp} (\DS)\|_{\v1} + \|\P_{\bT^\perp}
(\DL)\|_* \right)
& \leq -\ang{Q,\Delta} -\frac1{2\mu} \ang{\Delta,\Delta}
 \leq \|Q\|_\v2^2 \mu/2
\end{align*}
where the last inequality follows by taking the maximum value over $\Delta$
at $\Delta = -\mu Q$.
\end{proof}

Now we prove Theorem~\ref{theorem:opt2}, restated below (with an additional result for $\|\Delta_L\|_{\flat(\rho)}$).
\begin{theorem}[Theorem~\ref{theorem:opt2} restated] \label{theorem:opt2-again}
Let $\bk := |\supp(\bXS)|$ and $\br := \rank(\bXL)$.
Assume the conditions of Theorem~\ref{theorem:dual-cert} hold with $E = Y -
(\bXS + \bXL)$, and let $(Q_\bO,Q_\bT)$ be the dual certificate from the
conclusion.
We have
\begin{align*}
\|\DS\|_\v1
& \leq
\frac
{
\lambda^{-1}
(1-1/c)^{-1}
\|Q_\bO+Q_\bT\|_\v2^2
\mu
+ \lambda \bk \mu
+ 2\sqrt{\bk\br} \mu
+ \bk \|(\P_{\bO} \circ \P_{\bT^\perp})(E)\|_\v\infty
}
{1-\alpha(\rho)\beta(\rho)}
\\
\|\DS\|_\v2 & \leq \min\left\{ \|\DS\|_\v1, \ \sqrt{2b\|\DS\|_\v1} \right\}
\\
\|\DL\|_{\flat(\rho)}
& \leq (1-1/c)^{-1} \|Q_\bO+Q_\bT\|_\v2^2 \mu / 2
+ \min\left\{ \beta(\rho) \|\DS\|_\v1, \ \sqrt{2\br} \|\DS\|_\v2 \right\}
+ \|\P_\bT(E)\|_* + 2\br\mu
\\
\|\DL\|_*
& \leq (1-1/c)^{-1} \|Q_\bO+Q_\bT\|_\v2^2 \mu / 2
+ \sqrt{2\br} \|\DS\|_\v2 + \|\P_\bT(E)\|_* + 2\br\mu
.
\end{align*}
\end{theorem}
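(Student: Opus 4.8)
The proof follows the same two-stage template as the constrained case in Section~\ref{section:opt1}: an ``outside'' bound controlling $\P_{\bO^\perp}(\DS)$ and $\P_{\bT^\perp}(\DL)$, followed by an ``inside'' bound on $\P_\bO(\DS)$, which are then combined. The outside bound is already supplied: Lemma~\ref{lemma:opt2-outside} gives $\lambda\|\P_{\bO^\perp}(\DS)\|_\v1 + \|\P_{\bT^\perp}(\DL)\|_* \le S$, where I abbreviate $S := (1-1/c)^{-1}\|Q_\bO+Q_\bT\|_\v2^2\mu/2$; in particular $\|\P_{\bO^\perp}(\DS)\|_\v1\le S/\lambda$ and $\|\P_{\bT^\perp}(\DL)\|_\v2\le\|\P_{\bT^\perp}(\DL)\|_*\le S$. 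So the real work is the inside bound; the rest is bookkeeping.

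For the inside bound I mimic Lemma~\ref{lemma:opt1-inside}. Put $A:=\P_\bO(\DS)$ and $B:=\P_\bT(\DL)$. From the trivial splits $\DS = \P_\bO(\DS)+\P_{\bO^\perp}(\DS)$ and $\DL = \P_\bT(\DL)+\P_{\bT^\perp}(\DL)$ together with the optimality identity $\DS+\DL = E-\mu G_L$ from Lemma~\ref{lemma:opt2-kkt}, one gets $A+B = R$, where $R := E-\mu G_L-\P_{\bO^\perp}(\DS)-\P_{\bT^\perp}(\DL)$; applying $\P_\bO$ and $\P_\bT$ to this equation realizes $(A,B)$ as the solution of a $2\times 2$ block system whose coefficient operator has identities on the diagonal and $\P_\bO$, $\P_\bT$ off the diagonal, with right-hand side $(\P_\bO(R),\P_\bT(R))$. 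Since $\|\P_\bO\circ\P_\bT\|_{\v1\to\v1}\le\alpha(\rho)\beta(\rho)<1$ (Lemma~\ref{lemma:dual-composition-bound}), Lemma~\ref{lemma:block-inverse} solves it to give $A = (\I-\P_\bO\circ\P_\bT)^{-1}\bigl((\P_\bO\circ\P_{\bT^\perp})(R)\bigr)$, hence $\|A\|_\v1 \le \frac{1}{1-\alpha(\rho)\beta(\rho)}\|(\P_\bO\circ\P_{\bT^\perp})(R)\|_\v1$. Expanding $R$ and using $\P_\bO\circ\P_{\bT^\perp}\circ\P_{\bO^\perp} = -\P_\bO\circ\P_\bT\circ\P_{\bO^\perp}$, the right side splits into four pieces: (i) the $E$-piece, bounded by $\bk\|(\P_\bO\circ\P_{\bT^\perp})(E)\|_\v\infty$ (Lemma~\ref{lemma:projections}); (ii) the $\P_{\bO^\perp}(\DS)$-piece, bounded by $\alpha(\rho)\beta(\rho)\|\P_{\bO^\perp}(\DS)\|_\v1$ (Lemma~\ref{lemma:dual-composition-bound}); (iii) the $\P_{\bT^\perp}(\DL)$-piece, bounded by $\sqrt{\bk}\,\|\P_{\bT^\perp}(\DL)\|_\v2$ (Lemma~\ref{lemma:projections}); and (iv) the $\mu G_L$-piece. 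For (iv) write $(\P_\bO\circ\P_{\bT^\perp})(G_L) = \P_\bO(G_L)-\P_\bO(\P_\bT(G_L))$: since $\P_\bT(G_L)$ has rank $\le2\br$ and $\|\P_\bT(G_L)\|_{2\to2}\le2$, Lemma~\ref{lemma:projections} gives $\mu\|\P_\bO(\P_\bT(G_L))\|_\v1 \le 2\sqrt{\bk\br}\,\mu$, while the residual $\mu\|\P_\bO(G_L)\|_\v1$ must be shown to be at most $\lambda\bk\mu$. Adding back $\|\P_{\bO^\perp}(\DS)\|_\v1$ and invoking $\bk\le\alpha(\rho)^2$ and $\lambda\alpha(\rho)\le1$ to fold $\sqrt{\bk}\,S$ and $\alpha(\rho)\beta(\rho)S/\lambda$ into a single $2S/\lambda$ yields the stated bound on $\|\DS\|_\v1$.

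The remaining estimates are immediate consequences. For $\|\DS\|_\v2$, interpolate $\|\DS\|_\v2^2\le\|\DS\|_\v\infty\|\DS\|_\v1$ and note $\|\DS\|_\v\infty\le\|\hXS-Y\|_\v\infty+\|\bXS-Y\|_\v\infty\le2b$ by the augmented constraint and $b\ge\|\bXS-Y\|_\v\infty$; with the trivial $\|\DS\|_\v2\le\|\DS\|_\v1$ this gives the claimed minimum. For $\|\DL\|_*$, split $\DL = \P_\bT(\DL)+\P_{\bT^\perp}(\DL)$; Lemma~\ref{lemma:opt2-outside} gives $\|\P_{\bT^\perp}(\DL)\|_*\le S$, and $\P_\bT(\DL) = \P_\bT(E)-\mu\P_\bT(G_L)-\P_\bT(\DS)$ (using $\DS+\DL-E=-\mu G_L$ once more), whence $\|\P_\bT(\DL)\|_* \le \|\P_\bT(E)\|_* + 2\br\mu + \sqrt{2\br}\,\|\DS\|_\v2$ via Lemma~\ref{lemma:projections} ($\|\P_\bT(G_L)\|_*\le2\br$ and $\|\P_\bT(\DS)\|_*\le\sqrt{2\br}\,\|\P_\bT(\DS)\|_\v2\le\sqrt{2\br}\,\|\DS\|_\v2$). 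The $\|\DL\|_{\flat(\rho)}$ bound is the same computation except that $\|\P_\bT(\DS)\|_{\flat(\rho)}$ is bounded by $\min\{\beta(\rho)\|\DS\|_\v1,\ \sqrt{2\br}\,\|\DS\|_\v2\}$ — the first alternative from $\|\P_\bT\|_{\v1\to\flat(\rho)} = \|\P_\bT\|_{\sharp(\rho)\to\v\infty}\le\beta(\rho)$ (Proposition~\ref{proposition:operator-dual}, Lemma~\ref{lemma:lowrank-bound}, since $\P_\bT$ is self-adjoint), the second from Lemma~\ref{lemma:trace-bound}.

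The one genuinely delicate step is piece (iv), the residual $\mu\|\P_\bO(G_L)\|_\v1$. The crude estimate $\|G_L\|_{2\to2}\le1$ only gives $\|\P_\bO(G_L)\|_\v1\le\bk\|G_L\|_\v\infty\le\bk$, a $\bk\mu$ contribution that is weaker by a factor $1/\lambda$ than the claimed $\lambda\bk\mu$. To get the sharp constant I would invoke the \emph{companion} KKT identity $\hXS+\hXL-Y = -\mu(\lambda G_S+H)$ of Lemma~\ref{lemma:opt2-kkt}: on $\supp(\bXS)$ the $\ell_1$-subgradient $G_S$ satisfies $\|G_S\|_\v\infty\le1$, contributing $\lambda\bk$, and the box multiplier $H$ is sign-aligned with $\DS$ ($H_{i,j}[\DS]_{i,j}\ge0$) and can be discarded. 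This is the only place where the KKT conditions, rather than just the approximate dual certificate of Theorem~\ref{theorem:dual-cert}, are really used, and it is where the proof needs care.
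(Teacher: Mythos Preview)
Your outline is close to the paper's, and the bounds on $\|\DS\|_\v2$, $\|\DL\|_*$, and $\|\DL\|_{\flat(\rho)}$ are obtained exactly as you describe. The gap is precisely where you flag it, step (iv), and it is not closed by what you write.

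You arrive at $A=(\I-\P_\bO\P_\bT)^{-1}\bigl((\P_\bO\circ\P_{\bT^\perp})(R)\bigr)$ and then want $\|A\|_\v1\le\frac{1}{1-\alpha\beta}\|(\P_\bO\circ\P_{\bT^\perp})(R)\|_\v1$. Inside the right-hand side sits the piece $\mu\|\P_\bO(G_L)\|_\v1$, and you propose to write $G_L=\lambda G_S+H$ and ``discard'' $H$ by sign-alignment. That does not work: sign-alignment says $H_{i,j}[\DS]_{i,j}\ge0$, which controls $\ang{\sign(\DS),H}$, not $\|H\|_\v1$. There is no a priori bound on the Lagrange multiplier $H$, so $\|\P_\bO(H)\|_\v1$ can be arbitrarily large, and once you have passed through the operator $(\I-\P_\bO\P_\bT)^{-1}$ and taken a norm, the sign structure is gone.

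The paper handles this by \emph{not} inverting and bounding in norm. It writes the un-inverted equation (your $(\I-\P_\bO\P_\bT)(A)=(\P_\bO\circ\P_{\bT^\perp})(R)$, equivalently (kkt1) minus (kkt3)) so that the $\P_\bO(H)$ term appears explicitly, then pairs both sides with $\sign(\DS)$. On the left, $\ang{\sign(\DS),A}=\|\P_\bO(\DS)\|_\v1$ and $\ang{\sign(\DS),\P_\bO(H)}\ge0$, so $H$ moves to the left side with a favorable sign and is dropped; also $|\ang{\sign(\DS),\P_\bO\P_\bT(A)}|\le\alpha(\rho)\beta(\rho)\|A\|_\v1$. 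On the right, the remaining pieces are bounded exactly as in your (i)--(iii) plus $\mu\lambda\|\P_\bO(G_S)\|_\v1\le\mu\lambda\bk$ and $\mu\|\P_\bO\P_\bT(G_L)\|_\v1\le2\sqrt{\bk\br}\,\mu$. Rearranging gives the same inequality $(1-\alpha\beta)\|\P_\bO(\DS)\|_\v1\le\cdots$ you were aiming for, but now legitimately free of $H$. In short: take the inner product with $\sign(\DS)$ \emph{before} inverting, not after.
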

\begin{proof}
From Lemma~\ref{lemma:opt2-kkt}, we obtain $G_S, G_L, H_S \in \R^{m \times
n}$ and the following equations:
\begin{align}
& \mu^{-1} (\P_\bO(\DS) + \P_\bO(\DL) - \P_\bO(E)) + \P_\bO(H_S)
= -\lambda \P_\bO(G_S)
\label{eq:kkt1} \\
& \mu^{-1} (\P_\bT(\DS) + \P_\bT(\DL) - \P_\bT(E)) = -\P_\bT(G_L)
\label{eq:kkt2} \\
& \mu^{-1} ((\P_\bO\circ\P_\bT)(\DS) + (\P_\bO\circ\P_\bT)(\DL)
- (\P_\bO\circ\P_\bT)(E))
= -(\P_\bO\circ\P_\bT)(G_L)
\label{eq:kkt3}
.
\end{align}
Subtracting~\eqref{eq:kkt3} from~\eqref{eq:kkt1} gives
\begin{multline*}
\mu^{-1} (
\P_{\bO}(\DS)
- (\P_{\bO} \circ \P_{\bT} \circ \P_{\bO})(\DS)
- (\P_{\bO} \circ \P_{\bT} \circ \P_{\bO^\perp})(\DS)
- (\P_{\bO} \circ \P_{\bT^\perp})(\DL)
)
+ \P_{\bO}(H_S)
\\
 =
-\lambda \P_{\bO}(G_S)
+ (\P_{\bO} \circ \P_{\bT})(G_L)
+ \mu^{-1} (\P_{\bO} \circ \P_{\bT^\perp})(E)
.
\end{multline*}
Moreover, we have
$\ang{\sign(\DS),\P_{\bO}(\DS)} = \|\P_{\bO}(\DS)\|_\v1$
and
$\ang{\sign(\DS),\P_{\bO}(H_S)} = \|\P_{\bO}(H_S)\|_\v1$,
so taking inner products with $\sign(\DS)$ on both sides of the equation
gives
\begin{align*}
\lefteqn{
\mu^{-1} \|\P_{\bO}(\DS)\|_\v1 + \|\P_{\bO}(H_S)\|_\v1
} \\
& \leq
\mu^{-1} \|(\P_{\bO}\circ\P_{\bT}\circ\P_{\bO})(\DS)\|_\v1
+ \mu^{-1} \|(\P_{\bO} \circ \P_{\bT} \circ \P_{\bO^\perp})(\DS)\|_\v1
+ \mu^{-1} \|(\P_{\bO} \circ \P_{\bT^\perp})(\DL)\|_\v1
\\
& \qquad{}
+ \lambda \|\P_{\bO}(G_S)\|_\v1
+ \|(\P_{\bO} \circ \P_{\bT})(G_L)\|_\v1
+ \mu^{-1} \|(\P_{\bO} \circ \P_{\bT^\perp})(E)\|_\v1
\\
& \leq
\mu^{-1} \alpha(\rho)\beta(\rho) \|\P_{\bO}(\DS)\|_\v1
+ \mu^{-1} \alpha(\rho)\beta(\rho) \|\P_{\bO^\perp}(\DS)\|_\v1
+ \mu^{-1} \sqrt{\bk} \|\P_{\bT^\perp}(\DL)\|_\v2
\\
& \qquad{}
+ \lambda \bk
+ \sqrt{\bk} \|\P_{\bT}(G_L)\|_\v2
+ \mu^{-1} \bk \|(\P_{\bO} \circ \P_{\bT^\perp})(E)\|_\v\infty
\\
& \leq
\mu^{-1} \alpha(\rho)\beta(\rho) \|\P_{\bO}(\DS)\|_\v1
+ \mu^{-1} \alpha(\rho)\beta(\rho) \|\P_{\bO^\perp}(\DS)\|_\v1
+ \mu^{-1} \sqrt{\bk} \|\P_{\bT^\perp}(\DL)\|_\v2
\\
& \qquad{}
+ \lambda \bk
+ 2\sqrt{\bk\br} \|G_L\|_{2\to2}
+ \mu^{-1} \bk \|(\P_{\bO} \circ \P_{\bT^\perp})(E)\|_\v\infty
\\
& \leq
\mu^{-1} \alpha(\rho)\beta(\rho) \|\P_{\bO}(\DS)\|_\v1
+ \mu^{-1} \alpha(\rho)\beta(\rho) \|\P_{\bO^\perp}(\DS)\|_\v1
+ \mu^{-1} \sqrt{\bk} \|\P_{\bT^\perp}(\DL)\|_\v2
\\
& \qquad{}
+ \lambda \bk
+ 2\sqrt{\bk\br}
+ \mu^{-1} \bk \|(\P_{\bO} \circ \P_{\bT^\perp})(E)\|_\v\infty
.
\end{align*}
The second and third inequalities above follow from
Lemma~\ref{lemma:projections}
and
Lemma~\ref{lemma:dual-composition-bound},
and the fourth inequality uses the fact that $\|G_L\|_{2\to2} \leq 1$.
Rearranging the inequality and applying Lemma~\ref{lemma:opt2-outside}
gives
\begin{align*}
\lefteqn{
(1-\alpha(\rho)\beta(\rho)) \|\P_{\bO}(\DS)\|_\v1
} \\
& \leq
\alpha(\rho)\beta(\rho) \|\P_{\bO^\perp}(\DS)\|_\v1
+ \sqrt{\bk} \|\P_{\bT^\perp}(\DL)\|_\v2
+ \lambda \bk \mu
+ 2\sqrt{\bk\br} \mu
+ \bk \|(\P_{\bO} \circ \P_{\bT^\perp})(E)\|_\v\infty
\\
& \leq
\max\{\alpha(\rho)\beta(\rho)/\lambda,\ \sqrt{\bk}\}
(1-1/c)^{-1}
\|Q_\bO+Q_\bT\|_\v2^2
\mu / 2
+ \lambda \bk \mu
+ 2\sqrt{\bk\br} \mu
+ \bk \|(\P_{\bO} \circ \P_{\bT^\perp})(E)\|_\v\infty
\\
& \leq
\lambda^{-1}
(1-1/c)^{-1}
\|Q_\bO+Q_\bT\|_\v2^2
\mu / 2
+ \lambda \bk \mu
+ 2\sqrt{\bk\br} \mu
+ \bk \|(\P_{\bO} \circ \P_{\bT^\perp})(E)\|_\v\infty
\end{align*}
since $\bk \leq \alpha(\rho)^2$, $\alpha(\rho)\beta(\rho) < 1$, and
$\lambda \alpha(\rho) \leq 1$.
Now we combine this with $\|\DS\|_\v1 \leq \|\P_{\bO^\perp}(\DS)\|_\v1 +
\|\P_{\bO}(\DS)\|_\v1$ and Lemma~\ref{lemma:opt2-outside} to get the first
bound.

For the second bound, we use the facts
$\|\DS\|_\v\infty \leq \|\hXS-Y\|_\v\infty + \|\bXS-Y\|_\v\infty \leq 2b$
and
$\|\DS\|_\v2 \leq \sqrt{\|\DS\|_\v1\|\DS\|_\v\infty} \leq
\sqrt{2b\|\DS\|_\v1}$.

For the third and fourth bounds, we obtain from~\eqref{eq:kkt2}
\begin{align*}
\|\P_{\bT}(\DL)\|_{\flat(\rho)}
& \leq \|\P_\bT(\DS)\|_{\flat(\rho)}
+ \|\P_\bT(E)\|_{\flat(\rho)}
+ \mu\|\P_\bT(G_L)\|_{\flat(\rho)}
\\
& \leq \|\P_\bT\|_{\v1\to\flat(\rho)} \|\DS\|_\v1
+ \|\P_\bT(E)\|_*
+ \mu\|\P_\bT(G_L)\|_*
\quad \text{(Lemma~\ref{lemma:trace-bound})}
\\
& = \|\P_\bT^*\|_{\sharp(\rho)\to\v\infty} \|\DS\|_\v1
+ \|\P_\bT(E)\|_*
+ \mu\|\P_\bT(G_L)\|_*
\quad \text{(Proposition~\ref{proposition:operator-dual})}
\\
& \leq \beta(\rho) \|\DS\|_\v1
+ \|\P_\bT(E)\|_*
+ \mu\|\P_\bT(G_L)\|_*
\quad \text{(Lemma~\ref{lemma:lowrank-bound})}
\\
& \leq \beta(\rho) \|\DS\|_\v1
+ \|\P_\bT(E)\|_*
+ 2\br\mu
\quad \text{(Lemma~\ref{lemma:projections} and $\|G_L\|_{2\to2} \leq 1$)}
\end{align*}
and
\begin{align*}
\|\P_{\bT}(\DL)\|_*
& \leq \|\P_\bT(\DS)\|_*
+ \|\P_\bT(E)\|_*
+ \mu\|\P_\bT(G_L)\|_*
\\
& \leq \sqrt{2\br} \|\DS\|_\v2
+ \|\P_\bT(E)\|_*
+ 2\br\mu
\quad \text{(Lemma~\ref{lemma:projections} and $\|G_L\|_{2\to2} \leq 1$)}
.
\end{align*}
Now we combine these with
\begin{align*}
\|\DL\|_{\flat(\rho)}
& \leq \|\P_{\bT^\perp}(\DL)\|_{\flat(\rho)} +
\|\P_{\bT}(\DL)\|_{\flat(\rho)}
\\
& \leq \|\P_{\bT^\perp}(\DL)\|_* +
\min\{\|\P_{\bT}(\DL)\|_*,\|\P_{\bT}(\DL)\|_{\flat(\rho)}\}
\quad \text{(Lemma~\ref{lemma:trace-bound})}
\\
\|\DL\|_*
& \leq \|\P_{\bT^\perp}(\DL)\|_* + \|\P_{\bT}(\DL)\|_*
\end{align*}
and Lemma~\ref{lemma:opt2-outside}.
\end{proof}
Note that we have an error bound for $\DL$ in $\|\cdot\|_{\flat(\rho)}$
norm, which can be significantly smaller than the bound for the trace norm
of $\DL$.

\subsubsection*{Acknowledgements}

We thank Emmanuel Cand\`es for clarifications about the results
in~\citep{CLMW09}.

\bibliography{sparse-lowrank}
\bibliographystyle{plainnat}

\end{document}